\pdfoutput=1

\documentclass[11pt]{article}

\usepackage[preprint]{acl}

\usepackage{times}
\usepackage{latexsym}

\usepackage[T1]{fontenc}

\usepackage[utf8]{inputenc}

\usepackage{microtype}

\usepackage{inconsolata}

\usepackage{graphicx}

\title{Multi-Drafter Speculative Decoding with Alignment Feedback}

\author{
  Taehyeon~Kim\textsuperscript{\dag,}\Thanks{Equal contribution.}\quad
  Hojung~Jung\textsuperscript{\ddag,}$^*$\quad
  Se-Young~Yun\textsuperscript{\ddag}\\[6pt]
  \textsuperscript{\dag}LG AI Research\quad
  \textsuperscript{\ddag}KAIST AI\quad
}

\usepackage[ruled,vlined]{algorithm2e}
\usepackage{algorithmic}

\usepackage{hyperref}
\usepackage{url}
\usepackage{amsmath, amssymb, amsthm}
\usepackage{amsfonts}
\usepackage{booktabs}
\usepackage{nicefrac}
\usepackage[dvipsnames,svgnames,table]{xcolor}
\usepackage{colortbl}
\usepackage[utf8]{inputenc} 
\usepackage[T1]{fontenc}    
\usepackage{afterpage}
\usepackage{multirow}

\usepackage{booktabs}
\usepackage{ulem}

\usepackage{graphicx}
\usepackage{wasysym}

\usepackage{subcaption}
\usepackage{multirow}
\usepackage{wrapfig}

\usepackage{amsmath,amsfonts,bm}

\def\eqref#1{equation~\ref{#1}}

\def\1{\bm{1}}

\DeclareMathAlphabet{\mathsfit}{\encodingdefault}{\sfdefault}{m}{sl}
\SetMathAlphabet{\mathsfit}{bold}{\encodingdefault}{\sfdefault}{bx}{n}

\DeclareMathOperator*{\argmax}{arg\,max}
\DeclareMathOperator*{\argmin}{arg\,min}

\definecolor{myred}{HTML}{B22222}
\hypersetup{colorlinks, 
            citecolor=myred}

\newcommand{\Autoref}[1]{%
  \begingroup%
  \def\algorithmautorefname{Algorithm}%
  \def\chapterautorefname{Chapter}%
  \def\sectionautorefname{Section}%
  \def\subsectionautorefname{Section}%
  \def\theoremautorefname{Theorem}%
  \def\assumptionautorefname{Assumption}%
  \def\equationautorefname{eq.}%
  \def\definitionautorefname{Definition}%
  \def\lemmaautorefname{Lemma}%
  \def\corollaryautorefname{Corollary}%
  \autoref{#1}%
  \endgroup%
}

\newcommand{\CAutoref}[1]{%
  \begingroup%
  \def\sectionautorefname{Section}%
  \def\subsectionautorefname{Section}%
  \autoref{#1}%
  \endgroup%
}

\newcommand{\Tableautoref}[1]{%
  \begingroup%
  \def\figureautorefname{Table}%
  \autoref{#1}%
  \endgroup%
}

\newcommand{\Algautoref}[1]{%
  \begingroup%
  \def\figureautorefname{Algorithm}%
  \autoref{#1}%
  \endgroup%
}

\usepackage{color}

\usepackage[symbol*]{footmisc}
\usepackage[capitalize,noabbrev]{cleveref}
\usepackage{enumitem}

\theoremstyle{plain}
\newtheorem{assumption}{Assumption}

\newtheorem{theorem}{Theorem}
\newtheorem{lemma}{Lemma}
\newtheorem{corollary}{Corollary}
\newtheorem{definition}{Definition}
\theoremstyle{remark}

\usepackage{bbding}
\usepackage{booktabs}
\usepackage[dvipsnames]{xcolor}

\newcommand{\cmark}{\textcolor{ForestGreen}{\Checkmark}} 
\newcommand{\xmark}{\textcolor{BrickRed}{\XSolidBrush}}

\usepackage{makecell}

\begin{document}
\maketitle

\begin{abstract}
Speculative decoding (SD) accelerates large language model (LLM) inference by using a smaller model to draft future tokens, which are then verified by the target LLM. This preserves generation quality by accepting only aligned tokens. However, individual drafters, often trained for specific tasks or domains, exhibit limited effectiveness across diverse applications. To address this, we introduce \textsc{MetaSD}, a unified framework that integrates multiple drafters into the SD process. \textsc{MetaSD} dynamically allocates computational resources to heterogeneous drafters by leveraging alignment feedback and framing drafter selection as a multi-armed bandit problem. Extensive experiments show \textsc{MetaSD} consistently outperforms single-drafter approaches.
\end{abstract}
\section{Introduction}

Large language models (LLMs) like GPT-4 \citep{gpt4}, Gemini \citep{gemini}, and Llama \citep{llama} have significantly advanced applications such as search \citep{reid2024gemini}, coding assistance, and virtual assistants. However, their token-by-token generation process often results in substantial inference times, primarily due to memory bandwidth limitations \citep{memorybound, memorybound2}. Speculative decoding (SD) has emerged as a key technique to mitigate this, employing a smaller `drafter' model to predict future tokens for parallel verification by the target LLM \citep{speculative_decode, speculative_decode2}. This reduces latency by accepting only tokens aligned with the target model's predictions, thereby preserving output quality.

Existing SD methods have improved draft acceptance rates via architectural and training innovations \citep{onspec, zhou2023distillspec, medusa, specinfer, spectr}, including multi-path exploration through batched inference or tree verification \citep{spectr, specinfer, medusa} and better drafter-target alignment via knowledge distillation \citep{zhou2023distillspec, onspec}. However, these methods predominantly rely on a single drafter, a design choice that curtails versatility. This limitation has become particularly significant with the growing trend towards deploying ensembles of specialized models, a paradigm actively explored in LLM routing~\citep{hu2024routerbench,jitkrittum2025universal}. Consequently, SD with a single drafter can fail on out-of-distribution inputs or dynamic user queries where its inherent biases become a bottleneck~\citep{onspec, yi2024towards}. 

\begin{table}[t]

    \centering

    \caption{Comparison of \textsc{MetaSD} with existing single- and multi-drafter SD approaches. \textsc{MetaSD} is training-free without extra compute for drafting and verification cost than standard, offers theoretical regret guarantees and Robustness to Non-Stationarity (RNS).}

    \label{tab:comparison_methods}

    \resizebox{\linewidth}{!}{

    \begin{tabular}{lccccc}

        \toprule

        \textbf{Method} & \textbf{Multi-drafter} & \textbf{Training-free} & \textbf{No extra compute} & \textbf{Regret bound} & \textbf{RNS} \\

        \midrule

        Standard SD & \xmark & \cmark & --- & --- & --- \\

        Static Ensemble SD & \cmark & \cmark & \xmark & \xmark & \cmark \\

        Learned Routing & \cmark & \xmark & \cmark & \xmark & \xmark \\

        \midrule

        \textbf{\textsc{MetaSD} (Ours)} & \textbf{\cmark} & \textbf{\cmark} & \textbf{\cmark} & \textbf{\cmark} & \textbf{\cmark}\\

        \bottomrule

    \end{tabular}
    }
\end{table}




\begin{figure*}
    \centering
    \includegraphics[width=0.90\linewidth]{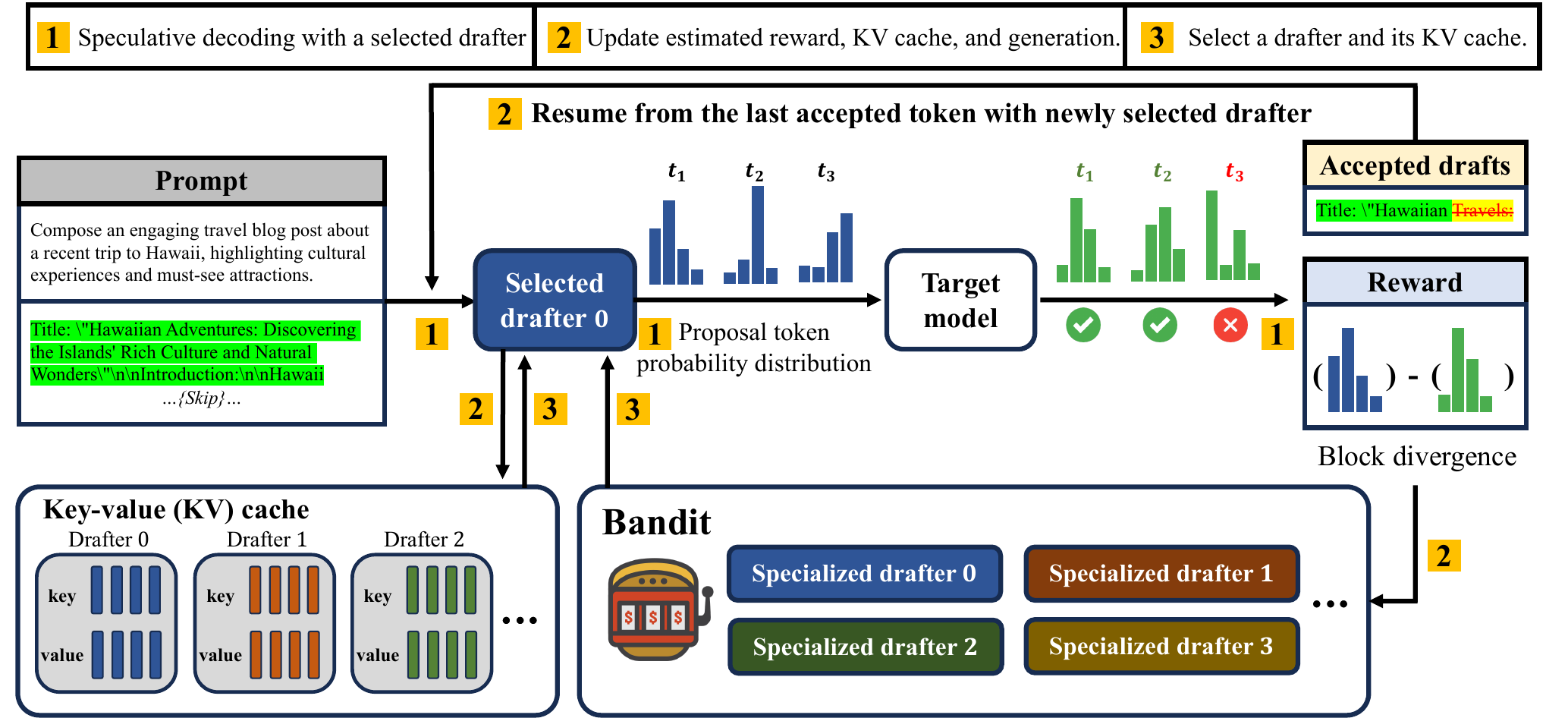}
    \caption{Overview of speculative decoding with multiple drafters in multi-armed bandit (MAB) framework. Drafters are selected and updated based on alignment feedback through the block divergence reward. The example in this figure is from an instance in MT-Bench dataset\,\citep{llmjudge}.}
    \vspace{-15pt}
    \label{fig:overview_fig}
\end{figure*}


The limitations of single-drafter systems necessitate an adaptive, multi-drafter framework capable of generalizing across diverse and evolving user queries. This is compounded by operational constraints, including the need for low computational overhead, scalability across variable user loads, and robustness to traffic fluctuations without manual intervention or pre-tuning. To meet these requirements, SD systems must incorporate dynamic adaptation mechanisms that can efficiently allocate computational resources while optimizing performance for diverse input conditions.

This paper addresses these challenges by introducing \textsc{MetaSD}, a novel framework that integrates multiple drafters into SD and dynamically meta-drafts the optimal drafter during inference. 
At the core of our framework is the concept of alignment feedback, which measures the compatibility between a drafter's predictions and the target LLM. 
Leveraging this feedback, \textsc{MetaSD} formulates drafter selection as an exploration-exploitation tradeoff \citep{gittins2011multi}, dynamically learning a policy that selects the best drafter based on the current observation at every state. 
Inspired by multi-armed bandit (MAB) algorithms widely used in recommendation systems to adaptively optimize decisions based on feedback \citep{silva2022multi}, our approach employs a similar mechanism to dynamically moderate between drafters. Unlike static configurations, \textsc{MetaSD} enables the system to adapt to both evolving input contexts and varying task distributions, achieving robust performance across scenarios (\Autoref{fig:overview_fig}). Our contributions are:

\begin{itemize}[leftmargin=12pt, itemsep=2pt]
    \item 
    \textbf{\textsc{MetaSD}}: A unified framework integrating multiple drafters into SD. MetaSD utilizes block divergence (BD) as a reward signal, derived from alignment feedback between drafter predictions and the target LLM, to enable dynamic drafter selection. It supports both black-box (independent drafters) and white-box (drafters using target LLM latent representations) configurations (\textbf{\Autoref{sec2}}).
    
    \item 
    \textbf{Theoretical analysis}: We establish regret upper bounds for \textsc{MetaSD}, offering insights into its convergence, scalability, and the role of BD-based alignment feedback in optimizing drafter selection (\textbf{\Autoref{sec3:method}}).
    
    \item 
    \textbf{Performance improvement}: Extensive experiments demonstrate \textsc{MetaSD} achieves superior inference speedups over existing single-drafter and static multi-drafter methods (\textbf{\Autoref{sec4:exp}}).
\end{itemize}

\section{Problem statement}  \label{sec2}

\subsection{Motivation} \label{sec:motivation}
Speculative decoding (SD) employs a \textit{draft-verify-accept} paradigm for faster inference. A drafter $\mathcal{M}_q$, which is smaller than the target LLM $\mathcal{M}_p$, drafts the future tokens $\{x^{l+1:l+N_{max}}\}$ based on the input sequence $x^{1:l}$. The target LLM assesses each token $x^{l+j}$ ($j=1,\dots,N_{max}$) to determine whether $p(\cdot | x^{1:l+j-1})$ is aligned with its own predictions $q(\cdot | x^{1:l+j-1})$. Only the tokens aligned with the LLM's own predictions are accepted, ensuring the lossless generation (detailed in \Autoref{sec:prelim}).
\vspace{-2pt}

Despite advancements, existing SD often employs a single drafter, leading to critical limitations in dynamic, multi-task settings. Task-specific drafters, though effective in-domain, generalize poorly to unseen tasks due to inherent training data biases \citep{yi2024towards, onspec}. SD's efficacy hinges on the alignment between the drafter \(\mathcal{M}_q\) and the target LLM \(\mathcal{M}_p\); strong alignment yields higher acceptance rates.
This alignment provides crucial real-time feedback for system adaptation. Consequently, addressing dynamic and multi-task challenges requires a framework that uses alignment feedback to adapt its drafter selection policy based on the current input context.
\vspace{-2pt}

Motivated by these limitations, our \textsc{MetaSD} framework continuously evaluates alignment feedback to identify the optimal drafter for the current context. We frame this as a decision-making problem under uncertainty, leveraging the MAB framework. MAB offers a principled approach to balance exploration (evaluating different drafters) and exploitation (using the historically best-performing drafter) (\Autoref{alg:generic_mab}).

\begin{algorithm}[t] 
\DontPrintSemicolon
    \caption{MetaSD}
    \label{alg:generic_mab}
    \begin{algorithmic}[1]
    \SetKwInOut{Input}{Input}
    \INPUT {\small : Drafters $[K]$, initial prompt  $x^{1:l}$, \\ target model $\mathcal{M}_p$, target sequence length $B$.}
    \\
    \STATE $t \leftarrow 0$
    \WHILE{$l < B$} 
        \STATE Meta-draft the drafter $i$ in drafter pool \\ $[K]$ following the bandit \\
        \STATE Execute one SD step with drafter $i$ and target model  given $x^{1:l}$\\
        \STATE Compute the BD between drafter $i$'s prediction and $\mathcal{M}_p$'s prediction as the reward (\Autoref{sec:2.3})\\
        \vspace{3pt}
        \STATE Update the sequence length with the number of accepted tokens from the draft \\
        \vspace{2pt}
        \small $N_{acc}(i,t)$: $l, t \leftarrow l + N_{acc}(i,t) +1, t + 1$ \\
        \STATE \normalsize Update the bandit \\
    \ENDWHILE
\end{algorithmic}
\end{algorithm}

\subsection{Problem formulation}

\paragraph{Multi-armed bandit}
MAB framework addresses an online learning scenario where, at each round $t$, an agent takes an action by choosing an arm $a_t\in[K]$ and receives a reward $r_t$ from the environment. The goal of MAB is to design an algorithm that maximizes the expectation of cumulative reward $\mathbb{E}[\sum_{t=1}^{T}r_t]$ throughout a total of $T$ rounds. To achieve this, one can aim to design an optimal policy $\pi^{\star}$ to minimize the pseudo-regret, defined as: $\textsc{Reg}(\pi,T)=\sum_{t=1}^{T}\mathbb{E}[r_{a_t^{\star}}]-\mathbb{E}[r_{a_t}]$. Here, $a_t$ denotes the action chosen in round $t$ by the policy $\pi$ and $a_t^{\star}$ represents the optimal action in round $t$ which yields the highest expected reward. For a more comprehensive review, we refer the reader to \citet{lattimore2020bandit}.

\paragraph{\textsc{MetaSD}: Multi-drafter SD}

We formalize multi-drafter integration in SD as a MAB problem, where each SD cycle (drafting, verifying, accepting tokens) constitutes one MAB round (\Autoref{alg:generic_mab}). At each round $t$, a drafter a $a_t$ is chosen from a pool of $K$ heterogeneous drafters, and an SD step is performed, yielding $N_{acc}(a_t,t)$ accepted tokens. 
The process ends once $B$ tokens are generated. Unlike standard MAB settings with a fixed number of total rounds $T$, \textsc{MetaSD}'s total rounds is stochastic, contingent on the chosen drafter's alignment efficiency and token acceptance rates. Interestingly, conventional regret minimization does not directly optimize SD efficiency (formal proof in \Autoref{appendix:N_acc and stopping time relationship}). To address this, we introduce a novel objective in the following.  

\begin{definition}[Stopping time regret]
\label{def:stopping_time_regret}
Denote $\tau(\pi,B)$ as the number of total rounds of bandit policy $\pi$ with  target sequence length $B$ and $\pi^{\star}$ as the optimal policy which satisfies $\pi^{\star}=\argmin_{\pi}\mathbb{E}[\tau(\pi,B)]$. Then, regret objective of MetaSD with policy $\pi$ becomes:
\vspace{-3pt}
\begin{equation}
\label{eq:Regret_Define_stoppingtime}
\textsc{Reg}(\pi,B) =\mathbb{E}\left[\tau(\pi,B)]-\mathbb{E}[\tau(\pi^{\star},B)\right].
\end{equation}
\end{definition}
\vspace{-3pt}
The following lemma establishes this new regret objective as a valid performance metric for SD.
\begin{lemma}
    Minimizing~ $\textsc{Reg}(\pi,B)$ is equivalent to maximizing expected number of accepted tokens, which aligns with the objective of SD.
    \vspace{-5pt}
\end{lemma}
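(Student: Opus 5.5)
The plan is to link the stopping time $\tau(\pi,B)$ to the accepted-token counts through a Wald-type identity. By \Autoref{alg:generic_mab}, every round $t$ advances the sequence length by $N_{acc}(a_t,t)+1$, and the process stops at the first round where the length reaches $B$. Writing $l_0$ for the prompt length, this gives the deterministic sandwich
\begin{equation*}
B - l_0 \le \sum_{t=1}^{\tau(\pi,B)} \bigl(N_{acc}(a_t,t)+1\bigr) < B - l_0 + N_{max}+1 ,
\end{equation*}
since the overshoot in the last round is at most one full draft block plus the bonus token. The argument has three steps.

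\textbf{Step 1: optional stopping.} Let $\mathcal{F}_t$ be the history up to round $t$. Then $\tau(\pi,B)$ is a stopping time with respect to $\mathcal{F}_t$, because the stopping decision depends only on the realized lengths. The action $a_t$ is $\mathcal{F}_{t-1}$-measurable, so $N_{acc}(a_t,t)+1-\mu_{a_t}-1$ is a martingale difference sequence, where $\mu_i=\mathbb{E}[N_{acc}(i,t)\mid \mathcal{F}_{t-1}]$ is the mean acceptance of drafter $i$. Increments are bounded by $N_{max}+1$ and each round yields at least one token, so $\tau\le B$ almost surely. This justifies optional stopping and gives
\begin{equation*}
\mathbb{E}\Bigl[\sum_{t=1}^{\tau}\bigl(\mu_{a_t}+1\bigr)\Bigr] = B - l_0 + O(N_{max}),
\end{equation*}
where the $O(N_{max})$ term is the bounded overshoot and does not depend on the policy.

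\textbf{Step 2: reduce to accepted tokens per round.} Fix the total budget $B$. A policy needs fewer rounds in expectation exactly when it collects more accepted tokens per round, i.e.\ when the per-round expected acceptance $\mathbb{E}[\mu_{a_t}]$ is larger. For $\pi^\star$, which picks $i^\star=\argmax_i\mu_i$, the identity above yields $\mathbb{E}[\tau(\pi^\star,B)]\approx (B-l_0)/(\mu_{i^\star}+1)$.

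\textbf{Step 3: compare an arbitrary $\pi$ with $\pi^\star$.} Subtract the two identities, writing $\Delta_i=\mu_{i^\star}-\mu_i$. Then
\begin{equation*}
(\mu_{i^\star}+1)\bigl(\mathbb{E}[\tau(\pi,B)]-\mathbb{E}[\tau(\pi^\star,B)]\bigr) = \mathbb{E}\Bigl[\sum_{t=1}^{\tau(\pi,B)}\Delta_{a_t}\Bigr] + O(N_{max}).
\end{equation*}
So $\textsc{Reg}(\pi,B)$ is, up to the positive constant $\mu_{i^\star}+1$ and a policy-independent additive term, the expected shortfall in accepted tokens accumulated over the rounds the policy actually plays. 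Minimizing it is therefore the same as maximizing the expected number of accepted tokens per SD step, which is the SD objective. This also explains why the standard fixed-$T$ regret is not the right criterion: here the horizon itself depends on $\pi$.

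\textbf{Main obstacle.} The hardest part is the stationarity needed for $\mu_i$ to be well defined, since acceptance depends on context. I would first assume that $N_{acc}(i,t)$ is conditionally i.i.d.\ given the drafter, or at least has a conditional mean $\mu_i$ independent of $t$. Under non-stationarity, I would fall back on interpreting "equivalent" as holding under a per-round optimal drafter $a_t^\star$, carrying $\mu_{a_t^\star,t}$ through the same martingale argument. A secondary point is the overshoot term: it must be shown to be uniformly bounded, which follows since the last-round increment is at most $N_{max}+1$.
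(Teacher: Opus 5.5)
Your argument is correct in substance, but it takes a heavier route than the paper. One step, the claim that the overshoot term is policy-independent, is false and needs the fix at the end.

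The paper's proof is a single pathwise counting identity. Each round contributes $N_{acc}(a_t,t)+1$ tokens, so $B=\tau(\pi,B)+\sum_{t=1}^{\tau(\pi,B)}N_{acc}(a_t,t)$. Taking expectations gives $\mathbb{E}[\tau(\pi,B)]=B-\mathbb{E}[\sum_{t}N_{acc}(a_t,t)]$, and $\mathbb{E}[\tau(\pi^\star,B)]$ does not depend on $\pi$. Your deterministic sandwich already contains this identity, with $l_0$ and the overshoot made explicit. Taking its expectation proves the lemma with no optional stopping and no stationarity assumption. So the ``main obstacle'' you name is not an obstacle here: the identity holds realization by realization, for every policy and every context dependence.

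Your Wald step buys a finer statement. It gives $(\mu_{i^\star}+1)\,\textsc{Reg}(\pi,B)=\mathbb{E}[\sum_{t\le\tau(\pi,B)}\Delta_{a_t}]+O(N_{max})$, which ties the stopping-time regret to gap-weighted classical regret over the policy's own random horizon. This is essentially a corrected version of Lemma~\ref{lemma:reward_stoppingtime_equivalence}. Once the different horizons of $\pi$ and $\pi^\star$ are kept apart, the ratio there should be $(\mu_{i^\star}+1)/N_{max}$ rather than $1/N_{max}$, with $\mu_{i^\star}=\mathbb{E}[N_{acc}(i^\star,t)]$, up to the bounded overshoot. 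It is also the form one needs to convert suboptimal-pull counts into regret.

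The price of your route is twofold. You need the stationarity assumption, and you need to identify $\pi^\star$ with the constant-$i^\star$ policy. The identification follows under the paper's Bernoulli acceptance model from stochastic dominance of $N_{acc}$. It does not follow from a conditional-mean assumption alone: near the end of the budget, a lower-mean but less variable drafter can finish sooner.

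The correction: the expected overshoot lies in $[0,N_{max}]$ for every policy, but its value depends on $\pi$. Your Step 3 equivalence therefore holds only up to a uniformly bounded additive term, not exactly. For the lemma itself, count only the tokens that land inside the budget, as the paper's identity implicitly does. Then $\mathbb{E}[\tau(\pi,B)]$ and the expected number of accepted tokens are exactly complementary.
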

\begin{proof}
$B$ is sum of the total rounds and accepted tokens,
$B = \tau(\pi,B) + \sum_{t=1}^{\tau(\pi,B)} N_{acc}(a_t,t)$.
\end{proof} 

\vspace{-7pt}
Consequently, minimizing the regret in~\Autoref{eq:Regret_Define_stoppingtime} directly maximizes the expected number of accepted tokens, aligning with the SD objective. Thus, the goal of MetaSD is to find a policy $\pi$ that minimizes the regret in \Autoref{def:stopping_time_regret}.

Note that our algorithm operates in a lossless framework; \textbf{therefore, our evaluation focuses exclusively on inference speedups and acceptance rates.}


\subsection{Alignment feedback as a reward} \label{sec:2.3}
 
To dynamically select the best-aligned drafter in \textsc{MetaSD} without prior knowledge, the MAB algorithm requires informative reward signals reflecting the alignment quality between a given drafter and the target model.
Intuitively, a well-aligned drafter will produce next-token probability distributions closely matching those of the target model for the same context. The following assumption formalizes this concept.

\begin{assumption}
\label{assumption:main_model_alignment}
Denoting $d_{TV}$ as total variation distance(TV distance), define $\alpha_{i,t}$ to be:
\begin{equation}
\begin{aligned}
\alpha_{i,t}=1-d_{TV}\left(p^{t}(\cdot|x^{1:t-1}),q_i^{t}(\cdot|x^{1:t-1})\right),
\end{aligned}
\end{equation}
Then, for any instance of $x^{1:B}$ generated by the target model, $\alpha_{i,t}$'s are drawn i.i.d. from a distribution $\nu_i$ with expectation $\alpha_i$. In other words, following holds for all $i\in[K]$.
\begin{equation}
\mathbb{E}[{\alpha_{i,t}}] = \alpha_i, \;\; \alpha_{i,t}\overset{\text{i.i.d.}}{\sim}  \nu_i.
\end{equation}
\end{assumption}
Although seemingly strong, above assumption generalizes prior work in the speculative decoding literature, which typically assumes a fixed TV distance at every decoding step~\citep{speculative_decode, li2024eagle}. 
Furthermore, we empirically demonstrate in \Autoref{app:assumption_on_model_alignment} that the alignment between the drafter and verifier shows stationary behavior in most practical scenarios.


\begin{definition}[Block divergence reward] 
\label{def:BD_reward}
Let $t$ be the current round, $i$ be the drafter index, and $l(t)$ be the number of input tokens for the target model at round $t$. Given context $x^{1:l(t)}$, denote $d_{TV}(p^{l(t)},q_i^{l(t)})=\frac{1}{2}\|p^{l(t)}-q_i^{l(t)}\|_1$ as the total variation (TV) distance between probability measures of target model ($p^{l(t)}$) and the drafter ($q^{l(t)}_i$). Then, we define the BD reward as: 
\begin{equation}
\label{eq:BDreward_definition}
\begin{aligned}
{r}_{i,t}^{BD} =
\frac{1}{N_{max}}\sum_{j=0}^{N_{max}-1}\left(1-d_{TV}\left(p^{l(t)+j},q_i^{l(t)+j}\right)\right).
\end{aligned}
\end{equation}
\end{definition}

Another straightforward option for feedback signal is the block efficiency (BE), which quantifies the number of mean accepted tokens until a given round \citep{spectr, speculative_decode2, kim2024exploring}. Formally, for drafter $i$ in round $t$, it is defined as: $r_{i,t}^{BE}:=N_{acc}(i,t)/{N_{max}}$, where $N_{max}$ is predefined maximum draft length and $N_{acc}(i,t)$ is number of accepted tokens. 

Both BE and BD rewards act as alignment feedback in MetaSD, as shown by the following lemma:
\begin{lemma}
\label{lemma:BE_BD_equation}
BE and BD rewards are related as:
\begin{equation}
      \mathbb{E}[r_{i,t}^{BE}] = \frac{1-\alpha_{i}^{N_{max}}}{N_{max}(1-\alpha_{i})}\mathbb{E}[r_{i,t}^{BD}].  
\end{equation}
Consequently, maximizing BD reward is equivalent to maximizing number of accepted tokens.
\end{lemma}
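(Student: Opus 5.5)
We compute both expectations separately under the assumption on model alignment, and then compare them.

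\textbf{BD reward.} By linearity of expectation, $\mathbb{E}[r_{i,t}^{BD}]$ is the average over $j=0,\dots,N_{max}-1$ of $\mathbb{E}[1-d_{TV}(p^{l(t)+j},q_i^{l(t)+j})]$. Each summand is exactly $\alpha_{i,l(t)+j}$ in the notation of that assumption. Since these are i.i.d.\ with mean $\alpha_i$, this gives $\mathbb{E}[r_{i,t}^{BD}]=\alpha_i$.

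\textbf{BE reward.} We use the standard speculative decoding fact recalled in \Autoref{sec:prelim}: conditioned on the context, the $j$-th drafted token is accepted with probability $\sum_x\min(p(x),q_i(x)) = 1-d_{TV}(p,q_i)=\alpha_{i,l(t)+j}$. Drafting stops at the first rejection, so $N_{acc}(i,t)=\sum_{k=1}^{N_{max}}\mathbf{1}\{\text{first }k\text{ tokens accepted}\}$. Hence
\[
\mathbb{E}[N_{acc}(i,t)]=\sum_{k=1}^{N_{max}}\mathbb{E}\Bigl[\prod_{j=0}^{k-1}\alpha_{i,l(t)+j}\Bigr].
\]
By the independence in the assumption, each product has expectation $\alpha_i^k$.

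\textbf{Main obstacle.} The hard step is justifying the factorization into $\alpha_i^k$. The $\alpha$'s at later positions depend on the context, which includes the tokens already accepted. We lean on the assumption as stated, which says the $\alpha_{i,t}$ are i.i.d.\ along any target-generated sequence. Accepted tokens are distributed as target samples, by the losslessness of speculative sampling. So we condition step by step, peeling off one factor at a time via the tower property, and each factor contributes $\alpha_i$.

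\textbf{Matching the normalization.} Summing the geometric series in the display above gives $\sum_{k=1}^{N_{max}}\alpha_i^k=\alpha_i\frac{1-\alpha_i^{N_{max}}}{1-\alpha_i}$, so $\mathbb{E}[r^{BE}_{i,t}]=\frac{1-\alpha_i^{N_{max}}}{N_{max}(1-\alpha_i)}\,\alpha_i$. This is precisely the coefficient in the lemma times $\mathbb{E}[r^{BD}_{i,t}]$.

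\textbf{Consequence.} Let $f(\alpha)=\alpha(1-\alpha^{N})/(1-\alpha)=\sum_{k=1}^{N}\alpha^k$ with $N=N_{max}$. This $f$ is strictly increasing on $[0,1]$. So ordering drafters by $\mathbb{E}[r^{BD}]=\alpha_i$ coincides with ordering them by expected accepted tokens. Combined with the first lemma, maximizing BD is therefore equivalent to maximizing accepted tokens, and hence to minimizing the stopping-time regret.
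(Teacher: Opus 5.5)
Your proposal is correct and follows the paper's route: the paper also combines $\mathbb{E}[r^{BE}_{i,t}]=\frac{1}{N_{max}}\sum_{k=1}^{N_{max}}\alpha_i^k$, obtained from the same indicator decomposition of $N_{acc}(i,t)$ in its BE-statistics lemma, with $\mathbb{E}[r^{BD}_{i,t}]=\alpha_i$, obtained by linearity and stationarity, and your monotonicity argument for $\sum_{k=1}^{N_{max}}\alpha^k$ spells out the ``consequently'' clause that the paper leaves implicit. One caution: losslessness does not justify the factorization into $\alpha_i^k$, because an accepted draft token, conditioned on acceptance, is distributed as $\min(p,q_i)/(1-d_{TV}(p,q_i))$ rather than as a target sample; that step rests, as in the paper, on reading the alignment assumption as saying the per-token acceptance rates are i.i.d.\ regardless of the preceding context.
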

The proof is in~\Autoref{appendix:proof_of_theorem1}.
While both rewards are viable, we show that BD reward is superior in SD, both theoretically and empirically. The following theorem formalizes this comparison:

\begin{theorem}[Informal]
\label{theorem:BD,BE reward comparison}
Under the stationary environment, for any reward design $r_i$ with $\mu_i=\mathbb{E}[r_i]$, $i^{\star}=\argmax{\alpha_i}$, and $\Delta_i=\mu_i^{\star}-\mu_i$, we define the feedback signal for each suboptimal arm $i\neq i^{\star}$ as 
\begin{equation}
\label{eq:bandit_signal}
R(r_i):=\frac{\Delta_i^2}{\max({\mathrm{Var}[r_i],\mathrm{Var}[r_{i^{\star}}]})}.
\end{equation}
Then, $R(r_i^{BD}) > R(r_i^{BE})$ for most cases.
\end{theorem}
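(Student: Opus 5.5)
We compute the gap and the variance of each reward in closed form under Assumption~\ref{assumption:main_model_alignment}, and then compare the resulting signal ratios $R(\cdot)$.

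\textbf{The BD reward.} By Definition~\ref{def:BD_reward}, $r_{i,t}^{BD}$ is the average of $N_{max}$ values $\alpha_{i,\cdot}$, which are i.i.d.\ draws from $\nu_i$. Hence
\[
\mu_i^{BD}=\alpha_i, \qquad \mathrm{Var}[r_i^{BD}]=\frac{\sigma_i^2}{N_{max}},
\]
where $\sigma_i^2=\mathrm{Var}_{\nu_i}[\alpha]$. Since $\nu_i$ is supported on $[0,1]$, we have $\sigma_i^2\le \alpha_i(1-\alpha_i)$. Therefore
\[
R(r_i^{BD})\ \ge\ \frac{N_{max}(\alpha_{i^\star}-\alpha_i)^2}{\max_{j\in\{i,i^\star\}}\alpha_j(1-\alpha_j)}.
\]

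\textbf{The BE reward.} Conditional on the alignments, token $j$ is accepted with probability $\alpha_{i,\cdot}$, and acceptance stops at the first rejection. Averaging over the i.i.d.\ $\alpha$'s, $N_{acc}(i,t)$ is a geometric random variable with success probability $\alpha_i$, truncated at $N_{max}$.
\begin{itemize}
\item From Lemma~\ref{lemma:BE_BD_equation}, $\mu_i^{BE}=f(\alpha_i)/N_{max}$ with $f(a)=\frac{a(1-a^{N_{max}})}{1-a}$. (This form matches the paper's lemma up to the indexing convention.)
\item A direct computation with the truncated geometric law gives $\mathrm{Var}[N_{acc}]=g(\alpha_i)$, where
\[
g(a)=\frac{a\,\big(1-(2N_{max}+1)a^{N_{max}}(1-a)-a^{2N_{max}+1}\big)}{(1-a)^2}.
\]
The exact form of the correction term is routine and will be fixed during the calculation.
\end{itemize}
It follows that
\[
R(r_i^{BE})=\frac{\big(f(\alpha_{i^\star})-f(\alpha_i)\big)^2}{\max_j g(\alpha_j)},
\]
in which the factors $N_{max}^{-2}$ in the numerator and denominator cancel.

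\textbf{Comparison.} We compare $R(r_i^{BD})$ and $R(r_i^{BE})$ in two regimes.
\begin{itemize}
\item \emph{Small gap $\Delta=\alpha_{i^\star}-\alpha_i$.} Use the mean value theorem, $f(\alpha_{i^\star})-f(\alpha_i)=f'(\xi)\Delta$. The claim then reduces to the pointwise inequality
\[
f'(a)^2\,a(1-a) < N_{max}\, g(a).
\]
In the untruncated limit $N_{max}\to\infty$ we have $f'(a)=1/(1-a)^2$ and $g(a)=a/(1-a)^2$. The inequality becomes $\frac{1}{(1-a)^3}<\frac{N_{max}}{(1-a)^2}$, i.e.\ $N_{max}(1-a)>1$. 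This holds whenever the expected run length $1/(1-a)$ does not exceed the draft window, which is the typical regime.
\item \emph{Large gap.} Bound $|f(\alpha_{i^\star})-f(\alpha_i)|\le \sup f'\cdot\Delta$ and compare the denominators directly.
\end{itemize}
The qualifier ``for most cases'' is exactly what this analysis permits. The inequality holds except when $\alpha$ is close to $1$ relative to $1/N_{max}$, or when $\sigma_i^2$ saturates its upper bound.

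\textbf{Main obstacle.} The hardest step is turning the $N_{max}$-truncated expressions for $f'$ and $g$ into a clean, verifiable inequality valid over a substantial parameter region. My first attempt will be to write $f'(a)$ and $g(a)$ as polynomial sums, $f'(a)=\sum_{k<N_{max}}(k+1)a^k$ and an analogous sum for $g$. I will then compare them term by term using Cauchy--Schwarz, which gives $\big(\sum_k (k+1)a^k\big)^2\le (\cdots)$. If that fails, I will state the result conditionally on $N_{max}(1-\alpha_{i^\star})\ge c$ for an explicit constant $c$, and support the remaining cases numerically.
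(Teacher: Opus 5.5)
Your moment computations are correct and agree with the paper. The truncated-geometric mean and variance of $N_{acc}$ are exactly Lemma~\ref{lemma:BEreward_stats}, and $\mathrm{Var}[r_i^{BD}]=\sigma_i^2/N_{max}\le\alpha_i(1-\alpha_i)/N_{max}$ sharpens the paper's $1/(4N_{max})$.

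The gap is in the comparison, which is the whole content of the theorem. In the small-gap case, your reduction to $f'(a)^2a(1-a)<N_{max}\,g(a)$ is fine; here $g=\mathrm{Var}[N_{acc}]$ is your notation, not the paper's $g_n=f_n'$. The evaluation of that inequality is not valid. You insert the $N_{max}=\infty$ values of $f'$ and $g$ while keeping the factor $N_{max}$ finite. Those limits are accurate only when $a^{N_{max}}$ is negligible, i.e.\ $N_{max}(1-a)\gg1$, which is exactly where your resulting condition holds automatically. So the threshold $N_{max}(1-a)>1$ and the exception ``$\alpha$ close to $1$'' are artifacts. For instance, $N_{max}=5$, $a=0.99$ gives $f'(a)^2a(1-a)\approx2.11<N_{max}\,g(a)\approx2.62$. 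In fact the pointwise inequality holds for every $a\in(0,1)$ once $N_{max}\ge2$. The Cauchy--Schwarz step you are looking for pairs $N_{acc}$ with its score. Cram\'er--Rao gives $f'(a)^2\le g(a)\,I(a)$, where $I(a)=\frac{1-a^{N_{max}}}{a(1-a)^2}$ is the Fisher information: there are an expected $\frac{1-a^{N_{max}}}{1-a}$ verification trials, each Bernoulli$(a)$. Combined with $1-a^{N_{max}}<N_{max}(1-a)$, this gives the pointwise inequality.

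The large-gap branch (``compare the denominators directly'') is not carried out, and it is where the inequality can genuinely fail, because the two maxima may be attained at different arms. Take $N_{max}=5$, $\alpha_{i^\star}=0.99$, $\alpha_i=0.5$ and $\nu_i$ Bernoulli$(1/2)$; then $R(r_i^{BE})\approx9.1>R(r_i^{BD})\approx4.8$.

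The missing step is the one in the paper's proof: move everything to the single point $\alpha_{i^\star}$. By convexity of $f$, $f(\alpha_{i^\star})-f(\alpha_i)<f'(\alpha_{i^\star})\Delta$. Trivially, $\max_j g(\alpha_j)\ge g(\alpha_{i^\star})$; this is all the paper's hypothesis $\mathrm{Var}[r_i^{BE}]<\mathrm{Var}[r_{i^\star}^{BE}]$ is used for. Hence $R(r_i^{BE})<f'(\alpha_{i^\star})^2\Delta^2/g(\alpha_{i^\star})$ for every gap size.

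The paper then bounds both BD variances by $1/(4N_{max})$. This leaves the one-variable hypothesis $N_{max}\,g(\alpha_{i^\star})\ge f'(\alpha_{i^\star})^2/4$, which is its condition on $h_n$ and holds for $0.06<\alpha_{i^\star}<0.8$ when $N_{max}=5$. If instead you keep your sharper bound $\max_j\alpha_j(1-\alpha_j)$ and combine it with the Cram\'er--Rao inequality, the argument closes with no further hypothesis whenever $\alpha_i+\alpha_{i^\star}\le1$, since then that maximum is $\alpha_{i^\star}(1-\alpha_{i^\star})$. Otherwise it reduces to the paper's condition, because $\max_j\alpha_j(1-\alpha_j)\le1/4$.

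Completed this way, your route covers a strictly larger region than the paper's formal theorem. As written, however, it does not establish the inequality in any explicitly described region.
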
 

\begin{table}[t]
\centering
\caption{Comparison of reward statistics on the Japanese queries with multilingual specialized drafters. Detailed descriptions are in \autoref{tab:motivation}.}
\label{tab:reward_stats}
\resizebox{0.98\linewidth}{!}{%
\begin{tabular}{c|cc|cc|c}
\toprule
\multirow{2}{*}{Drafter} & \multicolumn{2}{c|}{Mean reward} & \multicolumn{2}{c|}{variance} & Zero-reward ratio$^\dagger$ \\\cmidrule{2-6}
 & BE & BD & BE & BD & BE only \\ \midrule
Ja-drafter & 0.232 & 0.488 & 0.093 & 0.044 & 0.503 \\
Ru-drafter & 0.099 & 0.294 & 0.032 & 0.026 & 0.678 \\
De-drafter & 0.081 & 0.317 & 0.024 & 0.032 & 0.721 \\
Fr-drafter & 0.074 & 0.288 & 0.023 & 0.029 & 0.743 \\
Zh-drafter & 0.106 & 0.326 & 0.037 & 0.034 & 0.681 \\ \bottomrule
\end{tabular}%
}
\begin{flushleft}
\scriptsize$^\dagger$ Zero-reward ratio applies only to BE, as BD rewards are always nonzero.
\end{flushleft}
\vspace{-20pt}
\end{table}

\Autoref{theorem:BD,BE reward comparison} indicates that the BD reward offers a more informative feedback signal than BE. The feedback signal $R(r_i)$ (\Autoref{eq:bandit_signal}) is crucial for bandit algorithm performance: a stronger signal enables faster and more accurate identification of the optimal arm, thereby minimizing exploration of suboptimal arms. Consequently, employing the BD reward can lead to superior performance in bandit-based drafter selection. A formal statement of \Autoref{theorem:BD,BE reward comparison} with a statistical analysis of both rewards (\Autoref{lemma:BEreward_stats}, \Autoref{lemma:BD_stats_asymptotic}) is in \Autoref{appendix:proof_of_theorem1}.

We empirically validate \Autoref{theorem:BD,BE reward comparison} by analyzing the reward statistics gathered from the full speculative execution. In \Autoref{tab:reward_stats}, the BD reward exhibits significantly larger differences in mean rewards between the optimal and suboptimal drafters (larger $\Delta_i$), and consistently lower variance than BE. This provides a stronger feedback signal $R$, enabling more stable learning and faster convergence (details in \Autoref{app:reward_abl}). 

\paragraph{Concurrent works} \citet{hou2025banditspec} have leveraged the idea of MAB for SD, but different scope and contributions from ours:
first, while \citet{hou2025banditspec} modify the confidence range of the UCB algorithm to control the regret analysis estimating mean of number of accepted tokens, our theoretical analysis \textbf{can be applied to more general reward shaping scenarios}, resulting in stronger instance-dependent bound in \Autoref{theorem:regret_upperbound_SD-UCB}. Furthermore, we reveal that reward design is important and choice of BD reward improves the performance of SD both practically and theoretically (\Autoref{app:concurrent}).


\section{Method} \label{sec3:method}

This section presents MetaSD-UCB designed to optimize bandit-based drafter selection in MetaSD by leveraging alignment feedback via the BD reward.

\begin{algorithm}[t]
\DontPrintSemicolon  
\caption{MetaSD-UCB}  
\label{alg:ucb1}  
\begin{algorithmic}[1]  
\INPUT{\small: Drafter pool $[K]$, initial prompt $x^{1:l}$, \\ target sequence length $B$, exploration parameter $\beta$.}
\STATE $t \leftarrow 0$ \\
\FOR{$i \in [K]$}
    \STATE Do one round of SD with drafter $i$ and obtain \small{$N_{acc}(i,t)$, $r_{i,t}$} (by \Autoref{eq:BDreward_definition})
    {\small \STATE $\hat{\mu}_{i,t}, n_i, l,  t\leftarrow r_{i,t}, 1, l + N_{acc}(i,t) + 1,  t+1$} \\
\ENDFOR \\
\WHILE{$ l < B$} 
    \STATE $a_t \leftarrow \arg\max_{i \in [K]} \hat{\mu}_{i,t} + \beta \sqrt{\frac{2 \ln t}{n_i}}$ 

    \STATE Do one round of SD with drafter $a_t$ and obtain \small{$N_{acc}(a_t,t)$, $r_{a_t,t}$} (by \Autoref{eq:BDreward_definition})
    \STATE $\hat{\mu}_{a_t,t} \leftarrow \frac{\hat{\mu}_{a_t,t} * n_{a_t} + r_{a_t, t}}{n_{a_t}+1}$ \\
    $n_{a_t}, l, t \leftarrow n_{a_t} + 1, l + N_{acc}(a_t,t) + 1, t+1$
\ENDWHILE
\end{algorithmic}
\end{algorithm}
\vspace{-5pt}

\subsection{Algorithm}
\label{subsec:algorithm 3.1}
\paragraph{MetaSD-UCB}
MetaSD-UCB (\Algautoref{alg:ucb1}) adapts the Upper Confidence Bound (UCB) algorithm \citep{auer2002finite} for SD, leveraging the BD reward to enhance convergence speed. Following the UCB principle, MetaSD-UCB selects drafters based on their empirical mean reward augmented by an uncertainty term (Line 7 in \Autoref{alg:ucb1}), thus inherently balancing the exploration-exploitation trade-off.

\subsection{Regret upper bound for MetaSD-UCB}
While the UCB algorithm achieves optimal performance in standard stochastic bandit settings \citep{lattimore2020bandit}, extending this guarantee to MetaSD is non-trivial. MetaSD's unique structure, particularly its novel stopping time regret objective (\Autoref{def:stopping_time_regret}), requires a dedicated analysis. 
We demonstrate that MetaSD-UCB attains a logarithmic regret bound with respect to the target sequence length B, formalized as follows.

\begin{theorem}[Regret upper bound]
\label{theorem:regret_upperbound_SD-UCB}
Under Assumption~\ref{assumption:main_model_alignment} and MetaSD-UCB with the BD reward, there exists a constant $C,C'>0$ such that following bound holds:
\begin{equation}
\label{eq:regret_upper_bound}
\begin{aligned}
& \textsc{Reg}(\pi,B) < \sum_{i\neq i^{\star}}\frac{8}{(N_{max})\Delta(\alpha_i)^2} (\ln{B} \\ 
&\quad +{\ln{(\ln(\sum_{i\neq i^{\star}}\frac{1}{\Delta(\alpha_i)^2}))}+C'}) + C.
\end{aligned}
\end{equation}
\vspace{-15pt}
\end{theorem}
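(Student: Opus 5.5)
The plan is to convert the stopping-time regret into a weighted count of suboptimal pulls, and then bound each count with the usual UCB concentration argument applied to the BD reward.

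\textbf{Step 1: rewrite the regret via Wald.} For a stationary drafter $i$, let $\mu^{acc}_i := \mathbb{E}[N_{acc}(i,t)+1]$ be the expected progress per round. By \Autoref{lemma:BE_BD_equation} (together with the geometric acceptance structure behind it) we have
\[
\mu^{acc}_i=\frac{1-\alpha_i^{N_{max}+1}}{1-\alpha_i},
\]
which is increasing in $\alpha_i$. So $i^\star=\argmax_i \alpha_i$ is also the arm with the largest progress per round, and $\pi^\star$ always plays $i^\star$. Because $\tau(\pi,B)$ is a stopping time, Wald's identity applied to $B\le \sum_{t\le\tau}(N_{acc}(a_t,t)+1)\le B+N_{max}$ gives
\[
\mathbb{E}\Big[\sum_{i}n_i(\tau)\,\mu^{acc}_i\Big]\in[B,\,B+N_{max}],
\]
and the same holds for $\pi^\star$ with $\mathbb{E}[\tau(\pi^\star,B)]\mu^{acc}_{i^\star}\ge B$. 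Subtracting yields
\[
\textsc{Reg}(\pi,B)\le \sum_{i\ne i^\star}\frac{\mu^{acc}_{i^\star}-\mu^{acc}_i}{\mu^{acc}_{i^\star}}\,\mathbb{E}[n_i(\tau)]+C_0 .
\]
The ratio in front of $\mathbb{E}[n_i(\tau)]$ is at most $1$, and it is bounded by the BD gap through the mean value theorem. This is the mechanism I would use to produce the $1/N_{max}$ factor in the bound, after expressing $\Delta(\alpha_i)$ as the gap of the BD means.

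\textbf{Step 2: bound the suboptimal pulls.} The BD reward of arm $i$ is an average of $N_{max}$ values $\alpha_{i,t}$, which are i.i.d. and lie in $[0,1]$ by \Autoref{assumption:main_model_alignment}. Its mean is therefore $\alpha_i$ and, by Hoeffding, it is sub-Gaussian with variance proxy $1/(4N_{max})$. This is the key gain of the BD reward: the concentration radius shrinks by a factor $\sqrt{N_{max}}$. Running the standard UCB argument of \citet{auer2002finite} with the index $\hat\mu_i+\beta\sqrt{2\ln t/n_i}$ (with $\beta$ scaled accordingly) then gives
\[
\mathbb{E}[n_i(T)]\le \frac{8\ln T}{N_{max}\Delta(\alpha_i)^2}+C_1
\]
for any fixed horizon $T$.

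\textbf{Step 3: handle the random horizon (the main obstacle).} The bound in Step 2 is for a fixed $T$, but here the horizon $\tau$ is random and depends on the arms pulled. Since every round produces at least one token, $\tau\le B$ deterministically, so one can take $T=B$ and bound $\mathbb{E}[n_i(\tau)]\le\mathbb{E}[n_i(B)]$, which already gives the $\ln B$ term. Getting the extra additive $\ln\ln\big(\sum_i 1/\Delta(\alpha_i)^2\big)$ term with the stated constants takes more care. I would split the analysis on the event $\{\tau\le T_0\}$ with $T_0\approx B/\mu^{acc}_{i^\star}+\sum_i 1/\Delta_i^2$, and use a peeling argument over geometric epochs of $t$ to control $\ln t$ at the stopping time. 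The residual contributions would be absorbed into $C'$ and $C$. This last step is where I expect the bookkeeping to be delicate.
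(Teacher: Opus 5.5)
Your proof is correct. Step 2 is exactly the paper's argument: $\beta=1/\sqrt{N_{max}}$, Hoeffding over $N_{max}n_i$ samples, and $l=\lceil 8\ln T/(N_{max}\Delta(\alpha_i)^2)\rceil$. Apart from that, you take a different and somewhat cleaner route.

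\textbf{How the paper proceeds.}
\begin{enumerate}
\item It converts stopping-time regret into suboptimal pulls by a coupling that extends the run with $i^\star$. This gives $\mathbb{E}[\tau(\pi,B)]\le\mathbb{E}[\tau(\pi^{i^\star},B)]+\sum_{i\neq i^\star}\mathbb{E}[n_i]$, so each suboptimal pull costs at most one round.
\item By induction on $B$, it sandwiches $\mathbb{E}[\tau(\pi^{i^\star},B)]$ between $B(1-\alpha_{i^\star})/(1-\alpha_{i^\star}^{N_{max}+1})-1$ and $(B+1)(1-\alpha_{i^\star})/(1-\alpha_{i^\star}^{N_{max}+1})$.
\item It handles the random horizon by conditioning on $\tau$, applying Jensen ($\mathbb{E}[\ln\tau]\le\ln\mathbb{E}[\tau]$), and then a self-bounding step $\ln x\le x/\epsilon+\ln\epsilon-1$ with $\epsilon=\sum_{i\neq i^\star}16/\Delta_i^2$. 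This last step is the only source of the $\ln\ln$ term.
\end{enumerate}

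\textbf{What your route changes.} Your Wald/optional-stopping identity replaces both the coupling and the induction at once. It also gives the sharper weight $(\mu^{acc}_{i^\star}-\mu^{acc}_i)/\mu^{acc}_{i^\star}\le 1$ per suboptimal pull. It further shows $\mathbb{E}[\tau(\pi',B)]\ge B/\mu^{acc}_{i^\star}$ for every policy $\pi'$, so you never need to identify $\pi^\star$; the paper simply asserts it is the constant-$i^\star$ policy. Your use of $\tau\le B$ replaces the Jensen/self-bounding step. It also avoids conditioning on $\tau$, which the paper does without addressing that $\tau$ is driven by the same rewards. You should make the continuation of the UCB run beyond $\tau$ explicit, e.g.\ draw all rewards from a pre-sampled i.i.d.\ table per arm, so that $n_i(\tau)\le n_i^{\mathrm{ext}}(B)$ pathwise. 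The paper's route in principle gives $\ln\mathbb{E}[\tau]\approx\ln(B/\mu^{acc}_{i^\star})$ instead of $\ln B$. That gain is at most $\ln(N_{max}+1)$, and the paper discards it in the statement anyway.

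\textbf{Two things to fix.}

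First, Step 3 has no remaining obstacle. With $T=B$ you already have
\[
\textsc{Reg}(\pi,B)\le\sum_{i\neq i^\star}\Bigl(\frac{8\ln B}{N_{max}\Delta(\alpha_i)^2}+1+\frac{\pi^2}{3}\Bigr)+N_{max}.
\]
The stated bound is weaker than this whenever $\ln\ln\bigl(\sum_{i\neq i^\star}\Delta(\alpha_i)^{-2}\bigr)+C'\ge 0$. For example, $C'=1$ works as soon as that sum is at least $2$, which always holds for $K\ge 3$. If the sum is close to $1$ the stated expression degenerates, but that is a defect of the statement, not of your argument. So the $\ln\ln$ term is slack to be absorbed, not something to derive, and the peeling argument is unnecessary.

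Second, drop the remark that the mean value theorem applied to the ratio in Step 1 produces the $1/N_{max}$ factor. It does not: with $\alpha_i=0$ and $\alpha_{i^\star}\to 1$ the ratio tends to $N_{max}/(N_{max}+1)$. The $1/N_{max}$ comes solely from the BD concentration in Step 2. From Step 1 you only need the ratio to be at most $1$, and trying to extract an extra factor there would be an error.
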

Here, we denote $\Delta(\alpha_i)=\alpha_{i^{\star}}-\alpha_i$, where $i^{\star}$ is the index of the optimal drafter.
In~\Autoref{appendix:proof of theorem2}, we show the above regret bound is tighter then the regret bound of using BE reward.
The improvement in~\Autoref{eq:regret_upper_bound} stems directly from adopting the BD reward in \Algautoref{alg:ucb1}. Since the number of observations within each round grows with $N_{max}$, the variance of the BD reward is effectively reduced by a factor of $N_{max}$. This, in turn, leads to a smaller constant term in the regret upper bound compared to using the BE reward. The following corollary captures this observation:
\begin{corollary}[Informal]
\label{corollary:sample_complexity_ratio}
In most scenarios, the regret upper bound in~\Autoref{eq:regret_upper_bound} is tighter than the regret upper bound obtained when using the BE reward with MetaSD-UCB.

\end{corollary}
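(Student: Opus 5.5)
The plan is to compare the two regret upper bounds term by term. Theorem~\ref{theorem:regret_upperbound_SD-UCB} gives, for the BD reward, a leading coefficient of order $\frac{8}{N_{max}\Delta(\alpha_i)^2}\ln B$ for each suboptimal arm. For the BE reward I will derive an analogous bound by rerunning the same stopping-time argument with $r^{BE}$ in place of $r^{BD}$. That argument has three ingredients: relate $\mathbb{E}[\tau(\pi,B)]$ to the expected pull counts through the identity $B=\tau+\sum_t N_{acc}$ and Wald's identity; bound the number of suboptimal pulls by the UCB count $\lesssim \sigma_i^2\ln B/\Delta_i^2$; and convert back to rounds. The outcome has the form
\begin{equation*}
\textsc{Reg}^{BE}(\pi,B)\lesssim \sum_{i\neq i^\star}\frac{c\,\max(\mathrm{Var}[r^{BE}_i],\mathrm{Var}[r^{BE}_{i^\star}])}{\Delta^{BE}_i{}^2}\,\ln B\cdot g(\alpha)+C,
\end{equation*}
where $g(\alpha)$ is the conversion factor between pulls and rounds. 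The same factor $g(\alpha)$ appears in the BD bound, since the conversion depends only on $N_{acc}$ and not on the reward used for selection.

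Once both bounds are in the same form, comparing their leading constants reduces to comparing the feedback signals $R(r^{BD}_i)$ and $R(r^{BE}_i)$ from Theorem~\ref{theorem:BD,BE reward comparison}. The UCB pull count scales as $1/R(r_i)$, so the BD bound is tighter exactly when $R(r^{BD}_i)>R(r^{BE}_i)$ for every suboptimal $i$, or at least in aggregate. I would obtain the statistics of each reward as follows.
\begin{itemize}
\item For the gap $\Delta^{BE}_i$, use Lemma~\ref{lemma:BE_BD_equation}, which gives $\Delta^{BE}_i=\frac{1-\alpha^{N_{max}}}{N_{max}(1-\alpha)}$ evaluated as a difference between $i^\star$ and $i$.
\item For the variance of the BE reward, use the truncated-geometric acceptance law from Lemma~\ref{lemma:BEreward_stats}.
\item For the variance of the BD reward, use Lemma~\ref{lemma:BD_stats_asymptotic}. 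Under Assumption~\ref{assumption:main_model_alignment}, $r^{BD}$ is an average of $N_{max}$ i.i.d.\ terms, so $\mathrm{Var}[r^{BD}_i]=\mathrm{Var}[\nu_i]/N_{max}$.
\end{itemize}
This variance reduction by $N_{max}$ is what produces the $1/N_{max}$ factor in \eqref{eq:regret_upper_bound}.

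The main obstacle is the comparison of $R$ itself. BE has a larger variance, roughly of order $\alpha(1-\alpha)$ per block and not shrinking in $N_{max}$, but its mean gap can also differ. "Most scenarios" has to be made precise as a regime, for example $\mathrm{Var}[\nu_i]$ bounded by $\alpha_i(1-\alpha_i)$ together with moderate $N_{max}$, in which the $1/N_{max}$ variance reduction dominates any gap distortion. I would first attempt a clean sufficient condition of the form $\mathrm{Var}[\nu_i]\le c\,N_{max}\mathrm{Var}[r^{BE}_i]\,(\Delta(\alpha_i)/\Delta^{BE}_i)^2$. I would verify it analytically in the small-gap limit $\Delta(\alpha_i)\to0$ via a first-order Taylor expansion of $\frac{1-\alpha^{N}}{1-\alpha}$. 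Elsewhere I would fall back on a numerical check over $(\alpha_i,\alpha_{i^\star},N_{max})$, consistent with the informal phrasing of the statement. The lower-order terms, the $\ln\ln$ term and the constants $C$ and $C'$, are handled identically in both analyses and do not affect the comparison of leading terms.
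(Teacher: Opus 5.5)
Your skeleton matches the paper's: derive a variance-dependent BE bound (Theorem~\ref{theorem:BEreward_Regret_Bound}), observe that the round-conversion step does not depend on the reward, and compare leading constants through $R$. The gap is on the BD side.

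\textbf{The BD constant does not come from $\mathrm{Var}[\nu_i]$.} The constant $8/(N_{max}\Delta(\alpha_i)^2)$ in \Autoref{eq:regret_upper_bound} is not produced by $\mathrm{Var}[r^{BD}_i]=\mathrm{Var}[\nu_i]/N_{max}$. MetaSD-UCB uses a fixed bonus $\beta\sqrt{2\ln t/n_i}$, so the pull count is set by $\beta$, not by the true variance. The paper takes $\beta=1/\sqrt{N_{max}}$, justified by Hoeffding over the $N_{max}n_i$ token-level terms. That is the worst-case proxy $1/(4N_{max})$, i.e.\ the bound $R(r_i^{BD})\geq 4N_{max}\Delta(\alpha_i)^2$ of Theorem~\ref{theorem:bandit_signal_ratio}, and it is this quantity that must be compared with $R(r_i^{BE})$. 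Your sufficient condition therefore needs $1/4$ where you wrote $\mathrm{Var}[\nu_i]$. Since $\mathrm{Var}[\nu_i]\leq 1/4$, the condition you propose to verify is strictly weaker and does not imply the claim.

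This changes the answer. With, e.g., $\mathrm{Var}[\nu_i]\leq\alpha_i(1-\alpha_i)$, your comparison goes through over essentially the whole range of $\alpha_{i^\star}$. Against the actual bound, and even against your own variance-only BE constant, it fails at both ends:
\begin{itemize}
\item For small $\alpha_{i^\star}$ you have $\Delta^{BE}_i\approx\Delta(\alpha_i)/N_{max}$ and $\mathrm{Var}[r^{BE}_{i^\star}]\approx\alpha_{i^\star}/N_{max}^2$. Hence $R(r_i^{BE})\approx\Delta(\alpha_i)^2/\alpha_{i^\star}$, which exceeds $4N_{max}\Delta(\alpha_i)^2$ once $\alpha_{i^\star}\lesssim 1/(4N_{max})$.
\item As $\alpha_{i^\star}\to 1$ with a small gap, the BE variances vanish like $1-\alpha_{i^\star}$, while the BD proxy stays at $1/(4N_{max})$.
\end{itemize}
For $N_{max}=5$ the paper's condition holds for roughly $0.06<\alpha_{i^\star}<0.8$, and that window is what ``most scenarios'' means.

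\textbf{Two further fixes.} First, Lemma~\ref{lemma:BE_BD_equation} carries the factor $\mathbb{E}[r^{BD}_i]=\alpha_i$, so $\mathbb{E}[r^{BE}_i]=f_n(\alpha_i)/N_{max}$ with $f_n(x)=(x-x^{n+1})/(1-x)=\sum_{s=1}^{n}x^s$. Your gap formula drops this factor, which would give the wrong derivative in your Taylor step.

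Second, you do not need a small-gap expansion plus a numerical sweep; a sweep over a continuum of $(\alpha_i,\alpha_{i^\star},N_{max})$ is not a proof anyway. Because $f_n$ is convex and increasing, $\Delta^{BE}_i\leq g_n(\alpha_{i^\star})\Delta(\alpha_i)/N_{max}$ for \emph{every} gap, so the tangent value you would compute is already the worst case. Now add the hypothesis $\mathrm{Var}[r_i^{BE}]<\mathrm{Var}[r_{i^\star}^{BE}]$. The maximum in $R$ is then $\alpha_{i^\star}h_n(\alpha_{i^\star})/N_{max}^2$ by Lemma~\ref{lemma:BEreward_stats}. The whole comparison reduces to the single-variable inequality $h_n(\alpha_{i^\star})\geq g_n(\alpha_{i^\star})^2/(4n\alpha_{i^\star})$, which is exactly the paper's formal version of the corollary.

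Finally, discarding the Bernstein range term (the paper's $+16$), as both you and the paper do, only makes the comparison harder, so that step is safe.
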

A complete proof of \Autoref{theorem:regret_upperbound_SD-UCB} and a formal statement of~\Autoref{corollary:sample_complexity_ratio} with the proof are in~\Autoref{appendix:proof of theorem2}.
Note that above analysis incorporates both temperature sampling and greedy decoding scenarios in~\Autoref{app:assumption_on_model_alignment}. Note that \Autoref{theorem:regret_upperbound_SD-UCB} is token generation instance dependent bound, holding for any specific realization of the token generation process. This per-realization guarantee is stronger than a bound on the expected regret over the distribution of all possible generations  (\Autoref{app:B_randomness}).

\vspace{-5pt}
\subsection{Extensions of MetaSD framework}
\paragraph{Switching costs} 
Switching between drafters incurs a computational cost due to KV-cache recalculations. Unlike classical bandits with fixed per-switch costs, the overhead in MetaSD depends on the number of unprocessed tokens in the current block. For this case, we propose the variant using Sequential Halving (SH) (details in \Autoref{appendix:switching_costs}).

\vspace{-5pt}
\paragraph{Non-stationary environment}

Non-stationary environments, where the optimal drafter may change across queries or even within a single generation 
, present further challenges for SD. MetaSD inherently handles inter-query non-stationarity through its query-by-query re-initialization. For intra-query non-stationarity, where the best drafter shifts mid-generation, MetaSD can be extended with non-stationary bandit algorithms~\citep{kocsis2006discounted}. Such algorithms track changing reward distributions, allowing MetaSD to sustain performance in dynamic settings. \Autoref{appendix:non-stationary_environment} offers further discussion and experiments.

\begin{table*}[t]
\caption{(Black-box SD) Speedup ratio relative to standard autoregressive greedy decoding on various datasets, comparing single specialized independent drafters, other methods (PLD \citep{saxena2023prompt} and Lookahead \citep{fu2024break}), and bandit-based drafter selection (Rand (uniformly random), EXP3 \citep{auer2002nonstochastic}, SH \citep{Sequential_Halving}, UCB). Evaluations are conducted with a single NVIDIA A6000 GPU under greedy decoding settings. Drafter specializations: 1: Code, 2: Translation, 3: Summarization, 4: QA, 5: Math.
}\vspace{-5pt}
\label{tab:main}\footnotesize
\resizebox{1.0\textwidth}{!}{%
\begin{tabular}{@{}c|lllll|l|lc|llll@{}}
\toprule
\multirow{2}{*}{Speedup} & \multicolumn{5}{c|}{SpS with specialized drafters}        & \multicolumn{1}{c|}{\textcolor{black}{SpS}}                                                   & \multicolumn{2}{c|}{Other methods}                                                          & \multicolumn{4}{c}{Bandit in MetaSpS}                           \\ \cmidrule{2-13}
                         & {Drafter1} & {Drafter2} & {Drafter3} & {Drafter4} & {Drafter5} & \multicolumn{1}{c|}{\textcolor{black}{OFA}} &{PLD} & {Lookahead} & {Rand} & {EXP3} & {SH} & {UCB}   \\ \midrule
Code                     & \textbf{2.437} \textcolor{green!60!black}{$\CIRCLE$}     & 1.224             & 1.565             & 1.814             & 1.687             & \textbf{2.435} \textcolor{green!60!black}{$\CIRCLE$} & 1.923        & 1.542                     & 1.640           & 1.919         & 2.148       & {2.300}   \\
Trans                    & 0.991             & \textbf{2.076} \textcolor{green!60!black}{$\CIRCLE$}     & 1.000             & 1.019             & 0.950             & 1.032 & 1.076        & 1.133                   & 1.150           & 1.217         & 1.422       & \textbf{1.587} \textcolor{green!60!black}{$\CIRCLE$}  \\
Sum                      & 1.513             & 1.087             & \textbf{2.133} \textcolor{green!60!black}{$\CIRCLE$}     & 1.510             & 1.387             & 1.526 & \textbf{2.501} \textcolor{green!60!black}{$\CIRCLE$}         & 1.275                      & 1.429           & 1.606         & 1.812       & {1.971} \\
QA                       & 1.332             & 1.200             & 1.343             & \textbf{1.960} \textcolor{green!60!black}{$\CIRCLE$}     & 1.252             & 1.267 & 1.178        & 1.208                    & 1.294           & 1.437         & 1.599       & \textbf{1.711} \textcolor{green!60!black}{$\CIRCLE$}  \\
Math                     & 1.483             & 1.228             & 1.378             & 1.486             & \textbf{2.454} \textcolor{green!60!black}{$\CIRCLE$}     & 1.571 & 1.653        & 1.533                   & 1.471           & 1.690         & 2.144       & \textbf{2.280} \textcolor{green!60!black}{$\CIRCLE$}  \\ \bottomrule
\end{tabular}%
}
\end{table*}

\begin{table*}[t]
\caption{(White-box SD) Speedup ratio relative to standard autoregressive greedy decoding on various datasets, comparing single specialized drafters, other methods (blockwise parallel decoding (BPD) \citep{bpd}, Medusa, Rescored-BPD (R-BPD) and Rescored-Medusa \citep{kim2024exploring}), and bandit-based drafter selection. Evaluations are conducted with a single NVIDIA A100 GPU under greedy decoding settings. }\vspace{-5pt}
\label{tab:main-white}
\resizebox{1.0\textwidth}{!}{%
\begin{tabular}{@{}c|lllll|l|cccc|cccl@{}}
\toprule
\multirow{2}{*}{Speedup} & \multicolumn{5}{c|}{Specialized Eagle drafters} & \multicolumn{1}{c|}{\textcolor{black}{Eagle}} & \multicolumn{4}{c|}{Other methods}                                                                               & \multicolumn{4}{c}{Bandit in MetaEagle}                                                         \\ \cmidrule(l){2-15} 
                         & \multicolumn{1}{l}{Eagle1} & \multicolumn{1}{l}{Eagle2} & \multicolumn{1}{l}{Eagle3} & \multicolumn{1}{l}{Eagle4} & \multicolumn{1}{l|}{Eagle5} & \multicolumn{1}{c|}{\textcolor{black}{OFA}} & \multicolumn{1}{l}{BPD} & \multicolumn{1}{l}{R-BPD} & \multicolumn{1}{l}{Medusa} & \multicolumn{1}{l|}{R-Medusa} & \multicolumn{1}{l}{Rand} & \multicolumn{1}{l}{EXP3} & \multicolumn{1}{l}{SH} & \multicolumn{1}{l}{UCB} \\ \midrule
Code                     & \textbf{3.934} \textcolor{green!60!black}{$\CIRCLE$}                      & 1.303                      & 1.776                      & 2.150                      & 2.427                      & \textbf{3.776} \textcolor{green!60!black}{$\CIRCLE$} & 1.963                   & 2.146                     & 2.661                      & 2.822                        & 2.310                    & 2.858                    &  3.650                 & {3.724}                    \\
Trans                    & 1.750                      & \textbf{2.496}  \textcolor{green!60!black}{$\CIRCLE$}                     & 2.281                      & 2.131                      & 1.714                      & 2.143 & 1.626                   & 1.442                     & 1.909                      & 2.056                        & 2.036                    & 2.171                    & 2.225                  & \textbf{2.318} \textcolor{green!60!black}{$\CIRCLE$}                  \\
Sum                      & 1.707                      & 1.507                      & \textbf{3.382} \textcolor{green!60!black}{$\CIRCLE$}      & 2.005                      & 1.589                      & 2.640 & 1.509                   & 1.455                     & 1.723                      & 2.136                        & 2.261                    & 2.261                    & 2.801                  & \textbf{3.057} \textcolor{green!60!black}{$\CIRCLE$}                  \\
QA                       & 1.842                      & 1.579                      & 2.181                      & \textbf{2.916} \textcolor{green!60!black}{$\CIRCLE$}                       & 1.783                      & 2.446 & 1.489                   & 1.468                     & 1.817                      & 2.154                        & 2.006                    & 2.128                    & 2.466                  & \textbf{2.641} \textcolor{green!60!black}{$\CIRCLE$}                   \\
Math                     & 2.584                      & 1.618                      & 2.337                      & 2.433                      & \textbf{3.903} \textcolor{green!60!black}{$\CIRCLE$} & 3.049                       & 1.696                   & 1.696                     & 2.142                      & 2.519                        & 2.449                    & 2.811                    & 3.339                  & \textbf{3.520} \textcolor{green!60!black}{$\CIRCLE$}                    \\ \bottomrule
\end{tabular}%
}
\end{table*}

\begin{table}[t]
\caption{Speedup ratio relative to standard AR greedy decoding on various multilingual datasets, comparing single specialized drafters, all drafting method, and our bandit-based algorithm (EXP3, UCB). Evaluations are conducted with a single NVIDIA A5000 GPU under greedy decoding settings. Drafter specializations: 1: Ja \textrightarrow En, 2: Ru \textrightarrow En, 3: De \textrightarrow En, 4: Fr \textrightarrow En, 5: Zh \textrightarrow En.}
\label{tab:mlin}
\resizebox{1.0\linewidth}{!}{%
\begin{tabular}{@{}l|lllll|lll@{}}
\toprule
\multirow{2}{*}{Speedup} & \multicolumn{5}{c|}{Sps with specialized drafters}                                                           & \multicolumn{3}{c}{Bandit in MetaSps}         \\ \cmidrule(l){2-9} 
                         & \textbf{Drafter1} & \textbf{Drafter2} & \textbf{Drafter3} & \textbf{Drafter4} & \textbf{Drafter5} & \textbf{DraftAll} & \textbf{EXP3} & \textbf{UCB}   \\ \midrule
Ja \textrightarrow En   & \textbf{1.757}   \textcolor{green!60!black}{$\CIRCLE$}  & 1.109   & 1.012  & 1.018  & 1.154   &  0.352   & \textbf{1.260}   \textcolor{green!60!black}{$\CIRCLE$}     & {1.161} \\
Ru \textrightarrow En & 1.055 & \textbf{1.817}   \textcolor{green!60!black}{$\CIRCLE$}  & 0.995     & 0.963  & 1.036             & 0.396 & 1.259  & \textbf{1.503}  \textcolor{green!60!black}{$\CIRCLE$} \\
De 
\textrightarrow En & 1.098 & 1.369 & \textbf{2.360} \textcolor{green!60!black}{$\CIRCLE$}  & 1.036 & 1.099 & 0.544 & 1.472  & \textbf{1.693}  \textcolor{green!60!black}{$\CIRCLE$} 
\\
Fr \textrightarrow En  & 1.106   & 1.445             & 1.108   & \textbf{2.135}  \textcolor{green!60!black}{$\CIRCLE$}   & 1.122   & 0.484 & 1.506 & \textbf{1.775}  \textcolor{green!60!black}{$\CIRCLE$} \\
Zh 
\textrightarrow En    
& 1.198 & 1.086 & 1.021 & 1.023 & \textbf{1.516}   \textcolor{green!60!black}{$\CIRCLE$}  
& 0.367 & 1.204 & \textbf{1.369}  \textcolor{green!60!black}{$\CIRCLE$} \\ \bottomrule
\end{tabular}%
}
\end{table}
\vspace{-10pt}

\section{Experiment}
\label{sec4:exp}

\subsection{Experimental setup}

\paragraph{Models}
We use Vicuna 7B \citep{vicuna} as the target LLM for both black-box and white-box SD. The primary difference between these approaches lies in the drafter's access to the target LLM: black-box drafters operate independently using only the target's final logits, while white-box drafters can access its intermediate activations and hidden states. For black-box SD, Vicuna 68M \citep{vicuna68m} serves as the base drafter architecture; each such drafter is trained on distinct task-specific datasets (generated via self-distillation from the target LLM, following \cite{seqKD, zhou2023distillspec, medusa, yi2024towards}) to ensure heterogeneity. For white-box SD, we integrate Eagle \citep{li2024eagle} with Vicuna 7B. Multiple Eagle drafters, sharing the same base architecture, are fine-tuned on separate task-specific datasets, also produced through self-distillation. As a baseline, we include a One-Size-Fits-All (OFA) drafter trained on a mixed dataset spanning all tasks, enabling evaluation of whether a single generalist drafter can match task-specialized ones. Further details are in \Autoref{sec:further_exp}.


\vspace{-5pt}
\paragraph{Number of drafts $N_{max}$} 
For black-box SD, we employ speculative sampling (SpS) \citep{speculative_decode2}, generating one draft candidate per drafter, termed as MetaSpS. For multi-draft methods like Medusa \citep{medusa} and Eagle \citep{li2024eagle}, we adhere to their original settings with a tree-attention mechanism. We employ the same tree structure for multiple Eagle drafters described in \cite{li2024eagle}, termed as MetaEagle.
Unless otherwise stated, all approaches utilize a maximum of 5 drafts ($N_{max}=5$).

\vspace{-5pt}

\paragraph{Evaluation}

We conduct experiments on NVIDIA A5000, A6000, and A100 GPUs. To ensure adaptability, particularly in multi-turn conversations, the bandit algorithm is re-initialized for every new query. Our evaluation encompasses two primary scenarios: (1) a diverse task suite comprising coding (Code) from MT-Bench \citep{llmjudge}, summarization (Sum) on CNN/Daily \citep{cnn_sum}, German-English translation (Trans) on WMT16 \citep{wmt16}, natural question answering (QA) \citep{kwiatkowski2019natural}, and mathematical reasoning (Math) on GSM8K \citep{gsm8k}; and (2) multilingual translation tasks as detailed in \Autoref{tab:motivation}, following \citet{yi2024towards}. For all evaluations, test datasets are randomly shuffled to simulate a non-stationary environment.

\vspace{-5pt}

\subsection{Main result}

\Autoref{tab:main} presents speedup ratios for black-box SD across diverse tasks. As expected, specialized drafters perform best on their respective tasks but degrade significantly on unrelated ones, highlighting the limitations of static selection. Our MetaSpS-UCB consistently achieves competitive speedup across all tasks, often matching or surpassing both specialized drafters and state-of-the-art speculative decoding techniques. This demonstrates the effectiveness of adaptive drafter selection, allowing MetaSpS-UCB to dynamically leverage the most suitable drafter for a given input. While the OFA drafter performs well, MetaSD outperforms OFA in most cases, reinforcing the advantage of task-specialized selection over a single generalized model. Among bandit algorithms, MetaSpS-UCB demonstrates superior performance compared to other baselines (EXP3, SH), even when accounting for switching costs and non-stationarity. This empirically supports our theoretical analysis, confirming the advantages of UCB in SD. 

\Autoref{tab:main-white} presents results for white-box SD using MetaEagle, where EAGLE drafters are integrated into the target LLM. Similar to MetaSpS case, specialized drafters perform well in their designated tasks but fail to generalize. In contrast, MetaEagle-UCB consistently achieves high speedup ratios, adapting effectively to different task distributions. This results demonstrates that our MetaSD algorithm can be applied to different SD architectures. 

Speedup ratios on multilingual tasks, shown in \Autoref{tab:mlin}, further corroborate these findings. As a baseline, we include DraftALL method, where in each round, all drafters are utilized for drafting. While this method generates optimal candidate at every round, verification increases linearly, resulting in significant decrease in speed-up ratios. In contrast, MetaSpS-UCB significantly outperforms this naive baseline, outperforming alternative bandit strategy (EXP3).


\begin{figure}[t]
\vspace{-5pt}
    \centering
        \centering
        \includegraphics[width=0.8\linewidth]{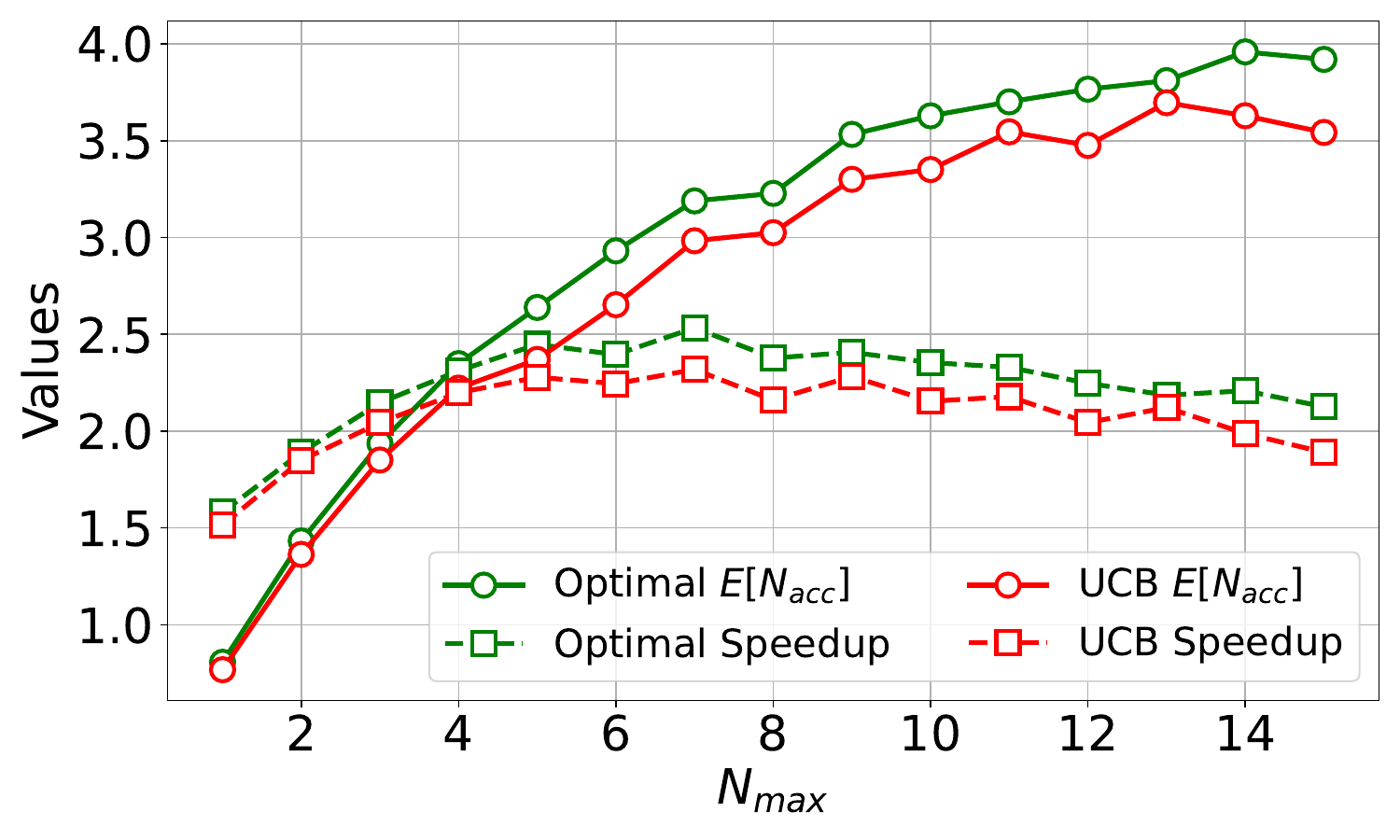}
        \vspace{-10pt}
        \caption{Ablations on $N_{max}$. `Optimal' represents the optimal drafter and UCB denotes MetaSps-UCB with BD.}
        \label{fig:ab_draft}
\vspace{-10pt}
\end{figure}

\begin{table}[t]
\centering
\caption{Speedup ratio comparison of MoE routing and MetaSD across different scenarios.}
\label{tab:routing_comparison}
\resizebox{1.0\linewidth}{!}{%
\begin{tabular}{l|ll|ll|cc}
\toprule
\multirow{2}{*}{\textbf{Routing Method}} & \multicolumn{2}{c|}{\textbf{In-domain}} & \multicolumn{2}{c|}{\textbf{Perturbed prompts}} & \multicolumn{2}{c}{\textbf{Out-of-Domain (Sps/Eagle)}} \\ 
\cmidrule(lr){2-3} \cmidrule(lr){4-5} \cmidrule(lr){6-7}
& \textbf{SpS} & \textbf{Eagle} & \textbf{SpS} & \textbf{Eagle} & \textbf{RAG} & \textbf{Finance} \\ \midrule

\multicolumn{7}{l}{\textit{MOE routing (BERT)}} \\ \midrule
Finetuned BERT-Base  & 1.772  & 2.759  & 1.456  & 2.245  & 1.645 / 2.132  & 1.410 / 2.413\\
Finetuned BERT-Large & 1.741  & 2.797  & 1.567  & 2.323  & 1.638 / 2.110  & 1.398 /2.426 \\ \midrule

\multicolumn{7}{l}{\textit{MetaSD (Bandit-based policy routing)}} \\ \midrule
MetaSpS-UCB           & \textbf{1.912} \textcolor{green!60!black}{$\CIRCLE$}  & —    & \textbf{1.820} \textcolor{green!60!black}{$\CIRCLE$}  & —    & \textbf{1.799} \textcolor{green!60!black}{$\CIRCLE$}  & \textbf{1.436} \textcolor{green!60!black}{$\CIRCLE$}  \\
MetaEagle-UCB         & —    & \textbf{3.052} \textcolor{green!60!black}{$\CIRCLE$}  & —    & \textbf{2.912} \textcolor{green!60!black}{$\CIRCLE$}  & \textbf{2.238} \textcolor{green!60!black}{$\CIRCLE$}  & \textbf{2.517} \textcolor{green!60!black}{$\CIRCLE$}  \\
\bottomrule
\end{tabular}%
}
\vspace{-5pt}
\end{table}

\begin{table}[t]
        \centering
        \captionof{table}{Average of speedup ratio comparing the BE and BD rewards for MetaSD-UCB with both SpS and EAGLE drafters over 3 different runs.}
        \vspace{-5pt}
        \resizebox{0.44\textwidth}{!}{%
        \begin{tabular}{@{}c|ll|ll@{}}
        \toprule
        \multirow{2}{*}{Task} & \multicolumn{2}{c|}{MetaSpS-UCB} & \multicolumn{2}{c}{MetaEagle-UCB} \\ \cmidrule{2-5}
                              & \multicolumn{1}{c}{{BE}}  & \multicolumn{1}{c|}{{BD}}  & \multicolumn{1}{c}{{BE}}   & \multicolumn{1}{c}{{BD}}   \\ \midrule
        Code                  & $2.052_{\pm 0.004}$        & $2.231_{\pm 0.006}$ \textcolor{green!60!black}{$\CIRCLE$}     & $3.590_{\pm 0.017}$      & $3.661_{\pm 0.003}$ \textcolor{green!60!black}{$\CIRCLE$}         \\
        Trans                 & $1.465_{\pm 0.004}$        & $1.554_{\pm 0.001}$ \textcolor{green!60!black}{$\CIRCLE$}     & $2.228_{\pm 0.009}$ \textcolor{green!60!black}{$\CIRCLE$}      & $2.201_{\pm 0.001}$         \\
        Sum                   & $1.770_{\pm 0.002}$        & $1.929_{\pm 0.001}$ \textcolor{green!60!black}{$\CIRCLE$}     & $3.038_{\pm 0.005}$      & $3.043_{\pm 0.001}$ \textcolor{green!60!black}{$\CIRCLE$}         \\
        QA                    & $1.591_{\pm 0.003}$        & $1.698_{\pm 0.001}$ \textcolor{green!60!black}{$\CIRCLE$}     & $2.629_{\pm 0.003}$ \textcolor{green!60!black}{$\CIRCLE$}      & $2.608_{\pm 0.001}$         \\
        Math                  & $1.992_{\pm 0.003}$        & $2.238_{\pm 0.002}$ \textcolor{green!60!black}{$\CIRCLE$}     & $3.461_{\pm 0.009}$      & $3.515_{\pm 0.001}$ \textcolor{green!60!black}{$\CIRCLE$}         \\ \bottomrule
        \end{tabular}%
        }
        \vspace{-10pt}
        \label{tab:reward}
\end{table}

\subsection{Comparing with MoE routing} 

Classification-based routing, analogous to Mixture-of-Experts (MoE) systems where experts (drafters) are selected via encoder outputs (e.g., BERT; \citet{bert}), serves as another baseline. \Autoref{tab:routing_comparison} shows that such fine-tuned classifiers achieve competitive speedups on in-domain tasks. However, their performance substantially degrades on perturbed prompts and out-of-domain (OOD) tasks, where MetaSD maintains robustness. This disparity arises because MoE routing relies on input-level embeddings for a static drafter assignment, struggling with prompt variations and novel task distributions. Conversely, MetaSD dynamically adapts drafter selection at each decoding step, utilizing the Block Divergence (BD) reward to capture real-time, token-level alignment. 
These findings demonstrates that MetaSD's dynamic, alignment-driven selection consistently surpasses static classification-based approaches in complex scenarios. 
\vspace{-7pt}

\begin{figure*}[t]
    \centering
    \includegraphics[width=0.98\linewidth]{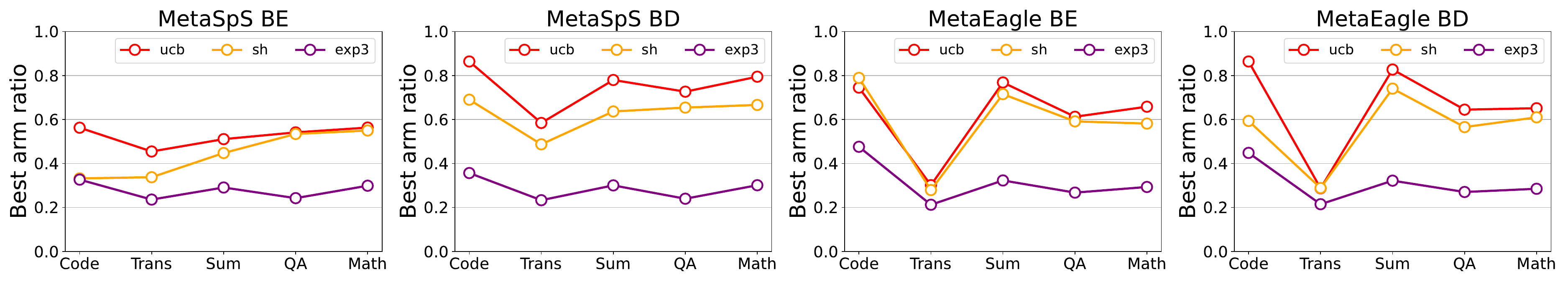}
    \vspace{-10pt}
    \caption{Best arm ratio over rounds for various configurations. (Left) MetaSpS (black-box SD) with BE and BD rewards. (Right) MetaEagle (white-box SD) with BE and BD rewards.}
    \vspace{-10pt}
    \label{fig:best_arm}
\end{figure*}

\subsection{Ablation study}

\paragraph{Draft length} 
To analyze the impact of draft length on the performance of MetaSps-UCB with the BD reward, we conduct experiments on the Code task using 5 drafters following the same setting in \Autoref{tab:main}, where we vary the draft length $N_{max}$.
\Autoref{fig:ab_draft} shows that increasing the draft length initially leads to higher $\mathbb{E}[N_{acc}]$ and speedup due to the increased parallelism in token generation. However, beyond a certain threshold, further increasing the draft length yields diminishing returns and can even decrease performance due to the higher probability of rejection and the associated overhead. 

\paragraph{Reward design}
We compare MetaSD with different types of rewards. As shown in \Tableautoref{tab:reward}, BD consistently outperforms BE in both MetaSps and MetaEagle scenarios, while the performance gap is less pronounced in MetaEagle-UCB. We attribute this difference to Eagle’s tree-attention mechanism, which explores multiple decoding paths and implicitly mitigates the limitations of BE.
\vspace{-5pt}




\begin{table}[t]
\centering
\caption{White-box performance with perturbed prompts (speedup relative to greedy decoding on A100).}
\vspace{-7pt}
\label{tab:whitebox_perturbed}
\resizebox{0.94\linewidth}{!}{%
\begin{tabular}{l|ccccc|l|l}
\toprule
\textbf{Task} & \textbf{Eagle1} & \textbf{Eagle2} & \textbf{Eagle3} & \textbf{Eagle4} & \textbf{Eagle5} & \textbf{OFA} & \textbf{MetaSD-UCB} \\ 
\midrule
\textbf{Code} & 3.748 & 1.335 & 1.697 & 2.030 & 2.451 & \textbf{3.626} \textcolor{green!60!black}{$\CIRCLE$} & {3.563} \\ 
\textbf{Trans} & 1.757 & 2.553 & 2.161 & 2.035 & 1.677 & 2.293 & \textbf{2.375} \textcolor{green!60!black}{$\CIRCLE$} \\ 
\textbf{Sum} & 1.671 & 1.529 & 3.084 & 1.939 & 1.639 & 2.648 & \textbf{2.742} \textcolor{green!60!black}{$\CIRCLE$} \\ 
\textbf{QA} & 1.837 & 1.616 & 2.094 & 2.932 & 1.722 & 2.439 & \textbf{2.516} \textcolor{green!60!black}{$\CIRCLE$} \\ 
\textbf{Math} & 2.511 & 1.664 & 2.207 & 2.913 & 3.844 & 3.136 & \textbf{3.366} \textcolor{green!60!black}{$\CIRCLE$} \\ 
\bottomrule
\end{tabular}%
}
\vspace{-10pt}
\end{table}

\paragraph{Robustness to rephrased prompts} 
To simulate real-world conditions, we assess MetaSD's performance on perturbed prompts: queries slightly rephrased by GPT-4o while retaining their original meaning. These GPT-4o generated variations introduce natural linguistic shifts both across and within tasks. For instance, a translation prompt like `Translate German to English' might become `Convert this text from German to English' and `Summarize:' could be rephrased as `Provide a concise overview of the following text:'.
While such prompt perturbations generally degrade the performance of all evaluated methods—underscoring the limitations of static drafter selection—MetaSD demonstrates robust performance, consistently outperforming baselines. This robustness stems from MetaSD's dynamic, token-level adaptation at inference, which mitigates the impact of prompt variations by not relying solely on initial prompt characteristics (details in \Autoref{tab:whitebox_perturbed} and \Autoref{tab:blackbox_perturbed}). 

\vspace{-5pt}
\paragraph{Best arm ratio} 
We evaluate the best arm ratio, which measures how frequently the optimal drafter is selected. \Autoref{fig:best_arm} tracks the evolution of this ratio across decoding rounds, comparing different reward functions and bandit algorithms for both MetaSpS and MetaEagle. Across all settings, UCB consistently converges to the optimal drafter faster than other bandit algorithms, with this effect being particularly pronounced in the MetaSpS setting. Additionally, the BD reward leads to a higher best arm ratio than BE. This aligns with our earlier findings that BD better captures the alignment dynamics in SD. 

Further long-context experiments and stochastic decoding (temperature $>0$) are in \Autoref{sec:further_exp}.

\vspace{-3pt}
\section{Discussion}

\paragraph{Memory bandwidth bound}
A potential concern with MetaSD is increased memory storage from loading multiple drafter models. However, our approach introduces minimal memory overhead. Storing all drafter weights in GPU DRAM mitigates frequent, slow system memory accesses—a key LLM bottleneck. For example, with a 7B target LLM and float16 precision, our MetaEagle framework uses at most 19GB of GPU DRAM, compared to 17GB for a single Eagle drafter. This minor increase in storage does not translate to higher memory bandwidth requirements during inference, as only one drafter is active and accessing VRAM at any time.

\paragraph{Throughput \& efficiency}
MetaSD sustains high throughput efficiency, even in distributed, batch-processing environments. \Autoref{tab:throughput_merged} shows MetaEagle-UCB surpasses the OFA Eagle baseline (e.g., 2.427 vs. 2.235 speedup). This performance is rooted in optimized drafter management: preloading parameters into DRAM minimizes memory transfers, ensuring consistent memory bandwidth irrespective of using single or multiple drafters. While MetaEagle-UCB remains competitive in heterogeneous batch settings (\Autoref{tab:throughput_merged}), a slight throughput reduction occurs due to increased I/O from drafter switching.
Nevertheless, MetaSD prioritizes latency reduction for multi-task SD over raw throughput, as faster individual responses, even at moderately lower batch efficiency, typically enhance user experience more than higher-latency processing on a fully utilized GPU. 
MetaSD also features efficient KV cache management with minimal switching overhead. Eagle drafters, for instance, compute only a single KV cache layer for new tokens, making prefilling lightweight (\Autoref{sec:stream}). 



\clearpage
\section*{Limitations}

While our framework supports general LLM architectures, broader evaluation is needed to evaluate its robustness across various models. The current single-query design can be extended to batched inference, enabling efficient instance-level drafter selection for mixed-task batches. Also, computation overhead might increase with a larger number of drafters. We provide a detailed discussion of these limitations and potential solutions in \Autoref{sec:lim}.

\bibliography{ref}

\clearpage
\appendix
\section{Overview of appendix}\label{sec:appendix}
This appendix provides supplementary material that expands on the main contents. Each section is designed to complement the research presented:

\begin{itemize}
    \item \textbf{\Autoref{sec:broad}}: Discusses the broader impact and further motivations of our work.

    \item \textbf{\Autoref{sec:lim}}: Acknowledges the limitations of our current approach and outlines promising directions for future research.

    \item \textbf{\Autoref{sec:prelim}}: Provides a prepliminary for speculative sampling (SpS).

    \item \textbf{\Autoref{sec:related}}: Provides a comprehensive review of related work, situating our contributions within the broader context of speculative decoding with LLMs and multi-armed bandit research.

    \item \textbf{\Autoref{sec:further_exp}}: Details additional experimental setups, offering further insights into the performance, behavior of our proposed method, and additional experimental results including long-context experiments, out-of-domain experiments, and evaluations with perturbed prompts.

    \item \textbf{\Autoref{sec:proof}}: Presents rigorous mathematical proofs for the theoretical guarantees established in the main paper.
    
    \item \textbf{\Autoref{app:extensions}}: Explores extensions to the MetaSD framework, addressing practical considerations such as switching costs and non-stationary environments.

    \item \textbf{\Autoref{sec:discuss}}: Offers further discussion and analysis of the results presented in the main paper, potentially including additional insights, interpretations, or comparisons.
\end{itemize}

\paragraph{Ethics statement} This work primarily focuses on improving the efficiency of LLMs through algorithmic advancements and does not directly involve sensitive data or applications that could raise immediate ethical concerns. 

\paragraph{Reproducibility statement}
To facilitate reproducibility, we provide a comprehensive exposition of the materials and experimental configurations within this paper and its accompanying appendices. The organization is as follows:

\begin{itemize}
    \item \textbf{\Autoref{sec2}} - This section presents the problem statement and pseudocode for the MetaSD framework.
    \item \textbf{\Autoref{sec3:method}} \& \textbf{\Autoref{appendix:other_extensions}} - This section provide detailed MAB algorithms for the MetaSD framework under various scenarios.
    \item \textbf{\Autoref{sec4:exp}} - This section elaborates on the implementation specifics, including the pre-trained models, datasets, and evaluation metrics.
    \item \textbf{\Autoref{sec:further_exp}} - This section delves into additional details of the experimental settings.
\end{itemize}

The source code for this paper will be made publicly available under the MIT License upon publication.

\begin{figure*}[t]
    \centering
  \includegraphics[width=0.6\linewidth]{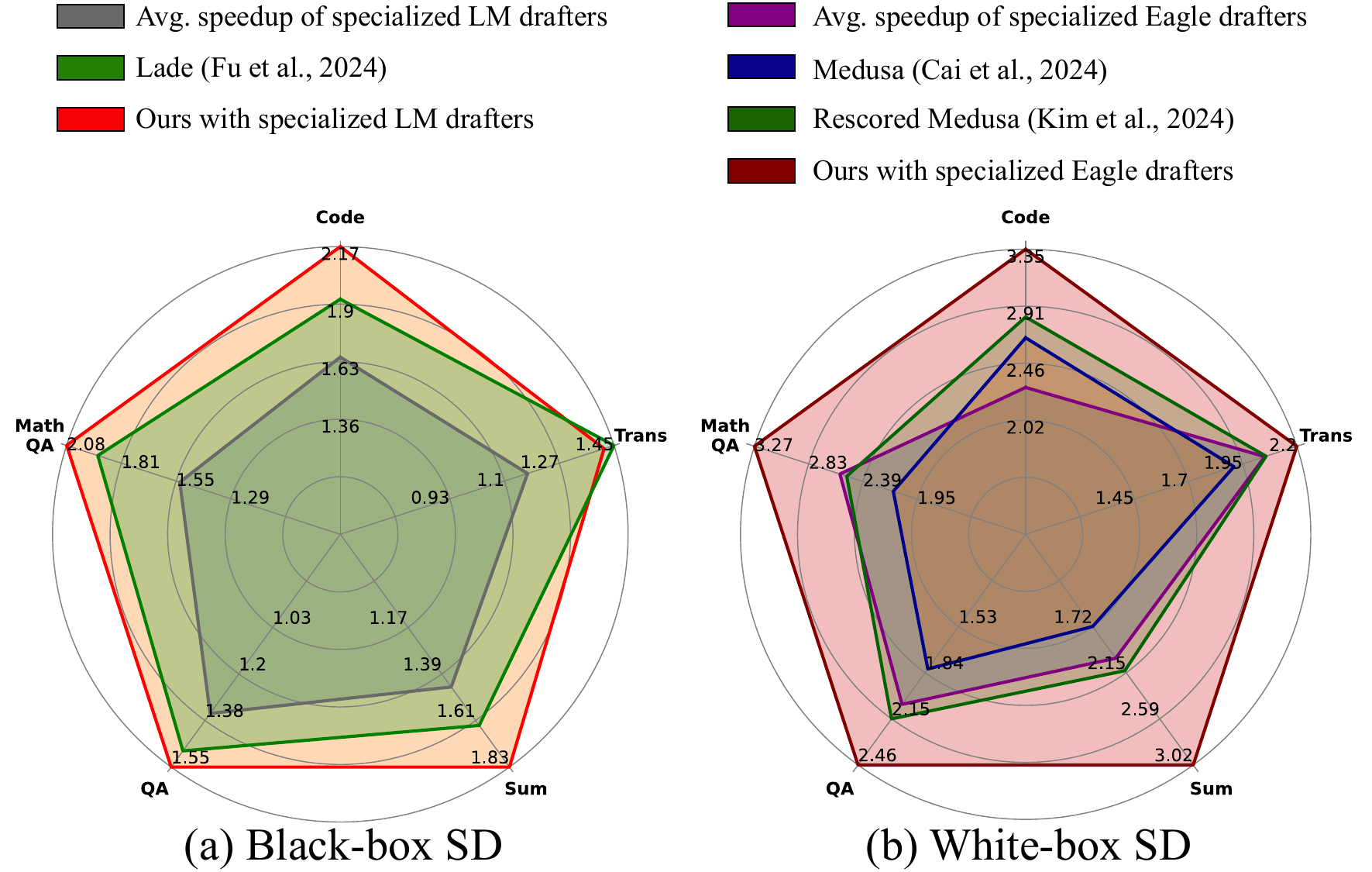}
  \caption{Comparison of average speedup ratios by various methods relative to standard autoregressive greedy decoding on a single NVIDIA A100 GPU. The target model is Vicuna 7B v1.3. (a) Results for black-box methods. (b) Results for white-box methods. Detailed settings are in \textbf{\CAutoref{sec4:exp}}.}
  \label{fig:result_overall}
\end{figure*} 

\section{Broader impact and further motivation} \label{sec:broad}

\subsection{Broader impact}

\paragraph{Generalized speedup} Our MetaSD framework for multi-drafter speculative decoding has the potential to enhance the robust speedup capabilities of LLMs (\Autoref{fig:result_overall}). By dynamically selecting from a diverse pool of drafters, the system can better adapt to a wider range of tasks and input contexts, potentially leading to reduced latency on unseen or less frequently encountered scenarios. This increased generalization could benefit various applications, such as machine translation, summarization, and creative writing, where models are often required to handle diverse and unpredictable inputs.

\paragraph{Efficiency} The primary goal of our framework is to accelerate the inference process of LLMs. By leveraging speculative decoding with multiple drafters, we aim to achieve significant speedup gains compared to traditional single-drafter approaches. This improved efficiency could enable the deployment of large language models in resource-constrained environments or real-time applications where latency is critical. Faster inference could also facilitate broader accessibility to powerful language models, making them more practical for a wider range of users and use cases.

\paragraph{Systematic impact}

Our work remains various potential societal impact. Faster and more efficient language models could lead to advancements in various domains, such as healthcare, education, and customer service, where natural language understanding and generation play crucial roles.

\begin{table*}[t]
    \centering
    \caption{Speedup ratio relative to the standard autoregressive greedy decoding on various multilingual datasets following \cite{yi2024towards} where target model is Vicuna 7B v1.3 and the drafter is decoder-only 68M language model:
    Japanese (Ja)\textrightarrow English (En) \citep{japanese}, Russian (Ru)\textrightarrow En, German (De)\textrightarrow En \citep{wmt16}, French (Fr)\textrightarrow En \citep{wmt14}, and Chinese (Zh)\textrightarrow En \citep{wmt19}. Evaluations are conducted with a NVIDIA A5000 GPU.}
    \label{tab:motivation}
    \resizebox{0.6\linewidth}{!}{%
    \begin{tabular}{@{}l|lllll@{}}
    \toprule
    Dataset & Ja-drafter & Ru-drafter & De-drafter & Fr-drafter & Zh-drafter \\ \midrule
    Ja \textrightarrow En      & \textbf{1.757} \textcolor{green!60!black}{$\CIRCLE$}     & 1.109      & 1.012      & 1.018      & 1.154      \\
    Ru \textrightarrow En      & 1.055      & \textbf{1.817} \textcolor{green!60!black}{$\CIRCLE$}      & 0.995      & 0.963      & 1.036      \\
    De \textrightarrow En      & 1.098      & 1.369      & \textbf{2.360} \textcolor{green!60!black}{$\CIRCLE$}      & 1.036      & 1.099      \\
    Fr \textrightarrow En      & 1.106      & 1.445      & 1.108      & \textbf{2.135} \textcolor{green!60!black}{$\CIRCLE$}      & 1.122      \\
    Zh \textrightarrow En      & 1.198      & 1.086      & 1.021      & 1.023      & \textbf{1.516} \textcolor{green!60!black}{$\CIRCLE$}      \\ \bottomrule
    \end{tabular}%
    }
\end{table*} 

\subsection{Further motivation}

This subsection provides another line of research motivation in \Autoref{sec:motivation}. MetaSD addresses the practical challenge of managing diverse and heterogeneous drafters often found in real-world systems (e.g., HuggingFace, Google Cloud, Azure, AWS, etc..). These drafters, pre-trained with varying objectives and frequently lacking detailed training documentation, pose significant obstacles to deployment frameworks that assume uniformity or rely on static selection strategies (e.g., rule-based strategies).

MetaSD provides a robust and adaptive mechanism for optimizing performance in environments characterized by task variability and drafter heterogeneity. By operating dynamically at the token level, it ensures task-specific efficacy without requiring retraining or fine-tuning of existing drafters. This flexibility allows MetaSD to excel in scenarios where traditional methods struggle, such as managing pre-trained drafters with black-box environment regarding the information for the use of drafters such as incomplete training histories or handling tasks with unpredictable distributions. Unlike frameworks that depend on rigid assumptions or predefined similarity metrics, MetaSD makes serving system particularly well-suited for organizations leveraging public repositories or heterogeneous resources.

\paragraph{Multilingual example} In scenarios where the drafter's strengths do not align well with the task at hand, its predictions may be less accurate, leading to fewer accepted tokens and diminished speedup benefits of SD. As \Autoref{tab:motivation} shows, a drafter trained on a specific language pair exhibits significantly higher speedup on that pair compared to others, highlighting the need for a more adaptive approach. Therefore, integrating multiple heterogeneous drafters into the SD framework can potentially address this limitation.

\section{Limitation \& future work} \label{sec:lim}

\subsection{Limitation}

\paragraph{Scalability} It is important to acknowledge that the scalability of our approach may be challenged when dealing with an extremely large number of drafters. In such scenarios, the computational overhead associated with evaluating multiple drafters at each step could potentially outweigh the speedup benefits. To address this limitation, future work could explore strategies for pre-selecting a smaller subset of promising drafters based on initial query analysis or other heuristics, before applying the MetaSD framework. This would help to maintain the efficiency and scalability of our approach even in the presence of a vast pool of potential drafters.

\paragraph{Diverse target LLMs} While our framework is designed to be agnostic to the target LLM architecture, extensive empirical evaluation across a wider range of LLMs is needed. Future work will assess the generalizability of our approach across different LLM architectures and sizes.


\paragraph{Batched inference} Our current implementation primarily focuses on single-query scenarios. However, adapting the MetaSD framework to batched inference—where different tasks are mixed within a single batch—presents an opportunity for significant efficiency gains. Unlike static single-drafter-based SD, which can suffer from suboptimal performance when handling diverse tasks in a batch, MetaSD dynamically optimizes drafter selection at the instance level. This ensures consistently high throughput, even in high-throughput batched settings.

\subsection{Future work}

\paragraph{Reward design and exploration-exploitation balance} The choice of reward function and the exploration-exploitation tradeoff significantly impact the performance of MetaSD. Exploring alternative reward designs and adaptive exploration strategies could lead to further improvements in speedup and adaptability.

\paragraph{Non-stationarity} While we briefly discuss handling non-stationarity in~\Autoref{app:extensions}, more sophisticated techniques could be investigated. This could involve incorporating change detection mechanisms or developing MAB algorithms specifically tailored to the non-stationary nature of language generation.

\paragraph{Contextual bandits} Our current framework primarily relies on observed rewards for drafter selection. Incorporating additional contextual information, such as the query type, user history, or drafter metadata, could lead to more informed decisions. Integrating contextual bandit algorithms into the MetaSD framework is a promising direction for future research.

\paragraph{Reinforcement learning (RL) formulation} The MetaSD framework could also be formulated as an RL problem, where the agent learns to select the optimal drafter based on the current state (input context and generated text) to maximize a long-term reward (e.g., overall speedup). Exploring RL-based approaches could potentially uncover novel strategies for adaptive drafter selection.

\paragraph{MAB framework over different SD algorithms} Our current work focuses on applying the MAB framework to select among heterogeneous drafters sharing the same SD algorithm (e.g., SpS or EAGLE). While this approach demonstrates significant benefits, it is worth noting that the MAB framework could potentially be extended to encompass a more diverse set of SD algorithms (e.g., Sps, PLD, Lookahead, EAGLE, and others). This would involve designing a reward function and selection strategy that can effectively compare and choose between fundamentally different SD approaches, each with its own strengths and weaknesses. Exploring this broader application of the MAB framework in speculative decoding is an interesting direction for future research.
\section{Preliminary: speculative sampling} \label{sec:prelim}

Speculative decoding accelerates LLM inference by employing a smaller draft model to predict future tokens, which are then verified by the target LLM. This parallel token generation can significantly reduce latency, especially when the draft model's predictions align well with the target LLM's output distribution.

\begin{algorithm*}[!t] 
\DontPrintSemicolon
    \caption{Speculative sampling (SpS)}
    \label{alg:sps}
\begin{algorithmic}[1]
    \SetKwInOut{Input}{Input}
    \INPUT: Target LLM $\mathcal{M}_p$, a small drafter $\mathcal{M}_q$, initial prompt sequence $x_1, \ldots, x_l$ and target sequence length $B$.\\
    \WHILE{$l < B$} 
        \FOR{$e \leftarrow 1, \ldots, E$}
            \STATE $x_{l_e} \sim \mathcal{M}_q(x|x_1, \ldots, x_l, x_{l_1}, \ldots, x_{l_{e-1}})$
        \ENDFOR
        \STATE In parallel, compute $E+1$ sets of logits drafts $x_{l_1}, \ldots, x_{l_E}$ with the target LLM $\mathcal{M}_p$: \\
         \quad \quad $\mathcal{M}_p(x|x_1, \ldots, x_l), \mathcal{M}_p(x|x_1, \ldots, x_l, x_{l_1}), \ldots, \mathcal{M}_p(x|x_1, \ldots, x_l, x_{l_1}, \ldots, x_{l_E})$
        \FOR{$j \leftarrow 1, \ldots, E$}
            \STATE Sample $r \sim U[0,1]$ from a uniform distribution
            \IF{$r < \min(1, \frac{\mathcal{M}_p(x|x_1, \ldots, x_{l+j-1})}{\mathcal{M}_q(x|
            x_1, \ldots, x_{l+j-1})})$}
                \STATE Set $x_{l+j} \leftarrow x_{l_{j}}$ and $l \leftarrow l +1$
            \ELSE
                \STATE Sample $x_{l+j} \sim (\mathcal{M}_p(x|x_1, \ldots, x_{l+j-1}) - \mathcal{M}_q(x|x_1, \ldots, x_{l+j-1}))_{+}$ and exit for loop.
            \ENDIF
        \ENDFOR
        \STATE If all tokens $x_{l+1}, \ldots, x_{l+E}$ are accepted, sample extra token\\  $x_{l+E+1} \sim \mathcal{M}_p(x|x_1, \ldots, x_l, x_{l+E})$ and set $l \leftarrow l+1$
    \ENDWHILE
\end{algorithmic}
\end{algorithm*}

\Algautoref{alg:sps} outlines the speculative sampling procedure \citep{speculative_decode, speculative_decode2}. Given an initial prompt sequence, the draft model generates $E$ potential future tokens. Concurrently, the target LLM computes the probabilities of these tokens, as well as the probability of its own prediction for each subsequent token position. 
A drafted token is accepted if its probability, according to the target LLM, exceeds a certain threshold. This threshold is determined by comparing the target LLM's probability for the drafted token to both the draft model's prediction and a random sample, ensuring only high-confidence drafts are accepted. If a drafted token is rejected, the target LLM samples a token from the residual distribution, which represents the difference between its own prediction and the draft model's. This process iterates until the desired sequence length is reached.

Speculative sampling allows the target LLM to process multiple tokens in parallel by drafting them in advance, reducing the overall generation time. When the draft model's predictions are accurate, a significant portion of the generated tokens are accepted, leading to substantial speedup. The verification step and residual sampling ensure that the final generated sequence remains consistent with the target LLM's distribution, preserving generation quality. Speculative sampling provides a foundation for our proposed framework, where we extend this approach to incorporate multiple drafters and dynamically select the optimal one using MAB algorithms.

\section{Related work}\label{sec:related}

\subsection{Speculative decoding}

Speculative decoding employs a draft-then-verify paradigm to enhance LLM inference speed. This approach tackles the latency bottleneck in autoregressive decoding, where extensive memory transfers for each token generation lead to underutilized compute resources \citep{memorybound}. Pioneering works by \cite{speculative_decode, speculative_decode2} introduced speculative decoding and sampling, enabling lossless acceleration of diverse sampling methods. These methods leverage smaller models within the same model family (e.g., T5-small for T5-XXL) without additional training. Models like Eagle \citep{li2024eagle} and Medusa \citep{medusa} integrate lightweight feedforward neural network heads into the LLM architecture, enabling early drafting of token sequences and improving throughput. Recent advancements have further refined speculative decoding by adopting judge model framework for accepting the draft tokens~\citep{llmjudge} or combining with reward guided decoding in reasoning tasks~\citep{liao2025reward}.  

Despite their efficacy, these methods often rely on a single drafter or a fixed set, limiting adaptability to diverse tasks and input contexts. \citet{yi2024towards} propose specialized drafters based on the self-distilled dataset training, but dynamically selecting among heterogeneous drafters remains an open challenge. \citet{onspec} suggest online training of specialized drafters, but their reliance on query-based classification and limited speedup gains highlight the need for a more comprehensive solution.

\subsection{Bandit algorithms}

\paragraph{Multi-armed bandit}
Multi-armed bandit (MAB) problem has been extensively studied for decades with various settings. For stochastic MAB setting,~\citet{lai1985asymptotically} and \citet{agrawal1995sample} provided asymptotic optimal regret bounds that is logarithmic to the total round $T$ and \citet{auer2002finite,audibert2007tuning} and \citet{honda2010asymptotically} proved 
this result also holds when $T$ is finite. For another variant, EXP3 algorithm~\citep{auer2002nonstochastic} proves the optimal regret bound in adversarial environment where reward distribution of each arm can change by adversary in every round.

\paragraph{Budgeted bandit}
The budgeted MAB problem address a bandit scenario where each arm pull yields both a reward and a cost drawn from individual distributions. Here, the goal is to maximize the cumulative reward until sum of the cost reaches the budget. Then, the optimal arm would be the one with the highest reward-to-cost ratio. $\epsilon$-First policies~\citep{tran2010epsilon} and KUBE~\citep{tran2012knapsack} assumed a non-stochastic fixed cost for each arm pull.~\citet{ding2013multi} provided UCB-BV algorithm where cost for each arm is assumed to be a bounded discrete random variable.

\paragraph{Bandits with switching costs}
In real-world scenarios, a cost may be incurred whenever switching arms. This is related to the MAB problem with switching costs.~\citep{dekel2014bandits,gao2019batched,rouyer2021algorithm,esfandiari2021regret,amir2022better}. For stochastic MAB,~\citet{gao2019batched} and \citet{esfandiari2021regret}
assume a fixed cost is incurred whenever switching arms. They proved an instance-dependent regret bound ${O}(\log{T})$ which does not depend on the unit switching cost value.

\paragraph{Pure exploration}
Pure exploration or best arm identification (BAI) problems \citep{even2002pac,even2006action,Successive_rejects} aim to explore as much as possible throughout the round to obtain the best arm at the end of the round. This contrasts with the traditional MAB objective which is maximizing cumulative reward. \citet{even2002pac, mannor2004sample, even2006action} investigated pure exploration in MAB under the PAC learning framework. 
BAI problems are primarily categorized into two settings. First, in the fixed budget setting \citep{Successive_rejects,Sequential_Halving,carpentier2016tight}, the goal is to minimize the chance of selecting sub-optimal arms within a fixed number of rounds. The other problem targets fixed confidence setting \citep{Sequential_Halving,jamieson2014lil,garivier2016optimal,chen2017towards} whose objective is to minimize number of rounds required to achieve a desired confidence level.

\paragraph{Non-stationary bandit}
Non-stationary bandit problems assume that reward distribution of each arm changes over time. The goal in non-stationary bandit problems is to find a balance between exploration and exploitation while carefully managing past information to adapt to the dynamic environment. Among the earliest works, \citet{gittins1974dynamic} assumed that only the best arm changes over time. This assumption was later relaxed in~\citet{whittle1988restless}, where the authors allow the mean reward for each arm to change at every round. \citet{slivkins2008adapting} assumed reward distribution follows a Brownian motion and established a regret upper bound that grows linear in rounds. Another line of works quantifies the degree of non-stationarity in the bandit instance by assuming a fixed value of $L$ which represents a number of times reward distributions change. \citet{auer2002nonstochastic} suggested EXP3.S algorithm and proved regret upper bound with given $L$ but slightly worse when $L$ is not given. \cite{kocsis2006discounted} suggested Discounted-UCB, where they obtain reward estimates with discounting factor over time. \citet{garivier2011upper} introduced Sliding-window UCB, where they used fixed-size window to retain information of the rounds within the window for estimating mean reward. ADSWITCH in~\citet{auer2019adaptively} is proven to be nearly minimax optimal, achieving the state-of-the art regret bound without any prior knowledge of $L$.

\subsection{Large language models and bandits}\label{app:concurrent}
Recently, several works have made connections between LLMs with bandits using the emergent abilities of LLMs. One side of works utilize LLM as an agent to solve decision making problems combining with bandit framework~\citep{baheri2023llms,felicioni2024importance,xia2024beyond,park2024llm, ong2024routellm}. Several works use bandit algorithms for improve the performance guarantee of LLMs with certain tasks such as for efficient prompt optimization \citep{shi2024efficient} and online model selection \citep{xia2024llm}. The latter concept has been further developed into LLM routing~\citep{hu2024routerbench, panda2025adaptive, jitkrittum2025universal}.

\textbf{Concurrent works} have leveraged the idea of bandit framework for SD. \citet{liu2024speculative} used Thomson sampling algorithm (which is one of the most popular bandit algorithm) to adaptively choose maximum candidate length $N_{max}$ combining with early-exit framework. 
\citet{huang2024context} assume existence of multiple drafters and formulate SD as a contextual bandit problem. However, they rely on collecting offline samples for the policy learning which can be costly. Furthermore, their approach is regarded as a classification problem that the selected drafter is fixed in a single query. 

Most relevant to ours, \citet{hou2025banditspec} propose BANDITSPEC, which also leverage bandit algorithms for SD with multi-drafter scenario. 
While \citet{hou2025banditspec} reported similar observations to ours but our work having key differences from them. First, while \citet{hou2025banditspec} modify the confidence range of the UCB algorithm to control the regret analysis estimating mean accepted tokens, our theoretical analysis is built upon the budgeted bandit setting~\citep{heyden2024budgeted} \textbf{which can be applied to more general reward shaping scenarios.} Moreover, our novel analysis on reward design shows that using estimation of mean acceptance rate as a reward (BD reward) which is obtained from the fine-grained model logits (not just the number of accepted tokens), improves the performance of the speculative decoding both practically and theoretically. Finally, \Autoref{theorem:regret_upperbound_SD-UCB} holds for any token generation instance (i.e, any realization of verified token instances) which is stronger than the expected regret over all token generation instances as done in \citet{hou2025banditspec}.

\section{Experiment detail} \label{sec:further_exp}
Since our experiment dataset covers broad and large dataset, we conduct single run experiment for the result unless specified otherwise.
\subsection{Training specialized drafters with self-distilled data}

Following the \citet{yi2024towards}, we use their training strategy consisting of two steps:

\begin{enumerate}
    \item Pretraining drafters on a portion of C4 dataset \citep{c4} and ShareGPT dataset \citep{sharegpt}.

    \item Finetuning the models with self distilled data having the target task with templates.
\end{enumerate}

\paragraph{Self-distilled data} Following prior work \citep{seqKD, zhou2023distillspec, medusa, yi2024towards}, we generate the training data for specialized drafters through self-distillation from the target LLM. To capture the full spectrum of its output variability, we generate multiple responses at various temperatures—\{0.0, 0.3, 0.7, 1.0\}.  We utilize this self-distilled dataset for training both independent small drafter models and dependent Eagle drafters. For Eagle-specific training details, we adhere to the settings outlined in the original Eagle paper \citep{li2024eagle}.

\subsection{Drafter details}
All independent drafters are based on a decoder-only Llama transformer model with 68M parameters. The model configuration includes 2 hidden layers, 768 hidden size, 12 attention heads, and a vocabulary size of 32,000. Other key settings are: silu activation function, 0.0 attention dropout, and no weight decay. The training recipe involves pretraining on a subset of the C4 and ShareGPT datasets, followed by fine-tuning on task-specific data generated through self-distillation from the target LLM. We employ 4 NVIDIA A100 GPUs with 80GB memory, utilizing techniques like FSDP (Fully Sharded Data Parallelism), gradient checkpointing, and lazy preprocessing to optimize training efficiency. Hyperparameters include a batch size of 8, 3 training epochs, a learning rate of 2e-5, and a cosine learning rate scheduler with a warmup ratio of 0.03. We maintain consistent architecture and training procedures across all white-box drafters, ensuring their heterogeneity stems solely from the diverse task-specific datasets they are fine-tuned on. For further specifics on Eagle drafter training, we refer readers to the original Eagle paper \citep{li2024eagle}.

\subsection{Datasets}
All of the datasets for our experiments are used within the range of the intended use for the original paper, not 
\paragraph{Training dataset}
We utilize a diverse collection of datasets to train our specialized drafters, ensuring their proficiency across various tasks and languages:

\begin{itemize}
    \item ShareGPT \citep{sharegpt}: A dataset of approximately 58,000 conversations scraped. These conversations include both user prompts and responses from OpenAI's ChatGPT.

    \item WMT16 De\textrightarrow En \citep{wmt16}: A dataset for German-to-English machine translation, providing high-quality parallel text data.

    \item JparaCrawl-v3.0 \citep{japanese}: A large-scale Japanese web corpus, enabling training of a drafter specialized in Japanese-to-English translation.

    \item WMT16 Ru\textrightarrow En \citep{wmt16}: A parallel corpus for Russian-to-English machine translation, similar to the WMT16 De→En dataset but focusing on the Russian language.

    \item WMT14 Fr\textrightarrow En \citep{wmt14}: A dataset for French-to-English machine translation, providing additional multilingual training data.

    \item WMT19 Zh\textrightarrow En \citep{wmt19}: A dataset for Chinese-to-English machine translation, further expanding the language coverage of our drafter pool.

    \item Code alpaca \citep{codealpaca}: A dataset of code generation instructions and corresponding outputs, facilitating the training of a drafter specialized in code-related tasks.

    \item CNN/Daily mail \citep{cnn_sum}: A dataset for summarization, comprising news articles and their corresponding summaries.

    \item Natural question answering \citep{kwiatkowski2019natural}: A large-scale question answering dataset based on real user queries and Wikipedia passages, aiding in training a drafter for question answering tasks.

    \item Meta math question answering \citep{yu2023metamath}: A dataset focusing on mathematical question answering, providing specialized training data for a math-oriented drafter.
\end{itemize}

\paragraph{Evaluation dataset}

\begin{itemize}
    \item Multilingual translation: Ja to En \citep{japanese}, Ru to En, De to En \citep{wmt16}, Fr to En \citep{wmt14}, and Zh to En \citep{wmt19}.

    \item Code generation: Code tasks from the MT-Bench dataset \citep{llmjudge}.

    \item Summarization: CNN/Daily summarization dataset \citep{cnn_sum}.

    \item Question answering: Natural Questions dataset \citep{kwiatkowski2019natural}.

    \item Math reasoning: GSM8K mathematical reasoning dataset \citep{gsm8k}.
\end{itemize}

\paragraph{Templates} We employ specific prompt templates during model evaluation to guide the behavior of the target LLM and drafters, ensuring consistency and clarity in task execution. These templates are carefully designed to elicit desired responses and provide relevant context for each task category. Before the data templates, system prompts of LLMs are positioned at the front to provide additional context or instructions.

\begin{itemize}
    \item Multilingual translation: `Translate this sentence from [source language] to English: [source sentence]'.
    
    \item Code generation: Its instruction depends on the query.
    
    \item Summarization: `Summarize: [article text]'.

\end{itemize}

\subsection{MAB settings}
\label{subsec:mab_settings}
In our experiments, we set the exploration strength $\beta$ for MetaSD-UCB to 0.01, balancing exploration and exploitation. For MetaSD-EXP3, we use a gamma value of 0.4 to control the degree of exploration. In the SH algorithm, we set the period to 1, ensuring frequent elimination of underperforming drafters. 

\subsection{Baseline}

We conduct several SD methods, ensuring their open-source availability and robust performance. Each method embodies a distinct strategy for accelerating LLM inference:

\begin{itemize}
    \item SpS \citep{speculative_decode2}: SpS employs a smaller LM from the same model series as the drafter. In the verification stage, if a token is rejected, SpS corrects it using residual probability to maintain generation quality.

    \item BPD, Medusa, and Eagle \citep{bpd, medusa, li2024eagle}: These methods enhance the target LLM by incorporating additional lightweight FFN heads. These heads draft potential token sequences based on the penultimate layer representations from the target LLM.

    \item PLD \citep{saxena2023prompt}: Implementing the ideas of \citep{yang2023inference}, PLD selects text spans directly from the input to serve as drafts, aiming for relevant and accurate initial predictions.

    \item R-BPD (Rescored blockwise parallel decoding) and R-Medusa (Rescored Medusa) \citep{kim2024exploring}: This method enhances BPD by rescoring the drafts at test-time, aiming to increase the number of accepted tokens.
\end{itemize}

\subsection{Reward distribution} \label{app:reward_abl}

\begin{figure}
    \begin{subfigure}{0.49\textwidth}
        \centering
        \includegraphics[width=\linewidth]{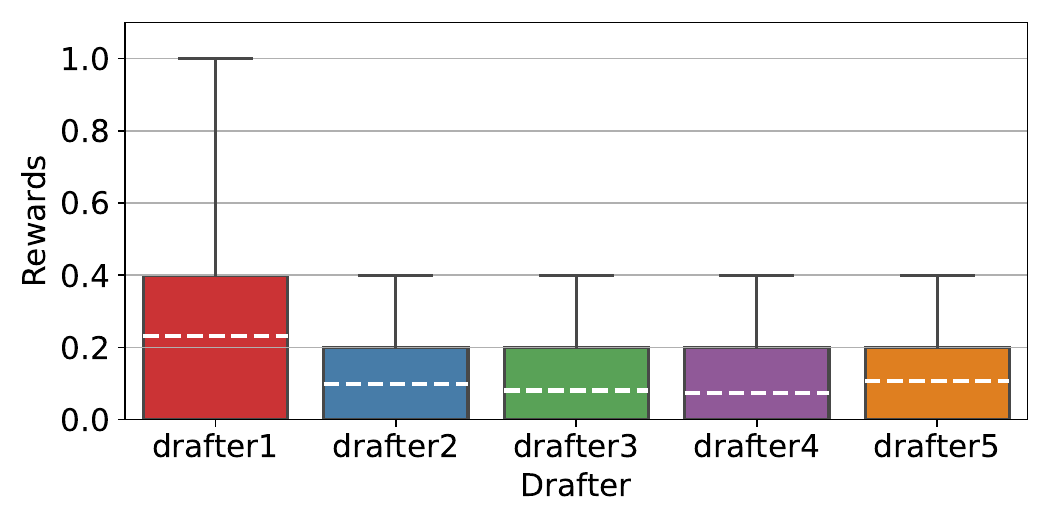}
        \caption{BE reward}
    \end{subfigure}
    \hfill
    \begin{subfigure}{0.49\textwidth}
        \centering
        \includegraphics[width=\linewidth]{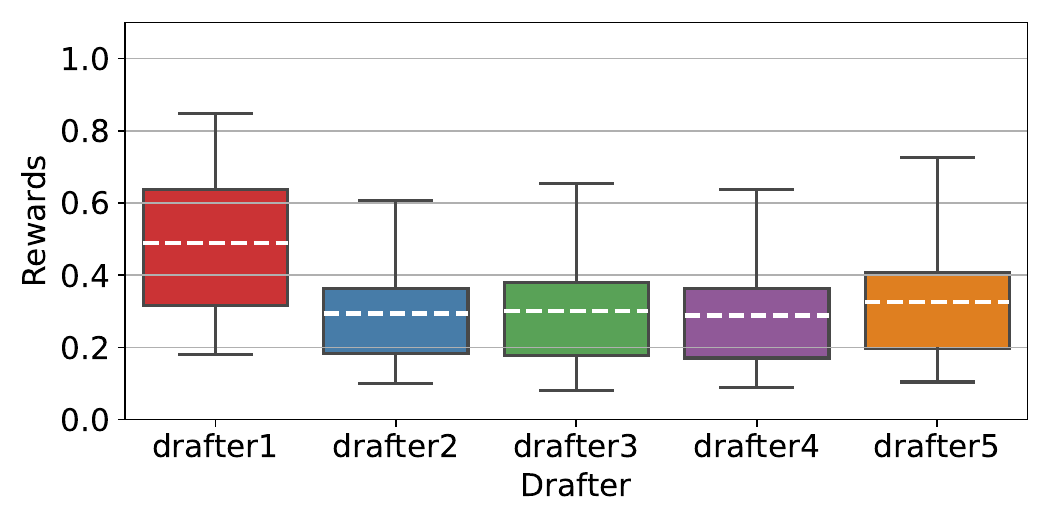}
        \caption{BD reward}
    \end{subfigure}
    \caption{Comparison of rewards on the Ja\textrightarrow En dataset across different drafters in two scenarios: (a) BE and (b) BD. Box plots show the distribution of rewards, with whiskers extending to the 5th and 95th percentiles. Drafter specializations: 1: Ja \textrightarrow En, 2: Ru \textrightarrow En, 3: De \textrightarrow En, 4: Fr \textrightarrow En, 5: Zh \textrightarrow En.}
    \label{fig:reward}
\end{figure}

\Autoref{fig:reward} and \Autoref{tab:reward_stats} present a statistical analysis of the BE and BD reward distributions, collected using autoregressive decoding with the same Japanese dataset and drafter configurations as in \Autoref{tab:motivation}. Several key observations emerge:

\begin{itemize}
    \item Lower variance: The BD reward exhibits lower variance compared to the BE reward across all drafters. This suggests that BD provides a more stable and consistent feedback signal, leading to faster convergence with less sample complexity.
    
    \item Improved discrimination: The difference in mean reward between the optimal drafter (Drafter 1; Ja-drafter) and the suboptimal drafters is more pronounced with the BD reward. This improved discrimination between drafters can facilitate quicker identification of the optimal drafter by the MAB algorithm.
    
    \item Reduced sparsity:  A significant portion of the BE rewards are zero, particularly for the suboptimal drafters. This sparsity can hinder the learning process of the MAB algorithm. In contrast, the BD reward consistently provides non-zero feedback, enabling continuous learning and adaptation.
\end{itemize}

These observations collectively suggest that the BD reward offers several advantages over the BE reward in the context of MetaSD. Its lower variance, improved discrimination between drafters, and reduced sparsity contribute to a more informative and efficient learning signal for the MAB algorithm, potentially leading to faster convergence and better overall performance.

For the experiment, we use the same Japanese dataset and drafter configurations as in \Autoref{tab:motivation}, employing autoregressive decoding to collect BE and BD rewards at each step without actual speculative execution.

\begin{table*}[ht!]
\centering
\caption{Speedup ratio on long-context De\textrightarrow En translation with the same settings in \Tableautoref{tab:mlin}.}
\label{tab:long-deen}
\resizebox{0.7\textwidth}{!}{%
\begin{tabular}{@{}c|ccccc|c@{}}
\toprule
Dataset & Drafter1 & Drafter2 & Drafter3 & Drafter4 & Drafter5 & UCB \\ \midrule
Long De→En & 1.238 & 1.316 & 2.044 & 0.970 & 1.187 & 2.031 \\ \bottomrule
\end{tabular}%
}
\end{table*}

\subsection{Long-context De \textrightarrow En translation}

While our results in \Autoref{tab:main} and \Autoref{tab:mlin} have the relatively less effectiveness of MetaSpS on the WMT16 De\textrightarrow En translation task than other tasks, it is worth noting that this dataset primarily consists of relatively short sentences with an average length of fewer than 100 tokens. To assess the performance of our framework in a more challenging long-context scenario, we evaluate it on a new De\textrightarrow En translation dataset with an average context length of 500 tokens generated by GPT-4o. As shown in \Autoref{tab:long-deen}, MetaSpS-UCB achieves a speedup ratio of 2.031 on this long-context dataset, approaching the performance of the optimal drafter (Drafter3).

\subsection{Evaluations on out-of-domain datasets}\label{app:ood}

To evaluate the adaptability and performance of our MetaSD framework in out-of-domain settings, we conduct additional experiments using the Alpaca-Finance \citep{financealpaca} and RAG datasets \citep{xia2024unlocking}. These datasets fall outside the domains of the specialized drafters used in our main experiments, providing a robust test of MetaSD's ability to generalize. The results in \Autoref{tab:ood_results}, measured using an NVIDIA A100 GPU, are presented.

\begin{table*}[t]
\centering
\caption{\textcolor{black}{Performance of MetaSpS, MetaEagle, and baselines on out-of-domain datasets (measured on A100 GPU).}}
\label{tab:ood_results}
\resizebox{\textwidth}{!}{%
\begin{tabular}{l|ccccc|c|c|ccccc|c|c}
\toprule
\textbf{Dataset} & \textbf{Drafter1} & \textbf{Drafter2} & \textbf{Drafter3} & \textbf{Drafter4} & \textbf{Drafter5} & \textbf{OFA Drafter} & \textbf{MetaSpS-UCB} & \textbf{EAGLE1} & \textbf{EAGLE2} & \textbf{EAGLE3} & \textbf{EAGLE4} & \textbf{EAGLE5} & \textbf{OFA Eagle} & \textbf{MetaEagle-UCB} \\ 
\midrule
\textbf{RAG} & 1.720 & 1.373 & 1.752 & 1.944 & 1.552 & 1.638 & \textbf{1.799} & 1.844 & 1.568 & 2.566 & 2.535 & 1.793 & 2.175 & \textbf{2.238} \\ 
\textbf{Finance} & 1.416 & 1.284 & 1.414 & 1.550 & 1.397 & 1.367 & \textbf{1.436} & 2.432 & 2.175 & 2.494 & 2.826 & 2.175 & 2.435 & \textbf{2.517} \\ \bottomrule
\end{tabular}%
}
\end{table*}

\paragraph{Superior adaptability} The results indicate that MetaSD consistently outperforms both OFA drafters and most of individual specialized drafters in out-of-domain scenarios. This highlights its ability to dynamically adapt to new tasks without relying on prior assumptions about domain similarity. The following provides the limitations of similarity-based selection:

\begin{itemize}
    \item Computing similarity between sentence embeddings requires encoding the context to generate embeddings. For inputs exceeding 128 tokens, this process can significantly increase inference time. For example, with over 100 tokens, similarity computation becomes slower than MetaSD's dynamic drafter selection.

    \item High accuracy in selecting the correct drafter based on embeddings is challenging, leading to potential misclassifications. Errors in this step can result in suboptimal drafter performance. For example, as input lengths increase, the performance gap between static Math drafters and MetaSD-UCB narrows, reducing the benefits of static drafter selection.
\end{itemize}

\paragraph{Intractability with heterogeneous drafters} In practical scenarios, heterogeneous drafters often lack complete or uniform training descriptions. Under such conditions, similarity-based selection becomes infeasible. MetaSD’s dynamic and adaptive approach offers a scalable alternative, ensuring robust performance even with limited information about drafter specialization.

\subsection{Additional evaluations with perturbed prompts}
In addition to the results in \Autoref{sec4:exp}, \Autoref{tab:blackbox_perturbed} also shows the strength of MetaSD’s dynamic token-level selection mechanism, which adapts to the token distributions during inference rather than relying solely on the characteristics of the input prompt. 

\begin{table*}[t]
\centering
\caption{Black-box performance with perturbed prompts (speedup relative to greedy decoding, measured on A100 GPU).}
\label{tab:blackbox_perturbed}
\resizebox{0.9\textwidth}{!}{%
\begin{tabular}{l|ccccc|c|c}
\toprule
\textbf{Task} & \textbf{Drafter1} & \textbf{Drafter2} & \textbf{Drafter3} & \textbf{Drafter4} & \textbf{Drafter5} & \textbf{OFA Drafter} & \textbf{MetaSpS - UCB} \\ 
\midrule
\textbf{Code} & 2.368 & 1.158 & 1.521 & 1.763 & 1.633 & 1.937 & \textbf{2.139} \\ 
\textbf{Translation} & 0.997 & 1.986 & 0.973 & 1.036 & 0.935 & 0.969 & \textbf{1.422} \\ 
\textbf{CNN} & 1.458 & 1.016 & 1.895 & 1.458 & 1.318 & 1.521 & \textbf{1.779} \\ 
\textbf{NQA} & 1.297 & 1.158 & 1.285 & 1.907 & 1.237 & 1.387 & \textbf{1.610} \\ 
\textbf{MathQA} & 1.482 & 1.184 & 1.357 & 1.470 & 2.346 & 1.895 & \textbf{2.149} \\ 
\bottomrule
\end{tabular}%
}
\end{table*}

\subsection{Throughput over Eagle drafters}

To evaluate the throughput efficiency of our proposed method, particularly in distributed system deployments where batch processing plays a critical role, we conduct experiments under the same settings described in the original Eagle paper \citep{li2024eagle}. Using an RTX 3090 (24GB) with the Vicuna 7B model, we measured throughput across a diverse set of tasks. The results demonstrate that MetaEagle-UCB achieves superior throughput compared to the single OFA Eagle, with a speedup factor of \textbf{2.427} versus \textbf{2.235} for single drafters.

A key strength of our drafter management mechanism lies in its ability to maintain throughput efficiency comparable to single-drafter methods. This is facilitated by preloading drafter parameters into DRAM, thereby avoiding frequent memory transfers to VRAM during computation. As a result, both the number of memory movements and the overall memory bandwidth requirements remain consistent with those of single-drafter configurations, even in scenarios involving multiple drafters. Additionally, the computational structure of MetaSD is designed to scale effectively across batches. Performance gains observed in single-batch scenarios carry over seamlessly to multi-batch settings, ensuring throughput efficiency in real-world distributed environments.

\subsection{MetaEagle-UCB with Efficient KV Cache Strategies}\label{sec:stream}

In our framework, the KV cache is recalculated for the previous context whenever a drafter switch occurs. Despite this recalculation, the computational overhead is negligible, even for relatively long contexts. This efficiency arises from the minimal cost of prefilling the KV cache for a small drafter. For instance, in the Eagle drafter, only one layer of KV cache is computed for the unseen context, ensuring computational efficiency.

To further validate the framework's efficiency, we conducted additional experiments incorporating StreamingLLM techniques \citep{xiao2023efficient}. These techniques circumvent the need for full KV cache recalculation, offering an alternative method for reducing computational costs. The results, summarized in \Autoref{tab:stream}, demonstrate that StreamingLLM achieves comparable performance to the default approach of KV cache recalculation, highlighting the robustness of MetaSD.

\begin{table*}[ht!]
\centering \caption{Performance comparison of MetaSD-UCB with different KV cache strategies (speedup relative to standard greedy decoding, measured on A100 GPU).} 
\label{tab:stream}
\resizebox{0.7\textwidth}{!}{%
\begin{tabular}{l|cc} 
\toprule 
\textbf{Task} & \textbf{MetaEagle-UCB (Recomputing KV)} & \textbf{MetaEagle-UCB with StreamingLLM} \\ \midrule 
\textbf{Code} & 3.724 & 3.624 \\
\textbf{Trans} & 2.318 & 2.352 \\
\textbf{Sum} & 3.057 & 2.986 \\
\textbf{NQA} & 2.641 & 2.654 \\
\textbf{Math} & 3.520 & 3.338 \\
\bottomrule \end{tabular}%
} \end{table*}

These results confirm two key observations. First, the computational overhead introduced by full KV cache recalculation is minimal, as evidenced by MetaEagle-UCB maintaining high performance across tasks. This demonstrates that recalculating the KV cache is not a significant bottleneck. Second, Streaming Decode techniques provide an effective alternative, yielding similar overall performance with slight improvements observed in specific cases such as Translation and QA. These findings underscore the flexibility and efficiency of MetaSD in managing KV cache strategies.

\subsection{Temperature sampling}\label{sec:stochastic_decoding}

\begin{table*}
\centering
\caption{Speedup ratio with temperature sampling as temperature is set to 0.7 over a NVIDIA A6000 GPU.}
\vspace{-5pt}
\label{tab:temperature}
\resizebox{0.45\textwidth}{!}{%
\begin{tabular}{@{}c|ccccc|c@{}}
\toprule
\multirow{2}{*}{Dataset} & \multicolumn{5}{c|}{SpS with specialized drafters}                                & \multicolumn{1}{l}{Bandit} \\ \cmidrule(l){2-7} 
                         & \textbf{Drafter1} & \textbf{Drafter2} & \textbf{Drafter3} & \textbf{Drafter4} & \textbf{Drafter5} & \textbf{UCB}               \\ \midrule
Code        & \textbf{2.250} & 1.215          & 1.379          & 1.532          & 1.513          & 1.896 \\
Trans       & 1.086          & \textbf{1.886} & 1.096          & 1.130          & 1.078          & 1.431 \\
Sum         & 1.461          & 1.165          & \textbf{1.874} & 1.463          & 1.353          & 1.744 \\
QA          & 1.316          & 1.193          & 1.324          & \textbf{1.776} & 1.272          & 1.534 \\
Math        & 1.450          & 1.258          & 1.355          & 1.616          & \textbf{2.379} & 2.046 \\ \bottomrule
\end{tabular}%
}
\vspace{-5pt}
\end{table*}

\begin{table*}[t]
\centering
\caption{Performance of MetaSpS under stochastic decoding (temperature = 1.0). Speedup ratios are reported relative to standard autoregressive greedy decoding.}
\label{tab:stochastic_blackbox}
\resizebox{0.7\linewidth}{!}{%
\begin{tabular}{l|ccccc|cc}
\toprule
\textbf{Task} & \textbf{Drafter1} & \textbf{Drafter2} & \textbf{Drafter3} & \textbf{Drafter4} & \textbf{Drafter5} & \textbf{OFA} & \textbf{MetaSpS-UCB} \\ \midrule
Code         & 1.781 & 0.963 & 1.168 & 1.260 & 1.178 & 1.501 & \textbf{1.596} \\
Translation  & 0.856 & 1.695 & 0.897 & 0.880 & 0.838 & 0.861 & \textbf{1.197} \\
CNN          & 1.201 & 0.918 & 1.629 & 1.223 & 1.092 & 1.230 & \textbf{1.439} \\
NQA          & 1.073 & 0.961 & 1.132 & 1.510 & 1.031 & 1.123 & \textbf{1.322} \\
MathQA       & 1.220 & 1.026 & 1.200 & 1.360 & 1.968 & 1.512 & \textbf{1.673} \\ 
\bottomrule
\end{tabular}%
}
\end{table*}

\begin{table*}[t!]
\centering
\caption{Performance of MetaEagle under stochastic decoding (temperature = 1.0). Speedup ratios are reported relative to standard autoregressive greedy decoding.}
\label{tab:stochastic_whitebox}
\resizebox{0.7\linewidth}{!}{%
\begin{tabular}{l|ccccc|cc}
\toprule
\textbf{Task} & \textbf{EAGLE1} & \textbf{EAGLE2} & \textbf{EAGLE3} & \textbf{EAGLE4} & \textbf{EAGLE5} & \textbf{OFA Eagle} & \textbf{MetaEagle-UCB} \\ \midrule
Code         & 3.019 & 0.872 & 1.012 & 1.348 & 1.809 & 2.926 & \textbf{2.765} \\
Translation  & 1.030 & 1.817 & 1.373 & 1.278 & 0.997 & 1.578 & \textbf{1.668} \\
CNN          & 0.998 & 0.834 & 2.289 & 1.267 & 0.864 & 1.749 & \textbf{1.935} \\
NQA          & 1.179 & 0.922 & 1.269 & 2.181 & 1.011 & 1.680 & \textbf{1.756} \\
MathQA       & 1.739 & 0.966 & 1.462 & 2.141 & 3.099 & 2.289 & \textbf{2.539} \\ 
\bottomrule
\end{tabular}%
}
\end{table*}

We investigate the impact of temperature sampling on MetaSpS performance.  \Autoref{tab:temperature} presents the speedup ratios achieved with temperature sampling with temperature 0.7 on an NVIDIA A6000 GPU.  Consistent with the trends observed in our main experiments with greedy decoding, MetaSD continues to achieve competitive speedup.

To further assess the robustness of MetaSD under stochastic decoding conditions, we conduct experiments with temperature sampling ($T=1.0$). The results, presented in \Autoref{tab:stochastic_blackbox} and \Autoref{tab:stochastic_whitebox}, align with our theoretical assumptions, demonstrating the adaptability of MetaSD even in non-deterministic decoding settings. These results further support the theoretical alignment of MetaSD with stochastic decoding conditions.

\subsection{Throughput evaluation in single-batch and heterogeneous settings}  
\label{sec:throughput_exp}

We evaluate its throughput performance in both single-batch and heterogeneous batch settings. As shown in \Autoref{tab:throughput_merged}, we conduct experiments on out-of-distribution tasks—Physics QA \citep{physics}, Hotpot QA \citep{hotpotqa}, and MMLU-CoT \cite{mmlu1, mmlu2}(average across 57 tasks). These datasets provide a robust evaluation of MetaSD’s ability to generalize beyond training domains.

In single-batch settings, MetaEagle-UCB consistently outperforms OFA Eagle, achieving higher throughput across all evaluated tasks. This improvement stems from its adaptive selection mechanism, which efficiently routes queries to the most suitable drafter, even in OOD scenarios.  

For heterogeneous batch throughput, where different tasks are mixed within a batch, MetaEagle-UCB remains competitive. While throughput slightly decreases in heterogeneous settings due to increased I/O from drafter switching, its performance remains comparable to single-drafter methods in small-batch inference. These findings highlight MetaSD’s ability to balance adaptive specialization with high-throughput efficiency, making it well-suited for real-world deployment in diverse task distributions.

\begin{table*}[ht!]
\centering
\caption{Throughput in single-batch and heterogeneous batch settings.}
\label{tab:throughput_merged}
\resizebox{0.95\linewidth}{!}{%
\begin{tabular}{l|cc|ccc}
\toprule
\multirow{2}{*}{\textbf{Task / Batch Size}} & \multicolumn{2}{c|}{\textbf{Single Batch Throughput}} & \multicolumn{3}{c}{\textbf{Heterogeneous Batch Throughput}} \\ 
\cmidrule(lr){2-3} \cmidrule(lr){4-6} 
 & \textbf{OFA Eagle} & \textbf{MetaEagle-UCB} & \textbf{Single OFA Eagle} & \textbf{Homogeneous (MetaEagle-UCB)} & \textbf{Heterogeneous (MetaEagle-UCB)} \\ \midrule
Physics       & 2.424  & 2.573  & ---  & ---  & ---  \\
Hotpot QA     & 2.262  & 2.270  & ---  & ---  & ---  \\
MMLU-CoT (Avg. 57 tasks) & 2.466  & 2.529  & ---  & ---  & ---  \\ \midrule
Batch Size = 1  & ---  & ---  & 2.803  & 3.045  & 3.045  \\
Batch Size = 2  & ---  & ---  & 2.751  & 2.933  & 2.518  \\
Batch Size = 4  & ---  & ---  & 2.563  & 2.701  & 1.931  \\ \bottomrule
\end{tabular}%
}
\end{table*}

\subsection{Statistical verification}
All reported values in our experiments are averaged over three independent runs to support statistical reliability.
\begin{table*}[t!]\small
\centering
\caption{Mathematical terms and notations in our work. \label{tab:notation}}
\begin{tabular}{l|l}
\toprule
Notation & Descriptions \\ \midrule\midrule
$K$ & Number of drafters \\ \midrule
$[K]$ & For a given integer $K$, denotes the set $\{1, ..., K\}$ \\ \midrule
$i$ & Drafter index $i\in[K]$ \\ \midrule
$\alpha_i$ & True mean of acceptance rate when using drafter $i$ \\ \midrule
$i^{\star}$ & Drafter index with the highest $\alpha_i$ \\ \midrule
$t$ & Number of current round \\ \midrule
$B$ & Total number of tokens to generate \\ \midrule
$l(t)$ & Number of input tokens at round $t$ \\ \midrule
$x^{1:l}$ & Token sequence of first $l$ tokens \\ \midrule
$\mathcal{M}_q$ & Target model \\ \midrule
$\mathcal{M}_{q_i}$ &  The i-th drafter \\ \midrule
$p^{l}$ & Probability distribution of target model output given token sequence $x^{1:l}$\\ \midrule
$q_i^{l}$ & Probability distribution of output of drafter $i$ given token sequence $x^{1:l}$\\ \midrule
$N_{acc}(i,t)$ & Number of accepted tokens using drafter $i$ in round $t$ \\ \midrule
$N_{max}$ & Number of candidate tokens \\ \midrule
$r$ & Arbitrary reward distribution with bounded support [0,1] \\ \midrule
$r_{i,t}$ & General reward feedback using drafter $i$ in round $t$ \\ \midrule
$r_{i,t}^{BE}$ & BE reward using drafter $i$ in round $t$ \\ \midrule
$r_{i,t}^{BD}$ & BD reward using drafter $i$ in round $t$ \\ \midrule
$n_i(t)$ & Number of selecting drafter $i$ until round $t$ \\ \midrule
$a_t$ & Index of selected drafter in round $t$ \\ \midrule
$\beta$ & The exploration strength hyperparamter in UCB \\ \midrule
$\gamma$ & The exploration hyperparameter used in EXP3 \\ \midrule
$\mu_i$ & Expectation of the reward distribution of drafter $i$ \\ \midrule
$\pi$ & Bandit policy (algorithm) \\ \midrule
$\tau(\pi,B)$ & Stopping time for the policy $\pi$ with given total number of tokens $B$ \\ \midrule
$\lambda$ & Switching cost constant factor \\ \midrule
$\Delta_i$ & Suboptimality gap for the arbitrary reward distribution $r$: $\mu_i^{\star}-\mu_i$  \\ \midrule
$\Delta(\alpha_i)$ & Suboptimality gap for the BD reward: $\alpha_i^{\star}-\alpha_i$ \\ \midrule
$\Delta_i^{BE}$ & Suboptimality gap for the BE reward \\ \midrule
$R(r_i)$ & Feedback signal for reward distribution when using drafter $i$ (\Autoref{theorem:BD,BE reward comparison}) \\\midrule
$d_{TV}(\cdot, \cdot)$ & The total variation distance between probability measures \\ \midrule
$\mathbb{I}$ & Indicator function \\ \midrule
$ O(\cdot)$ & Big O notation
\\
\bottomrule
\end{tabular}
\end{table*}
\clearpage

\section{Proofs} \label{sec:proof}
To begin, we provide the mathematical terms and notations in \Autoref{tab:notation}.

\subsection{Basic lemmas}
First, we provide basic concentration inequalities which will be used to prove our theoretical results.
\begin{lemma}[Chernoff-Hoeffding bound]
\label{lemma:Chernoff-Hoeffding_bound}
Suppose there are $n$ random variables $X_1,X_2,\dots,X_n$ whose value is bounded in $[0,1]$ and $\mathbb{E}[X_t|X_1,\dots,X_{t-1}]=\mu$ for $2\leq t\leq n$. Then, for $S_n=\sum_{i=1}^{n}X_{i}$ and $a\geq0$, following inequalities holds:
\vspace{-5pt}
\begin{equation*}
\begin{aligned}
& \mathbb{P}(S_n\geq n\mu+a)\leq e^{-2a^2/n},
\\ & \mathbb{P}(S_n\leq n\mu-a)\leq e^{-2a^2/n}.
\end{aligned}
\end{equation*}
\end{lemma}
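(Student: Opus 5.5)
The natural route is the standard exponential-moment (Chernoff) method, combined with a martingale-difference argument to handle the conditional-mean hypothesis. We prove only the upper-tail inequality $\mathbb{P}(S_n\geq n\mu+a)\leq e^{-2a^2/n}$. The lower tail then follows by applying the same argument to $Y_t=1-X_t$. These variables lie in $[0,1]$ and satisfy $\mathbb{E}[Y_t\mid Y_1,\dots,Y_{t-1}]=1-\mu$, and $\{S_n\leq n\mu-a\}=\{\sum_t Y_t\geq n(1-\mu)+a\}$. We read the hypothesis as including $\mathbb{E}[X_1]=\mu$; otherwise the statement fails for $n=1$.

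\textbf{Step 1: reduce to a moment-generating-function bound.} For any $\lambda>0$, Markov's inequality gives
\[
\mathbb{P}(S_n\geq n\mu+a)\leq e^{-\lambda(n\mu+a)}\,\mathbb{E}\!\left[e^{\lambda S_n}\right].
\]

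\textbf{Step 2: control the moment-generating function of $S_n$.} The key ingredient is Hoeffding's lemma. If $X\in[0,1]$ and $\mathbb{E}[X]=\mu$, then $\mathbb{E}[e^{\lambda(X-\mu)}]\leq e^{\lambda^2/8}$. To prove it, use convexity to write $e^{\lambda x}\leq (1-x)+x e^{\lambda}$ on $[0,1]$. Taking expectations gives $\mathbb{E}[e^{\lambda(X-\mu)}]\leq e^{L(\lambda)}$ with $L(\lambda)=-\lambda\mu+\ln(1-\mu+\mu e^{\lambda})$. One checks that $L(0)=L'(0)=0$ and $L''\leq 1/4$, because $L''$ has the form $p(1-p)$ for some $p\in[0,1]$. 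Taylor's theorem then yields $L(\lambda)\leq\lambda^2/8$.

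Apply this conditionally, with $\mathcal{F}_{t-1}=\sigma(X_1,\dots,X_{t-1})$:
\[
\mathbb{E}\!\left[e^{\lambda(X_t-\mu)}\mid\mathcal{F}_{t-1}\right]\leq e^{\lambda^2/8}.
\]
The tower property then peels off one factor at a time:
\[
\mathbb{E}\!\left[e^{\lambda(S_n-n\mu)}\right]
=\mathbb{E}\!\left[e^{\lambda(S_{n-1}-(n-1)\mu)}\,\mathbb{E}\!\left[e^{\lambda(X_n-\mu)}\mid\mathcal{F}_{n-1}\right]\right]
\leq e^{\lambda^2/8}\,\mathbb{E}\!\left[e^{\lambda(S_{n-1}-(n-1)\mu)}\right].
\]
Inducting gives $\mathbb{E}[e^{\lambda(S_n-n\mu)}]\leq e^{n\lambda^2/8}$.

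\textbf{Step 3: optimize over $\lambda$.} Combining the two steps,
\[
\mathbb{P}(S_n\geq n\mu+a)\leq \exp\!\left(-\lambda a+\frac{n\lambda^2}{8}\right).
\]
Choosing $\lambda=4a/n$ gives the bound $e^{-2a^2/n}$. The case $a=0$ is trivial.

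The only genuinely delicate step is the curvature bound $L''\leq 1/4$ in Hoeffding's lemma, together with applying that lemma to conditional expectations. This is where the stated hypothesis on the conditional means is exactly what the argument needs.
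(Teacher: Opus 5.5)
Your proof is correct and complete. The paper gives no proof of this lemma; it simply invokes it as the standard concentration fact used in the UCB1 analysis of Auer et al. Your Chernoff bound, combined with the conditional Hoeffding lemma and the tower-property peeling, is exactly the standard argument behind it.

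Your caveat about the first variable is also warranted. As literally written (conditions only for $2\leq t\leq n$), $X_1$ is unconstrained, and the bound already fails for $n=1$: take $X_1\equiv 1$, $\mu=0$, $a=1/2$, so the left side is $1$ while the right side is $e^{-1/2}$. So $\mathbb{E}[X_1]=\mu$ must be assumed. It does hold in the paper's applications, where the rewards are i.i.d.
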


\begin{lemma}[Bernstein inequality]
\label{lemma:Bernstein_inequality}
Suppose there are $n$ random variables $X_1,X_2,\dots,X_n$ whose value is bounded in $[0,1]$ and $\sum_{t=1}^{n}\mathrm{Var}[X_t|X_{t-1},\dots,X_1]=\sigma^2$. Then, for $S_n=\sum_{i=1}^{n}X_{i}$ and $t\geq0$, following inequalities holds:
\begin{equation*}
\mathbb{P}(S_n\geq \mathbb{E}[S_n]+t)\leq \exp({-\frac{t^2}{\sigma^2+t/2}}).
\end{equation*}
\end{lemma}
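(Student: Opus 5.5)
The plan is the Cramér--Chernoff (exponential moment) method applied to the martingale differences $D_t = X_t - \mathbb{E}[X_t\mid X_{t-1},\dots,X_1]$. Write $S_n - \mathbb{E}[S_n] = \sum_{t=1}^n D_t$ and read the hypothesis as saying that the conditional variances sum to the deterministic value $\sigma^2$. Since each $X_t\in[0,1]$, we have $D_t \le 1$ and $\mathbb{E}[D_t\mid\mathcal{F}_{t-1}]=0$. For $\lambda>0$, Markov's inequality gives $\mathbb{P}(\sum_t D_t \ge t) \le e^{-\lambda t}\,\mathbb{E}[e^{\lambda \sum_t D_t}]$. The work is then to bound the exponential moment by peeling off one increment at a time, using the tower property from $t=n$ down to $t=1$.

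The key one-step estimate is Bennett's lemma. For $D\le 1$ with conditional mean zero and conditional variance $v$, the function $\phi(x)=(e^{\lambda x}-1-\lambda x)/x^2$ is increasing, so
\[
\mathbb{E}[e^{\lambda D}\mid\mathcal{F}] \le 1 + v\,(e^{\lambda}-1-\lambda) \le \exp\bigl(v\,(e^\lambda - 1-\lambda)\bigr).
\]
Iterating this and using that the $v_t$ sum to the fixed $\sigma^2$ yields
\[
\mathbb{P}\Bigl(\sum_t D_t\ge t\Bigr)\le \exp\bigl(-\lambda t + \sigma^2(e^\lambda-1-\lambda)\bigr).
\]
Next, bound $e^\lambda - 1-\lambda \le \frac{\lambda^2}{2(1-\lambda/3)}$ for $0<\lambda<3$, which follows from comparing Taylor coefficients using $k!\ge 2\cdot 3^{k-2}$. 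Finally choose $\lambda = t/(\sigma^2 + t/3)$.

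The main obstacle is the constant in the final exponent. The optimization above produces $\exp\bigl(-t^2/(2\sigma^2+2t/3)\bigr)$, and the statement as written asks for $\exp\bigl(-t^2/(\sigma^2+t/2)\bigr)$, whose denominator is smaller in the variance term. I would first try to sharpen the one-step estimate. In particular, I would exploit that $X_t\in[0,1]$ confines $D_t$ to an interval of length $1$, not merely bounds it by $1$ above. This suggests using a Hoeffding--Bennett hybrid for the quadratic term, bounding $\mathbb{E}[e^{\lambda D}]$ by the two-point extremal distribution supported at the ends of the range.

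I would check this carefully in the Gaussian regime $t\ll\sigma^2$. There the central limit theorem forces any valid bound to have exponent at most $t^2/(2\sigma^2)(1+o(1))$. So the stated denominator $\sigma^2$ can only be attained if $\sigma^2$ is understood as, or absorbed into, twice the variance sum. If that check fails, I would prove the statement with the rescaled convention $\sigma^2:=2\sum_t\mathrm{Var}[X_t\mid\cdot]$, under which the Bennett route gives exactly the displayed exponent with the linear term $t/2$ dominating $2t/3$. I would then flag that the later regret bound must be read with this convention.
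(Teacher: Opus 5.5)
The paper gives no proof of this lemma. It quotes it as a standard fact (the Bernstein inequality as stated in \citet{auer2002finite}), and in transcription a factor $\tfrac12$ was lost from the numerator. The standard bound is $\exp\bigl(-\frac{t^2/2}{\sigma^2+t/2}\bigr)=\exp\bigl(-\frac{t^2}{2\sigma^2+t}\bigr)$, and this is exactly what the paper uses later in \eqref{eq:BE_reward_concentration} (take $S_n=n\hat\mu$, $\sigma^2=n\,\mathrm{Var}$, $t=n\epsilon$).

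Your diagnosis that the display as written is unprovable is right. For $100$ fair coins, $\sigma^2=25$ and $t=15$, one has $\mathbb{P}(S_{100}\ge 65)\approx 1.8\times 10^{-3}$, against the claimed $e^{-225/32.5}\approx 9.9\times 10^{-4}$. Your Bennett--Bernstein computation giving $\exp\bigl(-t^2/(2\sigma^2+2t/3)\bigr)$ is also correct.

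Your proposed repair, however, fails. With $\sigma^2:=2\sum_t\mathrm{Var}[X_t\mid\cdot]$ your bound becomes $\exp\bigl(-t^2/(\sigma^2+2t/3)\bigr)$, which is \emph{weaker} than the display because $2t/3>t/2$. The claim that ``$t/2$ dominates $2t/3$'' has the inequality backwards. No sharper argument can rescue this convention either. In Bernstein normalization the rescaled display reads $\exp\bigl(-\frac{t^2/2}{V+t/4}\bigr)$ with $V=\sum_t\mathrm{Var}[X_t\mid\cdot]$, but the constant $\tfrac13$ cannot be lowered. To see this, note that i.i.d.\ Bernoulli$(\lambda/n)$ sums tend to Poisson$(\lambda)$. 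That tail at $\lambda(1+u)$ is $\exp\bigl(-\lambda h(u)+O(\log\lambda)\bigr)$, where $h(u)=(1+u)\ln(1+u)-u=\tfrac{u^2}{2}-\tfrac{u^3}{6}+\cdots$. For small fixed $u$ this exponent lies below the demanded $\lambda\frac{u^2/2}{1+u/4}=\lambda\bigl(\tfrac{u^2}{2}-\tfrac{u^3}{8}+\cdots\bigr)$, so the display fails for large $\lambda$. The two-point sharpening you would try first is likewise excluded by your own CLT check.

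The correct fix is to restore the $\tfrac12$, i.e.\ to prove $\exp\bigl(-t^2/(2\sigma^2+t)\bigr)$. Your computation already gives this, since $2\sigma^2+2t/3\le 2\sigma^2+t$.

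There is a second gap you did not flag. The identity $S_n-\mathbb{E}[S_n]=\sum_tD_t$ holds only if $\sum_t\mathbb{E}[X_t\mid\mathcal F_{t-1}]$ is almost surely constant. Unlike \Autoref{lemma:Chernoff-Hoeffding_bound}, this lemma imposes no condition on the conditional means, and without one no inequality of this type is true. Take $X_1\sim\mathrm{Bernoulli}(\tfrac12)$ and $X_2=X_3=X_1$. Then $\sigma^2=\tfrac14$ and $\mathbb{P}(S_3\ge\mathbb{E}[S_3]+\tfrac32)=\tfrac12$, whereas your bound gives $e^{-3/2}$ and even the corrected form gives $e^{-9/8}$. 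Your martingale argument controls $\sum_tD_t=X_1-\tfrac12$, not $S_3-\mathbb{E}[S_3]=3(X_1-\tfrac12)$. You must therefore do one of two things:
\begin{itemize}
\item add the hypothesis that $\mathbb{E}[X_t\mid X_1,\dots,X_{t-1}]$ is deterministic, as in \Autoref{lemma:Chernoff-Hoeffding_bound}; this holds in the paper's application, where a single arm's rewards are i.i.d.;
\item or state the conclusion for $\sum_tD_t$ rather than for $S_n-\mathbb{E}[S_n]$.
\end{itemize}
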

\vspace{-5pt}

\subsection{Proof of Theorem~\ref{theorem:BD,BE reward comparison}}
\label{appendix:proof_of_theorem1}

In order to prove the theorem, we first provide statistics for the BE and BD rewards by the following lemmas.
\paragraph{BE reward statistics}

Here, we explicitly calculate expectation and variance of the BE reward in one round of speculative decoding. The result is presented in the following lemma.
\begin{lemma} [BE reward statistics]
\label{lemma:BEreward_stats}
The expectation and variance of the number of accepted tokens is as follows:
\begin{equation}
\begin{aligned}
&\mathbb{E}[r_{i,t}^{BE}]=\frac{\alpha_{i}-\alpha_{i}^{N_{max}+1}}{N_{max}(1-\alpha_{i})}, \\
&\mathrm{Var}[r_{i,t}^{BE}] = \frac{1}{( 
N_{max})^2(1-\alpha_i)^2}
\\ & \cdot \alpha_i
(1-(2N_{max}+1)\alpha_i^{N_{max}}
\\ & +(2N_{max}+1)\alpha_i^{N_{max}+1}-\alpha_i^{2N_{max}+1}).
\end{aligned}
\end{equation}
\end{lemma}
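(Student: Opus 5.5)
The plan is to model the number of accepted tokens in one round as a truncated geometric random variable, compute its first two moments through tail sums, and then rescale by $N_{max}$.

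\textbf{Step 1 (the model).} Fix drafter $i$ and round $t$. In speculative sampling, the drafted token at position $j$ is accepted with conditional probability $1-d_{TV}(p^{l(t)+j-1},q_i^{l(t)+j-1})=\alpha_{i,\cdot}$, given that all earlier drafted tokens were accepted. By Assumption~\ref{assumption:main_model_alignment}, these per-position acceptance probabilities are i.i.d.\ draws from $\nu_i$ with mean $\alpha_i$. Integrating out the $\alpha$'s, the acceptance indicators are therefore marginally i.i.d.\ Bernoulli$(\alpha_i)$. Then $N_{acc}(i,t)=\min(G,N_{max})$, where $G$ is the number of consecutive successes before the first failure, and
\[
\mathbb{P}(N_{acc}\geq k)=\alpha_i^k \quad \text{for } 1\leq k\leq N_{max}.
\]
This step is the only conceptual one, and I expect it to be the main obstacle. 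It requires that acceptance at position $j$ depends on the past only through $\alpha_{i,\cdot}$, and that conditioning on earlier acceptances does not bias the later $\alpha$'s. The i.i.d.\ clause in the assumption is exactly what allows the product $\prod_k \mathbb{E}[\alpha_{i,k}]=\alpha_i^k$ to be taken.

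\textbf{Step 2 (first moment).} Using the tail-sum formula,
\[
\mathbb{E}[N_{acc}]=\sum_{k=1}^{N_{max}}\alpha_i^k=\frac{\alpha_i-\alpha_i^{N_{max}+1}}{1-\alpha_i}.
\]
Dividing by $N_{max}$ gives the stated $\mathbb{E}[r^{BE}_{i,t}]$.

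\textbf{Step 3 (second moment).} Using $N^2=\sum_{k\leq N}(2k-1)$, we get
\[
\mathbb{E}[N_{acc}^2]=\sum_{k=1}^{N_{max}}(2k-1)\alpha_i^k .
\]
Write this as $2\sum_k k\alpha_i^k-\sum_k\alpha_i^k$, and use the closed form
\[
\sum_{k=1}^{N}k\alpha^k=\frac{\alpha\bigl(1-(N+1)\alpha^N+N\alpha^{N+1}\bigr)}{(1-\alpha)^2}.
\]
Subtract $\mathbb{E}[N_{acc}]^2$ and multiply through by $(1-\alpha_i)^2$. Expanding, the $\alpha^2$ terms cancel and the $\alpha^{N+2}$ coefficients combine to $2N-1+2=2N+1$. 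This leaves
\[
\alpha_i-(2N_{max}+1)\alpha_i^{N_{max}+1}+(2N_{max}+1)\alpha_i^{N_{max}+2}-\alpha_i^{2N_{max}+2}.
\]
Factoring out $\alpha_i$ and dividing by $N_{max}^2(1-\alpha_i)^2$ yields the stated variance of $r^{BE}_{i,t}=N_{acc}/N_{max}$.
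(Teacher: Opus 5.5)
Your proposal is correct and takes essentially the paper's route: both write $N_{acc}$ as a sum of the nested indicators $\mathbb{I}[N_{acc}\ge k]$ with $\mathbb{P}(N_{acc}\ge k)=\alpha_i^k$ under Assumption~\ref{assumption:main_model_alignment}, sum directly, and rescale by $N_{max}$. The only difference is cosmetic: you obtain the second moment as $\sum_{k}(2k-1)\alpha_i^k$ and subtract the squared mean, while the paper sums the covariances $\alpha_i^{m}-\alpha_i^{l+m}$ for $l\le m$; these agree because $\#\{(l,m):\max(l,m)=k\}=2k-1$, and your bookkeeping is the cleaner of the two.
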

\paragraph{Proof of Lemma~\ref{lemma:BEreward_stats}} We first start with calculating the expectation and variance of $N_{acc}$ which can be obtained in a closed form. Suppose we conduct one round of speculative decoding for candidate token indices $l+j$ for $j=1,\dots,N_{max}$. Now, define $E_{l+j}^{i}$ as the event of $(l\!+\!j)$-th token generated by drafter $i$ is accepted in the verification stage. Also, define random variable $X_{l+j}^i$ to be 1 when $E_{l+j}^{i}$ occurs and 0 otherwise.
With the stationary assumption, one can observe  $X_{l+j}^i$ follows Bernoulli distribution with mean $\alpha_{i}$. Now, expectation can be obtained as: 
\begin{equation}
\label{eq:N_acc_expectation}
\begin{aligned}
& \mathbb{E}[N_{acc}(i,t)]=\sum_{l=1}^{N_{max}}\mathbb{E}[X_{l+j}^i]
\\ & =\sum_{l=1}^{N_{max}}{\alpha_i^{l}}=\frac{\alpha_{i}-\alpha_{i}^{N_{max}+1}}{1-\alpha_i}.
\end{aligned}
\end{equation}

To obtain variance, from $X_{L+l}^{i}\sim Ber(\alpha_i^l)$, following holds:
\[\mathrm{Var}(X_{L+l}^{i}) = (\alpha_i^{l}-\alpha_i^{2l})\]
Now, we can directly obtain a closed form of the variance by,
\begin{equation}
\begin{aligned}
& \mathrm{Var}(N_{acc}(i,t)) = \mathrm{Var}(\sum_{l=1}^{N_{max}}X_{L+l}^{i})
\\
& =  \sum_{l=1}^{N_{max}}\mathrm{Var}(X_{L+l}^{i}) + 2\cdot\sum_{l<m}\mathrm{Cov}(X_{L+l}^{i},X_{L+m}^{i})
\\
& = 2\cdot\sum_{l=1}^{N_{max}}\sum_{m=l}^{N_{max}}\mathrm{Cov}(X_{L+l}^{i},X_{L+m}^{i}) 
\\ & -  \sum_{l=1}^{N_{max}}\mathrm{Var}(X_{L+l}^{i})
\\
& = 2\cdot \sum_{l=1}^{N_{max}}\sum_{m=l}^{N_{max}}(\alpha_i^m - \alpha_i^{m+l}) 
- \sum_{l=1}^{N_{max}}(\alpha_i^{l}-\alpha_i^{2l})
\end{aligned}
\end{equation}
\begin{equation*}
\begin{aligned}
& = 2 \cdot \sum_{l=1}^{N_{max}}\sum_{m=l}^{N_{max}}\alpha_i^m - 2\cdot\sum_{l=1}^{N_{max}}\sum_{m=l}^{N_{max}}\alpha_i^{m+l}
\\ & - \frac{\alpha_{i}(1-\alpha_{i}^{N_{max}})(1-\alpha_{i}^{N_{max}+1})}{1-\alpha_{i}^2}
\\
& = 2\cdot \sum_{l=1}^{N_{max}}l\cdot\alpha_i^l 
\\ & - 2\cdot \sum_{l=1}^{N_{max}}\alpha_i^l\left(\frac{\alpha_i^l-\alpha_i^{N_{max}+1}}{1-\alpha_i}\right) 
\\ & - \frac{\alpha_{i}(1-\alpha_{i}^{N_{max}})(1-\alpha_{i}^{N_{max}+1})}{1-\alpha_{i}^2}
\\
& = 2\cdot \sum_{l=1}^{N_{max}}l\cdot\alpha_i^l - 2\cdot \frac{1}{1-\alpha_i}\sum_{l=1}^{N_{max}}\alpha_i^{2l} 
\\ & + 2\cdot \frac{\alpha_i^{N_{max}+1}}{1-\alpha_i}\sum_{l=1}^{N_{max}}(\alpha_i^l 
\\ & - \frac{\alpha_{i}(1-\alpha_{i}^{N_{max}})(1-\alpha_{i}^{N_{max}+1})}{1-\alpha_{i}^2})
\\
& = \frac{2\alpha_i(N_{max}\cdot\alpha_i^{N_{max}+1}-(N_{max}+1)\alpha_i^{N_{max}}+1)}{(1-\alpha_i)^2} 
\\ & -\frac{2\alpha_i^2(1-\alpha_i^{2N_{max}})}{(1-\alpha_i)(1-\alpha_i^2)} 
\\ & + \frac{2\alpha_i^{N_{max}+2}(1-\alpha_i^{N_{max}})}{(1-\alpha_i)^2} 
\\ & - \frac{\alpha_{i}(1-\alpha_{i}^{N_{max}})(1-\alpha_{i}^{N_{max}+1})}{1-\alpha_{i}^2}.
\end{aligned}
\end{equation*}

The second equality comes from the basic property of variance, the fourth equality is from observing $\mathrm{Cov}(X_{L+l}^{i},X_{L+m}^{i})=\mathbb{E}[X_{L+l}^{i}X_{L+m}^{i}]-\mathbb{E}[X_{L+l}^{i}]\mathbb{E}[X_{L+m}^{i}]=\alpha_i^{m}-\alpha_i^{l+m}$. After rearranging the terms, we can obtain closed form of the variance as follows.
\begin{equation}
\label{eq:N_acc_variance}
\begin{aligned}
& \mathrm{Var}(N_{acc}(i,t)) =\frac{\alpha_i}{(1-\alpha_i)^2} \cdot 
\\ & (1-(2N_{max}+1)\alpha_i^{N_{max}}
\\ & + (2N_{max}+1)\alpha_i^{N_{max}+1}-\alpha_i^{2N_{max}+1})
\end{aligned}
\end{equation}
Since $r_{i,t}^{BE}=\frac{1}{N
_{max}}N_{acc}(i,t)$ by definition, plugging this into~\Autoref{eq:N_acc_expectation} and~\Autoref{eq:N_acc_variance} concludes the proof.

\hfill \qedsymbol

\paragraph{BD reward statistics}
Next, we obtain the expectation and variance of the BD reward by following lemma. 
\begin{lemma}
\label{lemma:BD_stats_asymptotic}
Following the relationships hold for $r_{i,t}^{BD}$ for all $i,t$:

\begin{equation}
\mathbb{E}[r_{i,t}^{BD}]=\alpha_i,  \mathrm{Var}[r_{i,t}^{BD}]\leq \frac{1}{4N_{max}}
\end{equation}
\end{lemma}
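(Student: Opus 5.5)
The plan is to apply Assumption~\ref{assumption:main_model_alignment} directly to the $N_{max}$ summands in Definition~\ref{def:BD_reward}. Fix a drafter $i$ and a round $t$, and write $Y_j := 1-d_{TV}(p^{l(t)+j},q_i^{l(t)+j})$ for $j=0,\dots,N_{max}-1$. Then
\[
r^{BD}_{i,t}=\frac{1}{N_{max}}\sum_{j=0}^{N_{max}-1} Y_j .
\]
Each $Y_j$ is a per-position alignment $\alpha_{i,\cdot}$ at the positions $l(t)+j$. These positions are evaluated on the context generated by the target model, which is also what lossless SD produces. By the assumption, the $Y_j$ are therefore i.i.d.\ draws from $\nu_i$ with mean $\alpha_i$. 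Since a TV distance lies in $[0,1]$, each $Y_j\in[0,1]$.

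\textbf{Expectation.} By linearity,
\[
\mathbb{E}[r^{BD}_{i,t}]=\frac{1}{N_{max}}\sum_{j=0}^{N_{max}-1}\mathbb{E}[Y_j]=\alpha_i .
\]
This step needs no independence and no control over $l(t)$ beyond the i.i.d.\ assumption holding at every position.

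\textbf{Variance.} Independence of the $Y_j$ gives
\[
\mathrm{Var}[r^{BD}_{i,t}]=\frac{1}{N_{max}^2}\sum_{j=0}^{N_{max}-1}\mathrm{Var}[Y_j]=\frac{\mathrm{Var}_{\nu_i}[Y]}{N_{max}} .
\]
Any random variable supported on $[0,1]$ with mean $\mu$ satisfies $\mathbb{E}[Y^2]\le \mathbb{E}[Y]$, so
\[
\mathrm{Var}[Y]\le \mu-\mu^2\le \tfrac14 .
\]
This is Popoviciu's bound, and it yields $\mathrm{Var}[r^{BD}_{i,t}]\le 1/(4N_{max})$. The same computation gives the slightly sharper bound $\alpha_i(1-\alpha_i)/N_{max}$, which may be worth recording for the later comparison with Lemma~\ref{lemma:BEreward_stats}.

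\textbf{Main obstacle.} The delicate step is justifying that the $N_{max}$ summands in one block are genuinely i.i.d.\ from $\nu_i$. Two issues arise: the block starting index $l(t)$ is itself random and depends on past acceptances, and the positions $l(t)+j$ beyond the first rejection are evaluated on drafted rather than target-generated context. I would handle this by conditioning on $l(t)$, which is a stopping-time-type quantity determined by the past. Under the assumption, the sequence $(\alpha_{i,s})_s$ is i.i.d.\ and independent of the history, so conditioning on $l(t)$ leaves the $Y_j$ i.i.d.\ with law $\nu_i$; the result then follows by the tower rule for the mean and the law of total variance. I would read the assumption as applying to every context arising in SD, which resolves the second issue. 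If independence were unavailable, the fallback would be a weaker statement. The mean would still be $\alpha_i$, but the variance would carry covariance terms and could be as large as $1/4$ rather than $1/(4N_{max})$. So I will state explicitly that the $1/N_{max}$ factor rests on the independence part of Assumption~\ref{assumption:main_model_alignment}.
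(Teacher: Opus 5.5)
Your proposal is correct and matches the paper's proof, which also treats $r^{BD}_{i,t}$ as the empirical mean of $N_{max}$ i.i.d.\ $[0,1]$-valued draws from $\nu_i$ under the stationarity assumption and bounds each draw's variance by $1/4$. You are more careful than the paper, which leaves the randomness of $l(t)$ and the post-rejection drafted contexts implicit; note that your conditioning-on-$l(t)$ and tower-rule argument is needed for the mean too, since at a random start position linearity alone does not give $\mathbb{E}[Y_j]=\alpha_i$.
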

\paragraph{Proof of Lemma~\ref{lemma:BD_stats_asymptotic}}
Under stationary assumption, any random variable which is bounded in $[0,1]$ has variance less than $\frac{1}{4}$. Since in~\Autoref{eq:BDreward_definition}, $r_{i,t}^{BD}$ is constructed by empirical mean of $N_{max}$ numbers of samples under stationary assumption, following holds:
\begin{equation*}
\begin{aligned}
& \mathrm{Var}[r_{i,t}] 
\\ & = \mathrm{Var}\left[\frac{1}{N_{max}}\sum_{j=0}^{N_{max}-1}(1-d_{TV}(p^{l(t)+j},q_i^{l(t)+j})\right]
\\ & \leq \frac{1}{4N_{max}},   
\end{aligned} 
\end{equation*}
and this concludes the proof. 
\hfill\qedsymbol

\paragraph{Relationship between expectations of two rewards.}
Combining above lemmas, one can show that the expectation of the BD reward is proportional to the BE reward.
\begin{lemma}
\label{lemma:relationship_between_expectations}
Following relationship holds between the expectation of the BE reward and the expectation of the BD reward:
\begin{equation}
\mathbb{E}[r_{i,t}^{BE}] = \frac{1-\alpha_i^{N_{max}}}{N_{max}(1-\alpha_i)}\mathbb{E}[r_{i,t}^{BD}].   
\end{equation}
\end{lemma}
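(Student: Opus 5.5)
The plan is to combine the two preceding statistical lemmas, since the claimed identity is purely algebraic once both expectations are in closed form. By \Autoref{lemma:BD_stats_asymptotic}, $\mathbb{E}[r_{i,t}^{BD}]=\alpha_i$. This follows from linearity of expectation applied to \Autoref{eq:BDreward_definition}. Each summand $1-d_{TV}(p^{l(t)+j},q_i^{l(t)+j})$ has mean $\alpha_i$ under \Autoref{assumption:main_model_alignment}, so the average of $N_{max}$ such terms also has mean $\alpha_i$. By \Autoref{lemma:BEreward_stats},
\[
\mathbb{E}[r_{i,t}^{BE}]=\frac{\alpha_i-\alpha_i^{N_{max}+1}}{N_{max}(1-\alpha_i)}.
\]
This was obtained by writing $N_{acc}(i,t)=\sum_{l=1}^{N_{max}}X^i_{L+l}$ with $\mathbb{E}[X^i_{L+l}]=\alpha_i^l$ and summing the geometric series.

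The key step is to factor $\alpha_i$ out of the numerator, which gives $\alpha_i-\alpha_i^{N_{max}+1}=\alpha_i(1-\alpha_i^{N_{max}})$. Substituting $\alpha_i=\mathbb{E}[r_{i,t}^{BD}]$ then yields
\[
\mathbb{E}[r_{i,t}^{BE}]=\frac{1-\alpha_i^{N_{max}}}{N_{max}(1-\alpha_i)}\,\mathbb{E}[r_{i,t}^{BD}],
\]
which is exactly the claim. The degenerate case $\alpha_i=1$ is handled by continuity. In that case both sides equal $1$, because the prefactor $\frac{1-\alpha^{N_{max}}}{1-\alpha}=\sum_{k=0}^{N_{max}-1}\alpha^k$ tends to $N_{max}$.

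The only point I expect to need care is justifying $\mathbb{E}[X^i_{L+l}]=\alpha_i^l$, that is, that the acceptance probability at each position equals $\alpha_i$ and that these events multiply along the prefix. The expectation of $\min(1,p/q)$ under $q$ is $1-d_{TV}(p,q)$, which equals $\alpha_{i,t}$. Conditional on the prefix being accepted, the i.i.d. part of \Autoref{assumption:main_model_alignment} makes the probabilities multiply, giving $\alpha_i^l$.

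To get the final remark in \Autoref{lemma:BE_BD_equation}, I would note that the map $\alpha\mapsto\sum_{k=1}^{N_{max}}\alpha^k$ is strictly increasing on $[0,1]$. Hence ordering drafters by $\mathbb{E}[r^{BD}]=\alpha_i$ coincides with ordering them by $\mathbb{E}[N_{acc}]$.
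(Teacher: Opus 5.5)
Your proposal is correct and follows the paper's proof: the paper likewise combines \Autoref{lemma:BEreward_stats} and \Autoref{lemma:BD_stats_asymptotic}, substituting $\mathbb{E}[r_{i,t}^{BD}]=\alpha_i$ into $\mathbb{E}[r_{i,t}^{BE}]=\alpha_i(1-\alpha_i^{N_{max}})/(N_{max}(1-\alpha_i))$. Your extra remarks — that the first $l$ drafted tokens are all accepted with probability $\alpha_i^l$, and the continuity treatment of $\alpha_i=1$ — are sound additions that the paper leaves implicit.
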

\begin{proof}
    Combining~\Autoref{lemma:BEreward_stats} and~\Autoref{lemma:BD_stats_asymptotic} directly gives the result.
\end{proof}
Note that $\alpha_{i,t}$ can be interpreted as the acceptance rate for the $t$-th token generated by the $i$-th model~\cite{speculative_decode}.

\paragraph{Bandit feedback signal}
Next, we formally define the feedback signal with following definition.

\begin{definition}[Feedback signal]
Under stationary environment, any reward design $r_i$ with $\mu_i=\mathbb{E}[r_i]$, $i^{\star}=\argmax{\mu_i}$, and $\Delta_i=\mu_i^{\star}-\mu_i$, we define feedback signal for each suboptimal arm $i\neq i^{\star}$ as follows. 

\label{def:variance_to_deltasquare}
\begin{equation*}
R(r_i):=\frac{\Delta_i^2}{\max({\mathrm{Var}[r_i],\mathrm{Var}[r_{i^{\star}}]})}
\end{equation*}
\end{definition}
As we will see, $R$ become crucial factor that governs regret upper bound of our MetaSD-UCB algorithm. Specifically, the lower $R(r_i)$ guarantees smaller amount of regret by picking suboptimal arm $i$.

Then, we provide a formal version of \Autoref{theorem:BD,BE reward comparison} which states the BD reward actually has lower feedback signal compared to the BE reward.
\begin{theorem}[Formal version of \Autoref{theorem:BD,BE reward comparison}]
\label{theorem:bandit_signal_ratio}
Denote $\Delta(\alpha_i):= \alpha_{i^{\star}}-\alpha_i$ for any suboptimal arm $i$ and $n:=N_{max}$ for notational convenience. For any $n\in\mathcal{N}$, define functions $f_n,g_n,h_n$ on $(0,1)$ by $f_n(x)=\frac{x-x^{n+1}}{1-x}$, $g_n(x)=f_n'(x)=\sum_{s=1}^{n}sx^{s-1}$, and $h_n(x)=\sum_{s=1}^{n}s(x^{s-1}-x^{2n-s})$. Then following holds:

\begin{equation}
R(r_i^{BD}) \geq {4(\Delta(\alpha_i))^2N_{max}}.
\end{equation}

Also, following holds for any drafter configuration satisfying $h_n(\alpha_{i^{\star}})\geq \frac{g_n(\alpha_{i^{\star}})^2}{4n\alpha_{i^{\star}}}$ and
$\mathrm{Var}[r_i^{BE}]<\mathrm{Var}[r_{i^{\star}}^{BE}]$:

\begin{equation}
\label{eq:Bandit_signal_BE_lowerbound}
R(r_i^{BD}) > R(r_i^{BE}).    
\end{equation}
\end{theorem}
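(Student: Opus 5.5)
The first bound follows directly from \Autoref{lemma:BD_stats_asymptotic}. Since $\mathbb{E}[r_i^{BD}]=\alpha_i$, the BD gap is exactly $\Delta_i^{BD}=\alpha_{i^{\star}}-\alpha_i=\Delta(\alpha_i)$. Since every BD reward has variance at most $\frac{1}{4N_{max}}$, the maximum of the two variances in the denominator of $R$ is also at most $\frac{1}{4N_{max}}$. Substituting these into \Autoref{def:variance_to_deltasquare} gives $R(r_i^{BD})\geq 4\Delta(\alpha_i)^2 N_{max}$.

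For the comparison, I would compute $R(r_i^{BE})$ using \Autoref{lemma:BEreward_stats} in the notation of the theorem, with $n=N_{max}$. The mean is $\mathbb{E}[r_i^{BE}]=f_n(\alpha_i)/n$. The variance hypothesis $\mathrm{Var}[r_i^{BE}]<\mathrm{Var}[r_{i^{\star}}^{BE}]$ means the maximum in the denominator is attained by the optimal arm. Hence
\[
R(r_i^{BE})=\frac{(f_n(\alpha_{i^{\star}})-f_n(\alpha_i))^2}{n^2\,\mathrm{Var}[r_{i^{\star}}^{BE}]}.
\]
The numerator is controlled by the mean value theorem: $f_n(\alpha_{i^{\star}})-f_n(\alpha_i)=g_n(\xi)\Delta(\alpha_i)$ for some $\xi\in(\alpha_i,\alpha_{i^{\star}})$. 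Because $g_n$ has nonnegative coefficients it is increasing on $(0,1)$, so $g_n(\xi)\leq g_n(\alpha_{i^{\star}})$. This gives
\[
R(r_i^{BE})\leq \frac{g_n(\alpha_{i^{\star}})^2\Delta(\alpha_i)^2}{n^2\,\mathrm{Var}[r_{i^{\star}}^{BE}]}.
\]

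Next I would identify the variance. Using \Autoref{eq:N_acc_variance}, I would check that $\mathrm{Var}[N_{acc}]=\alpha\,h_n(\alpha)$, so that $n^2\,\mathrm{Var}[r_{i^{\star}}^{BE}]=\alpha_{i^{\star}}h_n(\alpha_{i^{\star}})$. I expect this to be the main obstacle, because it is a purely algebraic identity. The plan is to expand $\sum_{s=1}^n s x^{s-1}$ and $\sum_{s=1}^n s x^{2n-s}$ in closed form, multiply by $x(1-x)^2$, and match the result against $x\bigl(1-(2n+1)x^n+(2n+1)x^{n+1}-x^{2n+1}\bigr)$. As sanity checks, $n=1$ gives $\alpha(1-\alpha)$ on both sides, and $n=2$ should give $\alpha+2\alpha^2-\alpha^3-2\alpha^4$ on both sides. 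If the normalization turns out to differ by a factor, I would absorb it into the hypothesis, since the theorem's hypothesis is clearly calibrated to this identity.

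With the identity in hand, the hypothesis $h_n(\alpha_{i^{\star}})\geq \frac{g_n(\alpha_{i^{\star}})^2}{4n\alpha_{i^{\star}}}$ gives $\alpha_{i^{\star}}h_n(\alpha_{i^{\star}})\geq \frac{g_n(\alpha_{i^{\star}})^2}{4n}$. Therefore
\[
R(r_i^{BE})\leq 4n\,\Delta(\alpha_i)^2\leq R(r_i^{BD}).
\]
Strictness comes from the mean value step: $g_n$ is strictly increasing for $n\geq 2$, so $g_n(\xi)<g_n(\alpha_{i^{\star}})$ because $\xi<\alpha_{i^{\star}}$. In the degenerate case $n=1$, I would instead use the fact that the BD variance bound $\frac14$ is strict unless $\alpha_{i,t}$ takes only the values $0$ and $1$.
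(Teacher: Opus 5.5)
Your proposal is correct and is essentially the paper's proof. The paper likewise reads the BD bound off Lemma~\ref{lemma:BD_stats_asymptotic}, uses the variance hypothesis to keep only $\mathrm{Var}[r_{i^{\star}}^{BE}]$, and bounds $f_n(\alpha_{i^{\star}})-f_n(\alpha_i)\le g_n(\alpha_{i^{\star}})\Delta(\alpha_i)$ by convexity of $f_n$, which is the same fact as your mean-value step with monotone $g_n$. It then rewrites $n^2\mathrm{Var}[r_{i^{\star}}^{BE}]$ as $\alpha_{i^{\star}}h_n(\alpha_{i^{\star}})$ and finishes with the hypothesis. That identity holds exactly, so no renormalization of the hypothesis is needed; note that your $n=2$ check should read $\alpha+2\alpha^2-2\alpha^3-\alpha^4$.

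Your handling of strictness is sharper than the paper's, whose convexity step is not strict when $n=1$ since $f_1(x)=x$. However, the exception you leave open is a genuine counterexample, not a gap you can patch. For $n=1$ the hypothesis forces $\alpha_{i^{\star}}=\tfrac12$. If in addition $\alpha_{i^{\star},t}\in\{0,1\}$ (e.g.\ greedy decoding, where the BD and BE rewards coincide for $n=1$), then $R(r_i^{BD})=R(r_i^{BE})=4\Delta(\alpha_i)^2$. So the strict inequality needs $n\ge 2$ or a non-degeneracy assumption.
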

\begin{proof}
Upper bound for the BD reward can be directly obtained from~\Autoref{lemma:BD_stats_asymptotic}. To prove~\Autoref{eq:Bandit_signal_BE_lowerbound}, denote $N_{max}=n$ for notational convenience. Then, by directly applying~\Autoref{lemma:BEreward_stats}, it is observed that

\begin{equation}
\label{eq:proof_BEreward_bandit_signal}
\begin{aligned}
& \frac{1}{R(r_i^{BE})} = \frac{\max({\mathrm{Var}[r_i^{BE}],\mathrm{Var}[r_{i^{\star}}^{BE}]})}{\Delta_i^2}
\\
& =\frac{\alpha_{i^{\star}}(1-(2n+1)\alpha_{i^{\star}}^{n}+(2n+1)\alpha_{i^{\star}}^{n+1}-\alpha_{i^{\star}}^{2n+1})}{(f_n(\alpha_i^{\star})-f_n(\alpha_i))^2(1-\alpha_{i^{\star}})^2}
\\
& > \frac{\alpha_{i^{\star}}(1-(2n+1)\alpha_{i^{\star}}^{n}+(2n+1)\alpha_j^{n+1}-\alpha_{i^{\star}}^{2n+1})}{(g_n\left(\alpha_{i^{\star}})\Delta(\alpha_i))^2(1-\alpha_{i^{\star}}\right)^2}
\\
& = \frac{\alpha_{i^{\star}} h_n(\alpha_{i^{\star}})}{(g_n(\alpha_{i^{\star}})\Delta(\alpha_i))^2}
\\
& \geq \frac{1}{4(\Delta(\alpha_i))^2N_{max}},
\end{aligned}
\end{equation}

where the first inequality is from $f_n$ is a convex function, the second equality comes from~\Autoref{lemma:BEreward_stats}, and the last line comes from the assumption.

\end{proof}

\paragraph{Practical considerations}

While \Autoref{theorem:bandit_signal_ratio} provides a general scenario, the inequalities used in its derivation can be quite loose in certain cases.
In practice, the BD reward often exhibits a significantly smaller feedback signal $R(r_i)$ than the BE reward. For example, consider the case where $N_{max}=5$, which is the setting used in our main experiments. The condition $h_n(\alpha_{i^{\star}})>\frac{g_n(\alpha_{i^{\star}})}{4n\alpha_{i^{\star}}}$ holds for $0.06<\alpha_{i^{\star}}<0.8$, which covers most of the practical range of $\alpha_{i^{\star}}$. This implies that, in many realistic scenarios, the BD reward leads to a substantially tighter regret bound compared to the BE reward, further supporting its effectiveness in the MetaSD framework.
Moreover, assumption of $\mathrm{Var}[r_i^{BE}]<\mathrm{Var}[r_{i^{\star}}^{BE}]$ covers most of the practical scenarios. As an example, if $n=5$, $\mathrm{Var}[r_i^{BE}]$ is monotonically increasing until $\alpha_i=0.815$. Consequently, for any drafter set with $\alpha_{i^{\star}}<0.815$, $\mathrm{Var}[r_i^{BE}]<\mathrm{Var}[r_{i^{\star}}^{BE}]$ holds for all suboptimal drafters.

\subsection{Stopping time regret}
\label{appendix:N_acc and stopping time relationship}
In this subsection, we provide the equivalence relation between two objectives, maximizing the reward and minimizing the stopping time. First, we define the regret of MetaSD in terms of the stopping time. Denote $\tau(\pi,B)$ as the stopping time for any policy $\pi$ with target sequence length $B$ and $\pi^{\star}$ as the optimal policy. In~\Autoref{def:stopping_time_regret}, stopping time regret of policy $\pi$ with $B$ is defined as:
\begin{equation}
\label{eq:app_stoppingtime_regret}
\textsc{Reg}^{s}(\pi,B) = \mathbb{E}[\tau(\pi,B)] - \mathbb{E}[\tau(\pi^{\star},B)] .
\end{equation}
Intuitively, minimizing $\textsc{Reg}^{s}(\pi,B)$ should guarantee optimal speedup since minimizing $\tau(\pi,B)$ implies minimizing the number of total SD round. The following lemma proves that our reward design is well aligned with such objective.
\begin{lemma}[BE reward original regret]
\label{lemma:reward_stoppingtime_equivalence}
For any policy $\pi$ with the target sequence length $B$, denote the original regret objective using the BE reward as 
\begin{equation}
    \textsc{Reg}^{o,BE}(\pi,T)=\sum_{t=1}^{T}(\mathbb{E}[r_{i^{\star}}]-\mathbb{E}[r_{a_t}]).
\end{equation}

Then, the following equation holds:
\begin{equation}
    \textsc{Reg}^{o,BE}(\pi,T) = \frac{1}{N_{max}}\textsc{Reg}^{s}(\pi,B).
\end{equation}
Consequently, minimizing the regret in terms of accepted tokens is equivalent to minimizing $\textsc{Reg}^{(s)}(\pi,B)$.
\end{lemma}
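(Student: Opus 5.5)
The plan is to derive the identity from the token-accounting equation already used in the proof of the first lemma of \Autoref{sec2}:
\begin{equation*}
B=\tau(\pi,B)+\sum_{t=1}^{\tau(\pi,B)}N_{acc}(a_t,t),
\end{equation*}
which holds for every policy $\pi$ and every realization. The horizon $T$ in $\textsc{Reg}^{o,BE}(\pi,T)$ is not fixed in MetaSD. I will therefore read it as the random stopping time of the run being evaluated. This means $T=\tau(\pi,B)$ for the term of $\pi$ and $T=\tau(\pi^{\star},B)$ for the benchmark term. Under this reading, the first sum $\sum_t\mathbb{E}[r_{i^{\star}}]$ is the expected cumulative BE reward collected by the optimal policy $\pi^{\star}$ (which always plays $i^{\star}$) up to its own stopping time. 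The second sum is the expected cumulative BE reward of $\pi$ up to $\tau(\pi,B)$.

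\textbf{Step 1.} I will check that both cumulative expectations are well defined. Each round produces at least one token, so $\tau(\pi,B)\le B$ is a bounded stopping time with respect to the filtration of past choices and rewards. In addition, $r^{BE}\in[0,1]$. Hence $\mathbb{E}\big[\sum_{t=1}^{\tau}r^{BE}_{a_t,t}\big]$ is finite. By optional stopping (or Wald's identity applied to the martingale $\sum_{t}(r_{a_t,t}-\mathbb{E}[r_{a_t}\mid a_t])$), this expectation equals $\mathbb{E}\big[\sum_{t=1}^{\tau}\mathbb{E}[r_{a_t}]\big]$. This justifies writing the regret with per-round means.

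\textbf{Step 2.} I will substitute $r^{BE}_{i,t}=N_{acc}(i,t)/N_{max}$ and use the accounting equation pathwise. This gives
\begin{equation*}
\sum_{t=1}^{\tau(\pi,B)}r^{BE}_{a_t,t}=\frac{B-\tau(\pi,B)}{N_{max}},
\end{equation*}
and similarly for $\pi^{\star}$.

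\textbf{Step 3.} I will take expectations and subtract the two expressions. The common term $B/N_{max}$ cancels, leaving
\begin{equation*}
\frac{1}{N_{max}}\big(\mathbb{E}[\tau(\pi,B)]-\mathbb{E}[\tau(\pi^{\star},B)]\big)=\frac{1}{N_{max}}\textsc{Reg}^{s}(\pi,B).
\end{equation*}
This is the claimed identity. The consequence then follows: since $N_{max}$ is a fixed positive constant, minimizing one regret is the same as minimizing the other. The same computation also shows that maximizing the expected number of accepted tokens is equivalent to minimizing the stopping time.

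\textbf{Main obstacle.} The delicate point is not the algebra but the horizon mismatch. A naive reading with a common deterministic $T$ for both terms would produce an extra factor $1+N_{max}\mu_{i^{\star}}$ and leftover boundary terms. I will address this first by making the per-run-horizon interpretation explicit, so that the benchmark is the cumulative reward of $\pi^{\star}$ over its own run. I will also rely on the paper's accounting convention that the final block is truncated so that the generated length is exactly $B$. This convention makes the accounting equation exact rather than approximate, and that exactness is what delivers the clean factor $1/N_{max}$.
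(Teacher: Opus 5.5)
Your proposal is correct and follows essentially the paper's own proof. Both write the pathwise accounting identity $B=\tau(\pi,B)+N_{max}\sum_{t=1}^{\tau(\pi,B)}r^{BE}_{a_t,t}$ for $\pi$ and for $\pi^{\star}$, subtract, and take expectations; your per-run-horizon reading and the optional-stopping step just make explicit what the paper's single sum ``up to $\tau(B)$'' leaves implicit.

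One small caution: truncating the last block (to make the accounting exact) and replacing realized rewards by per-round means $\mu_{a_t}$ via optional stopping are not exact at the same time. Once at most $N_{max}$ tokens remain, the truncated reward's conditional mean is generally smaller than $\mu_{a_t}$. So either read $\mathbb{E}[r_{a_t}]$ as the expectation of the realized, truncated reward, or accept an $O(1)$ boundary term; the paper's proof silently ignores the same point.
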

\begin{proof}
It is observed that
\begin{equation*}
\begin{aligned}
B & = \sum_{t=1}^{\tau(B)}(N_{acc}(i,t)+1) 
\\ & = \tau(B) + \sum_{t=1}^{\tau(B)}N_{acc}(i,t)
\\ & = \tau(B) + N_{max}\sum_{t=1}^{\tau(B)}r_{a_t,t}.
\end{aligned}
\end{equation*}
which leads to,
\begin{equation}
\label{eq:reward_stoppingtime_relation}
\tau(\pi,B) - \tau(\pi^{\star},B)  = N_{max}\sum_{t=1}^{\tau(B)}(r_{a_t^{\star},t}-r_{a_t,t}),
\end{equation}
where $a_t^{\star}$ is the action from the optimal policy $\pi^{\star}$ in round $t$. By taking the expectation on both sides, we get the result.
\end{proof}

However, above result does not hold in every reward design as can be seen in the following Lemma.

\begin{lemma}[BD reward original regret]
\label{lemma:BDreward_regret_compare}
For any policy $\pi$ with the fixed target sequence length $B$, denote the original regret objective using the BE reward as $\textsc{Reg}^{o,BD}(\pi,T)=\sum_{t=1}^{T}(\mathbb{E}[r_{i^{\star}}]-\mathbb{E}[r_{a_t}])$ . Then, there exists a bandit instance with the two different policies $\pi_1,\pi_2$ such that:
\begin{equation}
\begin{aligned}
&\mathbb{E}[Reg^{o,BD}(\pi_1,B)] < \mathbb{E}[Reg^{o,BD}(\pi_2,B)],
\\
& \mathbb{E}[Reg^s(\pi_1,B)] > \mathbb{E}[Reg^s(\pi_2,B)].
\end{aligned}    
\end{equation}
\end{lemma}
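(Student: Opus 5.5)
We prove Lemma~\ref{lemma:BDreward_regret_compare} with an explicit two-drafter counterexample. The source of the discrepancy is that the BD reward is linear in the per-token acceptance rate $\alpha_i$, whereas the number of rounds is governed by the expected accepted block length $f_n(\alpha_i)=\frac{\alpha_i-\alpha_i^{n+1}}{1-\alpha_i}$, with $n=N_{max}$. Since $f_n$ is convex and nonlinear (Lemma~\ref{lemma:BEreward_stats}), a policy that loses less $\alpha$-mass can still lose more accepted tokens. We therefore aim for two policies whose regret orderings are reversed under the two objectives.

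\textbf{Instance.} Take $K=3$ arms with deterministic acceptance rates, i.e.\ $\nu_i$ a point mass at $\alpha_i$. This is allowed by Assumption~\ref{assumption:main_model_alignment}, and it gives $\mathbb{E}[r^{BD}_{i,t}]=\alpha_i$ by Lemma~\ref{lemma:BD_stats_asymptotic}. Choose
\[
\alpha_1=\alpha^{\star} \text{ (the optimal arm)},\qquad \alpha_2,\ \alpha_3<\alpha^{\star}.
\]
Both $\pi_1$ and $\pi_2$ are non-adaptive. Policy $\pi_1$ plays arm~2 on a fixed fraction $\rho_2$ of rounds and arm~1 otherwise. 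Policy $\pi_2$ plays arm~3 on a fraction $\rho_3$ of rounds and arm~1 otherwise.

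\textbf{Computing the two regrets.} By renewal-reward reasoning (Wald's identity applied to $B=\tau+\sum_t N_{acc}$, as in Lemma~\ref{lemma:reward_stoppingtime_equivalence}), as $B\to\infty$,
\[
\mathbb{E}[\tau(\pi_k,B)]\approx \frac{B}{1+\bar f_k},
\]
where $\bar f_k$ is the mixture average of $f_n(\alpha_i)$ under $\pi_k$. Likewise, $\mathbb{E}[\textsc{Reg}^{o,BD}(\pi_k,B)]\approx \mathbb{E}[\tau(\pi_k,B)]\,\rho\,(\alpha^{\star}-\alpha_{\mathrm{sub}})$, where $\rho$ is the fraction of suboptimal plays and $\alpha_{\mathrm{sub}}$ is the rate of the suboptimal arm that policy uses. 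The stopping-time regret is monotone in the accepted-token loss $\rho\,(f_n(\alpha^{\star})-f_n(\alpha_{\mathrm{sub}}))$.

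\textbf{Choosing the parameters.} The BD ordering depends on the linear gap $\alpha^{\star}-\alpha_i$, while the stopping-time ordering depends on the $f_n$-gap $f_n(\alpha^{\star})-f_n(\alpha_i)$. These are not proportional, because $f_n(\alpha^{\star})-f_n(\alpha)$ is not a linear function of $\alpha^{\star}-\alpha$.

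1. Pick $\alpha^{\star}$ near $1$ and $n=5$, so that $f_n$ is steep near $\alpha^{\star}$.
2. Pick $\alpha_2=\alpha^{\star}-\delta$ with $\delta$ small, and $\alpha_3=\alpha^{\star}-3\delta$, say.
3. Tune $\rho_2$ and $\rho_3$ so that
\[
\rho_2(\alpha^{\star}-\alpha_2)<\rho_3(\alpha^{\star}-\alpha_3)
\quad\text{but}\quad
\rho_2\bigl(f_n(\alpha^{\star})-f_n(\alpha_2)\bigr)>\rho_3\bigl(f_n(\alpha^{\star})-f_n(\alpha_3)\bigr).
\]

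Both inequalities can hold simultaneously exactly when the ratio of $f_n$-gaps differs from the ratio of linear gaps. By strict convexity of $f_n$, the ratio $\frac{f_n(\alpha^{\star})-f_n(\alpha_2)}{\alpha^{\star}-\alpha_2}$ exceeds $\frac{f_n(\alpha^{\star})-f_n(\alpha_3)}{\alpha^{\star}-\alpha_3}$, since secant slopes increase. This leaves an open interval of admissible values for $\rho_2/\rho_3$. Any $\rho_2/\rho_3$ in that interval yields the two claimed inequalities, which proves the lemma.

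\textbf{Main obstacle.} The hard step is making the asymptotic approximation of $\mathbb{E}[\tau]$ rigorous at finite $B$. The regrets are differences of $\mathbb{E}[\tau]$, and the error terms in the renewal approximation could swamp the gap between the two policies.

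To handle this, use deterministic schedules such as periodic play, so that the fluctuations are only those of $N_{acc}$. Then apply Wald's identity with bounded overshoot: the final round adds at most $n+1$ tokens. This gives
\[
\mathbb{E}[\tau(\pi_k,B)]=\frac{B}{1+\bar f_k}+O(n).
\]
The gap between the two policies grows linearly in $B$, so choosing $B$ large enough makes both inequalities hold strictly.
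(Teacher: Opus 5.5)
Your mechanism is right, and your Wald-plus-bounded-overshoot argument for a periodic schedule does give $\mathbb{E}[\tau(\pi_k,B)]=B/(1+\bar f_k)+O(1)$. The mechanism is that convexity of $f_n(x)=x+\cdots+x^n$ makes the secant slope $\bigl(f_n(\alpha^{\star})-f_n(\alpha)\bigr)/(\alpha^{\star}-\alpha)$ increase in $\alpha$, so the linear and accepted-token orderings can disagree.

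\textbf{The gap is in the parameter choice.} You write $\mathbb{E}[\textsc{Reg}^{o,BD}(\pi_k,B)]\approx\mathbb{E}[\tau(\pi_k,B)]\,\rho\,(\alpha^{\star}-\alpha_{\mathrm{sub}})$. But you then compare only $\rho_2(\alpha^{\star}-\alpha_2)$ with $\rho_3(\alpha^{\star}-\alpha_3)$, which treats $\mathbb{E}[\tau(\pi_k,B)]$ as common to both policies. It is not common. Your second inequality forces $\bar f_1<\bar f_2$, so $\pi_1$ runs for more rounds, and that inflates exactly the BD regret you need to be smaller. The correct leading-order condition is $\rho_2(\alpha^{\star}-\alpha_2)/(1+\bar f_1)<\rho_3(\alpha^{\star}-\alpha_3)/(1+\bar f_2)$, which is strictly stronger than yours. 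So the claim that \emph{any} $\rho_2/\rho_3$ in your open interval works is false. At the upper endpoint $\rho_2/\rho_3=(\alpha^{\star}-\alpha_3)/(\alpha^{\star}-\alpha_2)$ the BD regret of $\pi_1$ is strictly larger, and by continuity this persists slightly below it. Concretely, take $n=5$, $\alpha^{\star}=0.9$, $\alpha_2=0.85$, $\alpha_3=0.75$; your interval is about $(2.62,3)$. With $\rho_3=0.3$ and $\rho_2=0.897$, the coefficient of $B$ in the BD regret is about $0.01066$ for $\pi_1$ and $0.01055$ for $\pi_2$. The first inequality of the lemma fails there while the second holds.

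\textbf{The repair.} Write $\delta_j=\alpha^{\star}-\alpha_j$ and $F_j=f_n(\alpha^{\star})-f_n(\alpha_j)$. Fix $c=\rho_2/\rho_3$ strictly inside your interval and let $\rho_3\to 0$ through rationals.
\begin{itemize}
\item The stopping-time coefficient $1/(1+\bar f_1)-1/(1+\bar f_2)$ stays positive for every $\rho_3$, because $c>F_3/F_2$.
\item The BD coefficient is $\rho_3\bigl[\delta_3/(1+\bar f_2)-c\,\delta_2/(1+\bar f_1)\bigr]$. Its bracket tends to $(\delta_3-c\,\delta_2)/(1+f_n(\alpha^{\star}))>0$.
\end{itemize}
Both separations are then linear in $B$, and your $O(1)$ errors are swamped for large $B$.

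\textbf{How the paper avoids this coupling.} It takes $\alpha=(0.1,0.5,0.8)$ and $N_{max}=2$, and confines all suboptimal pulls to the first two rounds: one drafter-1 pull versus two drafter-2 pulls, then drafter 3 forever. The BD regrets are then exactly $0.7$ and $0.6$ regardless of $\tau$. The stopping-time comparison reduces to expected accepted tokens in those two rounds, $1.55$ versus $1.50$. Note that the paper's labels $\pi_1,\pi_2$ are swapped relative to the statement. That construction is more explicit, but its stopping-time gap is only $O(1)$ rounds. Turning ``more accepted tokens early'' into ``smaller $\mathbb{E}[\tau]$'' rigorously needs a sharper renewal estimate or an exact computation. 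Your linear-in-$B$ separation needs only Wald's identity, once the parameter condition is corrected.
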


\begin{proof}
Suppose we have three drafters with $\alpha_1=0.1,\alpha_2=0.5,\alpha_3=0.8$ with $N_{max}=2$. Consider $\pi_1$ as the deterministic policy where it picks the drafter $1$ for the first round and pick the drafter $3$ rest of the rounds. Also, $\pi_2$ be the policy which picks drafter $2$ for the first two rounds and drafter $3$ for the rest of the rounds. For the original regret objective, $\pi_1$ has expected regret of $0.7$ while $\pi_2$ has expected regret $0.6$. However, expectation of number of accepted tokens until first two rounds becomes $(0.1+0.1^2) + (0.8+0.8^2) = 1.55$ for $\pi_1$ and $2(0.5+0.5^2) = 1.50$ for $\pi_2$. Since policy for the rest of the rounds are the same, we can conclude that the expected stopping time of policy $\pi_1$ is less than the expected stopping time of the policy $\pi_2$. As a result, $\pi_2$ is better in terms of original regret objective and $\pi_1$ is better with stopping time regret objective.  
\end{proof}

The above lemma indicates that if we take BD reward design with original bandit regret objective in~\citep{lattimore2020bandit}, the optimal bandit algorithm may not be optimal in MetaSD algorithm. This necessitates us to define a new regret objective in~\Autoref{def:stopping_time_regret} with proper reward design.

\subsection{MetaSD-UCB with general reward}
\label{appendix:UCB_general_reward}

In this subsection, we provide a generic theorem which is stated as follows. 

\begin{theorem}[Generic regret upper bound]
\label{theorem:regret_upperbound_SD-UCB(general_ver)}
For any reward design $r$, denote $\mu_i=\mathbb{E}[r_{i,t}]$, $\Delta_i=\mu_{i^{\star}}-\mu_i$, and $i^{\star}=\argmax{\alpha_i}$. If $i^{\star}=\argmax{\mu_i}$, then there exists a constant $C',C'''>0$ such that following bound holds:
\begin{equation}
\label{eq:regret_upper_bound_general_reward}
\begin{aligned}
& \textsc{Reg}(\pi,B) <
\\ & 
\sum_{i\neq i^{\star}}\frac{8}{\Delta_i^2}(\ln{B}+\ln{(\ln(\sum_{i\neq i^{\star}}\frac{1}{\Delta_i^2}))}+C')+ C'''.   
\end{aligned}
\end{equation}
\end{theorem}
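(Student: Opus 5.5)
The plan is to reduce the stopping-time regret to a weighted count of suboptimal pulls, and then bound those pulls with a UCB analysis in which the horizon $T$ is replaced by the random stopping time $\tau(\pi,B)$. The reduction follows the identity used in \Autoref{lemma:reward_stoppingtime_equivalence}, $B=\tau(\pi,B)+\sum_{t=1}^{\tau(\pi,B)}N_{acc}(a_t,t)$. Write $m_i=\mathbb{E}[N_{acc}(i,t)]=f_n(\alpha_i)$, which is stationary under Assumption~\ref{assumption:main_model_alignment}. The process $\sum_{t\le s}\bigl(N_{acc}(a_t,t)+1-(m_{a_t}+1)\bigr)$ is a martingale. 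Since $\tau$ is a stopping time bounded by $B$, optional stopping (Wald's identity) gives
\begin{equation*}
B+O(N_{max})=\mathbb{E}\Bigl[\sum_{t=1}^{\tau}(m_{a_t}+1)\Bigr]=(m_{i^\star}+1)\mathbb{E}[\tau]-\sum_{i\neq i^\star}(m_{i^\star}-m_i)\mathbb{E}[n_i(\tau)].
\end{equation*}
The $O(N_{max})$ term accounts for overshoot in the last round. Applying the same identity to $\pi^\star$, which always plays $i^\star$, yields $\mathbb{E}[\tau(\pi^\star,B)]\ge (B-O(N_{max}))/(m_{i^\star}+1)$. Subtracting the two gives
\begin{equation*}
\textsc{Reg}(\pi,B)\le \sum_{i\neq i^\star}\frac{m_{i^\star}-m_i}{m_{i^\star}+1}\,\mathbb{E}[n_i(\tau)]+C_0.
\end{equation*}
The weights $\frac{m_{i^\star}-m_i}{m_{i^\star}+1}$ are bounded by a constant (at most $N_{max}$). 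Any dependence on them can be absorbed into the constants, or, to match the stated form, bounded by $1$ after rescaling the reward to $[0,1]$ as in BE. The key point is that $i^\star=\argmax\mu_i$ holds by hypothesis, so the arm optimal for the reward is also optimal for the stopping time.

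Next I bound $\mathbb{E}[n_i(\tau)]$ with the standard \citet{auer2002finite} argument. Let $\ell_i=\lceil 8\ln T/\Delta_i^2\rceil$. After $\ell_i$ pulls of arm $i$, a pull at round $t\le T$ requires either that $\hat\mu_{i^\star}$ underestimates by more than its confidence radius, or that $\hat\mu_i$ overestimates by more than its confidence radius. By \Autoref{lemma:Chernoff-Hoeffding_bound} each of these has probability at most $t^{-4}$, and summing over $t$ and over the sample counts gives a constant (the usual $1+\pi^2/3$). This yields $\mathbb{E}[n_i(T)]\le 8\ln T/\Delta_i^2+C$ for a deterministic horizon $T$.

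The main obstacle is that $\tau$ is random and not fixed in advance, while UCB uses $\ln t$ and never needs $T$. Since $N_{acc}\ge 0$, we have $\tau\le B$ deterministically. Because $n_i(t)$ is nondecreasing, $n_i(\tau)\le n_i(B)$, and the fixed-horizon bound at $T=B$ already gives the $\ln B$ term. The extra $\ln\ln(\sum_{i\neq i^\star}\Delta_i^{-2})$ term indicates that the intended argument is sharper: it uses the relation $\tau\approx B/(m_{i^\star}+1)+\sum_i c_i n_i(\tau)$ and solves the self-referential inequality $n_i\lesssim \frac{8}{\Delta_i^2}\ln\bigl(B+\sum_j c_j n_j\bigr)$. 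If I needed that term, I would try to obtain it by a fixed-point, peeling-style bound in the spirit of the budgeted-bandit analysis of \citet{heyden2024budgeted}. Combining $\ln(\tau)\le \ln B+\ln\ln(\sum_j\Delta_j^{-2})+C'$ with the reduction step then gives \eqref{eq:regret_upper_bound_general_reward}.
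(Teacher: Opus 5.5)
Your proof is correct, and it takes a different and cleaner route than the paper.

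\textbf{The paper's route.} The paper splits the reduction into two pieces.
\begin{enumerate}
\item A coupling lemma, $\mathbb{E}[\tau(\pi,B)]\le\mathbb{E}[\tau(\pi^{i^\star},B)]+\sum_{i\neq i^\star}\mathbb{E}[n_i]$, which says each suboptimal pull costs at most one extra round.
\item An induction on $B$ bounding the single-drafter time.
\end{enumerate}
It counts suboptimal pulls with Auer's argument stated conditionally on $\tau(B)$, using $\ln\tau(B)$ in place of $\ln T$. It then passes to $\ln\mathbb{E}[\tau]$ by Jensen, solves the resulting implicit inequality via $\ln x\le x/\epsilon+\ln\epsilon-1$, and substitutes back. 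This fixed-point step is the sole source of the $\ln\ln(\sum_{i\neq i^\star}\Delta_i^{-2})$ term. Finally, it identifies $\pi^\star$ with the always-$i^\star$ policy.

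\textbf{What your route buys.} Your optional-stopping identity does the whole reduction at once.
\begin{itemize}
\item For a single drafter it gives $B\le(1+m_i)\,\mathbb{E}[\tau(\pi^i,B)]\le B+N_{max}$, with the correct $O(N_{max})$ overshoot. The paper's $B+1$ fails, for example at $B=1$ once $N_{max}\ge2$ and $\alpha_i$ is near $1$.
\item It produces weights $\frac{m_{i^\star}-m_i}{m_{i^\star}+1}$, strictly below the paper's implicit weight $1$.
\item Your lower bound $(m_{i^\star}+1)\,\mathbb{E}[\tau(\pi',B)]\ge B$ holds for \emph{every} policy $\pi'$. So it covers the true minimizer $\pi^\star$ without assuming it is the fixed-$i^\star$ policy.
\item Using $\tau\le B$ and the monotonicity of $n_i$ replaces conditioning on the reward-dependent time $\tau(B)$ with a plain fixed-horizon bound at $T=B$. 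You only need to read $n_i(B)$ on the UCB process continued past $\tau$, which is harmless under Assumption~\ref{assumption:main_model_alignment}.
\end{itemize}
In principle, the paper's $\ln\tau$ route could give $\ln\bigl(B/(1+m_{i^\star})\bigr)$ instead of $\ln B$, but the paper's stated bound does not keep that gain.

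\textbf{Two small corrections.}
\begin{enumerate}
\item The weights need neither rescaling nor absorbing. Since $m_i\ge0$, you get $\frac{m_{i^\star}-m_i}{m_{i^\star}+1}\le\frac{m_{i^\star}}{m_{i^\star}+1}<1$ outright. A weight as large as $N_{max}$ could not have been absorbed into constants anyway, because it multiplies $\ln B/\Delta_i^2$.
\item The $\ln\ln$ term does not signal a sharper intended argument, so no peeling or fixed-point analysis is needed. Your bound $\sum_{i\neq i^\star}\frac{8\ln B}{\Delta_i^2}+C$ already implies the stated one whenever $\ln\ln(\sum_{i\neq i^\star}\Delta_i^{-2})+C'\ge0$. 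That is the only regime that matters: for fixed $B$, the stated right-hand side tends to $-\infty$ as $\sum_{i\neq i^\star}\Delta_i^{-2}\downarrow1$.
\end{enumerate}
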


Above theorem holds for any reward design as long as the drafter with the maximum expected reward $\mathbb{E}[r_{i,t}]$ also has the highest acceptance rate $\alpha_i$. 
Since both the BD and BE rewards satisfy this condition,~\Autoref{theorem:regret_upperbound_SD-UCB(general_ver)} applies to both of the reward designs. The proof of~\Autoref{theorem:regret_upperbound_SD-UCB(general_ver)} consists of two main parts. First, given total round, we can bound the expected number of selecting suboptimal arms using the same anlysis in~\cite{auer2002finite}. Next, we get the upper bound on expected stopping time of MetaSD-UCB algorithm.

\paragraph{Bounding suboptimal selection}
Given fixed stopping time, we can bound the expectation of number of selecting suboptimal arms as follows:
\begin{lemma}[Theorem 1 from~\cite{auer2002finite}] 
\label{lemma:bounding_suboptimal_nums}
Let $n_i(t)$ be the number of pulling sub-optimal drafter ($i\neq i^{\star}$) by the MetaSD-UCB until round $t$. Also, denote $\Delta_{i}:=\mu_{i^{\star}}^r-\mu_{i}^r$ be the sub-optimal gap. Then, following inequality holds for $\beta=1:$ 
\begin{equation}
\mathbb{E}[n_i(\tau(B))|\tau(B)]\leq \frac{8\ln{\tau(B)}}{\Delta_{i}^2}+1+\frac{\pi^2}{3}.
\end{equation}
\end{lemma}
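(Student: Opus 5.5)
The plan is to rerun the proof of Theorem 1 of \cite{auer2002finite} pathwise, and only at the end substitute the random horizon $\tau(B)$ for the fixed horizon $T$. Following the standard construction, I would model the rewards of each drafter as a pre-drawn i.i.d.\ table $\{r_{i,s}\}_{s\ge 1}$. Here $r_{i,s}$ is the reward observed on the $s$-th pull of arm $i$. It lies in $[0,1]$ and has mean $\mu_i$ by Assumption~\ref{assumption:main_model_alignment}, for both the BE and the BD reward. Write $\hat\mu_{i,s}$ for the empirical mean of the first $s$ entries and $c_{t,s}=\sqrt{2\ln t/s}$ for the confidence width (with $\beta=1$). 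MetaSD-UCB is then a deterministic function of this table. In particular, the stopping time $\tau(B)$ is a function of the table together with the token-acceptance randomness.

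\textbf{Step 1 (pathwise decomposition).} Fix any integer $\ell\ge 1$. Exactly as in \cite{auer2002finite}, for every horizon $T$ one has
\begin{equation*}
n_i(T)\le \ell+\sum_{t=K+1}^{T}\mathbb{I}\{a_t=i,\ n_i(t-1)\ge \ell\}.
\end{equation*}
Each summand is dominated by the indicator that for some $1\le s\le t$ and some $\ell\le s_i\le t$ one has $\hat\mu_{i^\star,s}+c_{t,s}\le \hat\mu_{i,s_i}+c_{t,s_i}$. That inequality forces one of three events: (a) $\hat\mu_{i^\star,s}\le \mu_{i^\star}-c_{t,s}$; (b) $\hat\mu_{i,s_i}\ge \mu_i+c_{t,s_i}$; or (c) $\mu_{i^\star}<\mu_i+2c_{t,s_i}$. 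With the choice $\ell=\lceil 8\ln T/\Delta_i^2\rceil$, event (c) is impossible. The resulting bound is monotone in $T$, so I would state it with $T$ replaced by $\tau(B)$ and $\ell=\lceil 8\ln\tau(B)/\Delta_i^2\rceil$. This holds on every sample path.

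\textbf{Step 2 (tail sums).} Lemma~\ref{lemma:Chernoff-Hoeffding_bound} gives that (a) and (b) each have probability at most $t^{-4}$. Summing over $s,s_i\le t$ and over all $t\ge 1$ gives at most $\sum_t 2t^{-2}=\pi^2/3$. Together with $\ell\le 8\ln\tau(B)/\Delta_i^2+1$, this yields the claimed form $8\ln\tau(B)/\Delta_i^2+1+\pi^2/3$.

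\textbf{Main obstacle: the conditioning.} The lemma conditions on $\tau(B)$, but $\tau(B)$ depends on the very rewards appearing in events (a) and (b). As a result, $\mathbb{P}(\,\cdot\mid\tau(B))$ need not obey the unconditional Hoeffding bound. I would try three routes, in order.
\begin{enumerate}
\item First, I would try to argue that the rewards are independent of $\tau(B)$ given the instance, in line with the paper's ``instance-dependent'' reading in \Autoref{app:B_randomness}. Under greedy decoding, fixing the generated token realization makes the sequence $\alpha_{i,t}$ deterministic, so conditioning on the realization (hence on $\tau$) is legitimate. The tail sums would then be handled over the randomness the paper treats as i.i.d.
\item If that fails, I would avoid conditioning altogether. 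Since $\tau(B)\le B$ deterministically, the pathwise bound of Step 1 gives $\mathbb{E}[n_i(\tau(B))]\le 8\,\mathbb{E}[\ln\tau(B)]/\Delta_i^2+1+\pi^2/3$. Here the indicator sum over all $t\ge1$ is bounded in unconditional expectation.
\item Concavity of $\ln$ then gives the bound with $\ln\mathbb{E}[\tau(B)]$ in place of $\mathbb{E}[\ln\tau(B)]$, and this is what the subsequent stopping-time argument actually needs.
\end{enumerate}
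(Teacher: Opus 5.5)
Your Steps 1--2 are the paper's proof. It transplants the argument of \cite{auer2002finite} with $T$ replaced by $\tau(B)$ and $l=\lceil 8\ln\tau(B)/\Delta_i^2\rceil$. It then writes $\mathbb{E}[\,\cdot\mid\tau(B)]$ and bounds events (a) and (b) by $t^{-4}$ via Chernoff--Hoeffding, as if $\tau(B)$ were a fixed horizon. The paper never addresses the dependence you raise, so your ``main obstacle'' is a genuine gap in the paper's argument.

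That gap cannot be closed, because the conditional inequality is false in general. Take the BE reward with $K=2$ and $\alpha_1>\alpha_2$ in $(0,1)$. Every round contributes $N_{acc}+1$ tokens, so the event that $\tau(B)$ attains its largest possible value has positive probability. That event forces $N_{acc}=0$, hence zero BE reward, in every round but the last. On it all empirical means vanish, and UCB with $\beta=1$ keeps pulling the less-pulled arm. So $n_2(\tau(B))\approx\tau(B)/2\approx B/2$, which exceeds $8\ln\tau(B)/\Delta_2^2+1+\pi^2/3$ for large $B$.

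Your route 1 does not rescue the conditional form. Conditioning on the token realization either makes everything deterministic (greedy target and drafters), leaving no randomness for Hoeffding. Or it still leaves $\tau(B)$ coupled to the rewards through the acceptance counts.

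Route 2 is correct and is the right repair. In your reward-table model, events (a) and (b) depend only on the table. Event (c) is ruled out pathwise, since $s_i\ge\ell\ge 8\ln\tau(B)/\Delta_i^2\ge 8\ln t/\Delta_i^2$ for $t\le\tau(B)$. You can then enlarge the sums to all $t\ge 1$ and $s,s_i\ge 1$ and take unconditional expectations, which gives
\[ \mathbb{E}[n_i(\tau(B))]\le \frac{8\,\mathbb{E}[\ln\tau(B)]}{\Delta_i^2}+1+\frac{\pi^2}{3}. \]
This is exactly what Lemma~\ref{lemma:stopping_time_upper_bound} consumes: the paper immediately averages over $\tau(B)$ and applies Jensen. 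So the downstream results survive once the lemma is restated in this averaged form.

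There is an even shorter route. Run UCB past $\tau(B)$ on the same table. Since each round yields at least one token, $\tau(B)\le B$, so $n_i(\tau(B))\le n_i(B)$ pathwise. The fixed-horizon theorem then gives $\mathbb{E}[n_i(\tau(B))]\le 8\ln B/\Delta_i^2+1+\pi^2/3$ directly. This makes the Jensen and self-bounding steps in the stopping-time lemma unnecessary.

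In short, you do not (and cannot) prove the lemma as literally stated. Your route 2 proves the version that is actually true and actually needed.
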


\paragraph{Proof of Lemma~\ref{lemma:bounding_suboptimal_nums}}
For the analysis, we restate the proof in \cite{auer2002finite} for MetaSD-UCB algorithm with our notations. One can observe $n_i(\tau(B))$, the number of times drafter $i$ is chosen for the one round of speculative decoding until the end of generation, can be bounded as follows:
\begin{equation}
\label{eq:n_i_bounding_proof}
\begin{aligned}
& n_i(\tau(B)) = 1+\sum_{t=K+1}^{\tau(B)}\mathbb{I}[a_t=i]
\\
& \leq l + \sum_{t=K+1}^{\tau(B)}\mathbb{I}[a_t=i, n_i(t-1)\geq l]
\\
& \leq l + \sum_{t=K+1}^{\tau(B)}\mathbb{I}
[\hat{\mu}_{i,{t-1}}+\sqrt{\frac{2\ln{(t-1)}}{n_i(t-1)}}\geq \hat{\mu}_{i^{\star},{t-1}}
\\ & +\sqrt{\frac{2\ln{(t-1)}}{n_{i^{\star}}(t-1)}},n_i(t-1)\geq l]
\\
& \leq l + \sum_{t=1}^{\tau(B)}\sum_{s=1}^{t-1}\sum_{{n_i}=l}^{t-1} \mathbb{I}
\\ &\left[\hat{\mu}_{i,{n_{i}}}+\sqrt{\frac{2\ln{(t-1)}}{n_i}}\geq \hat{\mu}_{i^{\star},{s}}+\sqrt{\frac{2\ln{(t-1)}}{s}} \right].
\end{aligned}
\end{equation}
Here, $\mathbb{I}$ is an indicator function and $l$ is a positive integer. Now, one can see following holds:
\begin{equation*}
\begin{aligned}
& 
\mathbb{P}\left(\hat{\mu}_{i,{n_{i}}}+\sqrt{\frac{2\ln{t}}{n_i}}\geq \hat{\mu}_{i^{\star},s}+\sqrt{\frac{2\ln{t}}s}\right)
\\
& \leq  \mathbb{P}\left(\hat{\mu}_{i^{\star},s} \leq \mu_{i^{\star}}-\sqrt{\frac{2\ln{t}}{s}}\right) 
\\ & + \mathbb{P}\left(\hat{\mu}_{i.n_i} \geq \mu_i+\sqrt{\frac{2\ln{t}}{n_i}} \right) 
\\ & + \mathbb{P}\left(\mu_{i^{\star}} < \mu_i + 2\cdot\sqrt{\frac{2\ln{t}}{n_i}} \right).
\end{aligned}
\end{equation*}
First term and the second term in the above equation is bounded by~\Autoref{lemma:Chernoff-Hoeffding_bound} as:
\begin{equation}
\label{eq:ucbproof_auer_concentration}
\begin{aligned}
& \mathbb{P}\left(\hat{\mu}_{i^{\star},s} \leq \mu_{i^{\star}}-\sqrt{\frac{2\ln{t}}{s}}\right)\leq \exp(-4\ln{t})=t^{-4},
\\
& \mathbb{P}\left(\hat{\mu}_{i.n_i} \geq \mu_i+\sqrt{\frac{2\ln{t}}{n_i}} \right) \leq \exp(-4\ln{t})=t^{-4}.
\end{aligned}
\end{equation}
By choosing $l=\lceil\frac{8\ln{\tau(B)}}{\Delta_i^2}\rceil$, one can see that the last term is 0 since,
\begin{equation}
\label{eq:ucbproof_auer_concentration_2}
2\cdot\sqrt{\frac{2\ln{t}}{n_i}} \leq 2\cdot \sqrt{\frac{2\ln{t}}{(\frac{8\ln{\tau(B)}}{\Delta_i^2})}} \leq \Delta_i. 
\end{equation}

Finally, taking expectation of~\Autoref{eq:n_i_bounding_proof} and put the above result, one can see that:
\begin{equation}
\label{eq:proof_expected_number_bound}
\begin{aligned}
& \mathbb{E}[n_i(\tau(B))|\tau(B)]
\\ & \leq \lceil\frac{8\ln{\tau(B)}}{\Delta_i^2}\rceil 
+ 2\sum_{t=1}^{\tau(B)}\sum_{s=1}^{t-1}\sum_{{n_i}=l}^{t-1}2t^{-4}
\\
& \leq \lceil\frac{8\ln{\tau(B)}}{\Delta_i^2}\rceil + 2\sum_{t=1}^{\infty}\sum_{s=1}^{t-1}\sum_{{n_i}=l}^{t-1}2t^{-4}
\\
& \leq \frac{8\ln{\tau(B)}}{\Delta_i^2} +  1 + \frac{\pi^2}{3}.
\end{aligned}
\end{equation}
\hfill\qedsymbol

\paragraph{Bounding stopping time}
\label{appendix:stoppingtime_bound}
The overall structure of the proof in bounding the stopping time is based on the proof of \Autoref{lemma:Bernstein_inequality} in~\cite{ding2013multi} while we provide additional details that suits with our problem formulation. First, we obtain upper bound on stopping time by following lemma:
\begin{lemma} 
\label{lemma:stopping_time_upper_bound}
Following inequalities holds for some constants $C',C''>0:$
\begin{equation*}
\label{eq:lemma3_stopping_upperbound}
\begin{aligned}
& \mathbb{E}[\tau(\pi,B)] \leq \frac{B(1-\alpha_{i^{\star}})}{1-\alpha_{i^{\star}}^{N_{max}+1}} 
\\ & + \sum_{i\neq i^{\star}} \frac{8}{\Delta_i^2}(\ln{B}+\ln{(\ln(\sum_{i\neq i^{\star}}\frac{1}{\Delta_i^2}))}+C')+ C''.
\end{aligned}
\end{equation*}
\end{lemma}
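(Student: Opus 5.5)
We bound $\mathbb{E}[\tau(\pi,B)]$ by splitting the rounds into optimal and suboptimal pulls, and using a Wald-type identity for the tokens produced in each. Write $\mu^{acc}_i:=\mathbb{E}[N_{acc}(i,t)]+1=\frac{1-\alpha_i^{N_{max}+1}}{1-\alpha_i}$ (from \Autoref{eq:N_acc_expectation}) for the expected number of tokens generated per round with drafter $i$. Since the process stops at the first round at which the cumulative token count reaches $B$, the quantity $\sum_{t=1}^{\tau(B)-1}(N_{acc}(a_t,t)+1)<B$ holds surely. Under Assumption~\ref{assumption:main_model_alignment}, the per-round token counts given the chosen arm are i.i.d. with mean $\mu^{acc}_{a_t}$, and $\tau(B)$ is a stopping time with respect to the natural filtration. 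Wald's identity (optional stopping applied to the martingale $\sum_t (N_{acc}(a_t,t)+1-\mu^{acc}_{a_t})$, with $\tau$ integrable because $\tau\le B$) then gives
\[
\mathbb{E}\Big[\sum_{t=1}^{\tau(B)}\mu^{acc}_{a_t}\Big]\le B+N_{max}+1 .
\]
Using $\mu^{acc}_{a_t}=\mu^{acc}_{i^\star}-(\mu^{acc}_{i^\star}-\mu^{acc}_{a_t})\ge \mu^{acc}_{i^\star}-N_{max}$, we obtain
\[
\mu^{acc}_{i^\star}\,\mathbb{E}[\tau(B)]\le B+N_{max}+1+N_{max}\sum_{i\neq i^\star}\mathbb{E}[n_i(\tau(B))].
\]
Dividing by $\mu^{acc}_{i^\star}\ge1$ yields the leading term $\frac{B(1-\alpha_{i^\star})}{1-\alpha_{i^\star}^{N_{max}+1}}$, plus a constant, plus a multiple of the expected number of suboptimal pulls. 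The constant factor multiplying the suboptimal pulls is absorbed into the $8/\Delta_i^2$ prefactor, either through $C'$ or by bounding $(\mu^{acc}_{i^\star}-\mu^{acc}_i)/\mu^{acc}_{i^\star}\le1$ more carefully.

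Next we invoke \Autoref{lemma:bounding_suboptimal_nums} and take the outer expectation. This gives
\[
\mathbb{E}[n_i(\tau(B))]\le \frac{8\,\mathbb{E}[\ln\tau(B)]}{\Delta_i^2}+1+\frac{\pi^2}{3}\le\frac{8\ln\mathbb{E}[\tau(B)]}{\Delta_i^2}+1+\frac{\pi^2}{3},
\]
where the second step is Jensen's inequality for the concave logarithm. This produces an implicit inequality of the form $x\le aB+b\ln x+c$, with $x=\mathbb{E}[\tau(B)]$, $b=8\sum_{i\ne i^\star}\Delta_i^{-2}$ and $a\le1$.

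The main obstacle is solving this self-referential bound so that we get $\ln B+\ln\ln(\sum_i\Delta_i^{-2})$ rather than a crude $\ln x$. The first thing to try is the trivial bound $\tau(B)\le B$, which holds because each round emits at least one token. It gives $\ln\mathbb{E}[\tau]\le\ln B$ directly and already yields the stated form with the $\ln\ln$ term absorbed into $C'$. If a sharper dependence is wanted, as in \cite{ding2013multi}, we would instead use the standard lemma that $x\le aB+b\ln x+c$ implies $x\le aB+b\ln(aB+b)+O(b\ln\ln b)+c$. This follows by case analysis on whether $aB\ge b$, together with $\ln(u+v)\le\ln u+\ln(1+v/u)$. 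Collecting the terms $1+\pi^2/3$ and the Wald overshoot into $C''$ then completes the proof of \Autoref{lemma:stopping_time_upper_bound}.
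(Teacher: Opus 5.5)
Your proof is correct, and it takes a different route from the paper at both stages. For the decomposition, the paper first proves a single-drafter bound by induction on $B$, $\mathbb{E}[\tau(\pi^{i},B)]\le(B+1)/\mu^{acc}_i$. It then uses an extended-process coupling (the optimal drafter is never pulled more often than in the pure-$i^\star$ run) to get $\mathbb{E}[\tau]\le\mathbb{E}[\tau(\pi^{i^\star},B)]+\sum_{i\ne i^\star}\mathbb{E}[n_i]$.

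Your Wald identity replaces both lemmas in one step and treats the overshoot correctly. The paper's induction in effect claims an expected overshoot of at most one token. That already fails at $B=1$ whenever $\mu^{acc}_i>2$, since then $\tau=1$ but $2/\mu^{acc}_i<1$. You get $(B+N_{max}+1)/\mu^{acc}_{i^\star}$ instead; the difference is a constant, so $C''$ absorbs it either way.

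One fix is needed in this step. Use the exact bookkeeping $\sum_{t\le\tau}\mu^{acc}_{a_t}=\mu^{acc}_{i^\star}\tau-\sum_{i\ne i^\star}(\mu^{acc}_{i^\star}-\mu^{acc}_i)\,n_i(\tau)$, which gives coefficient $1-\mu^{acc}_i/\mu^{acc}_{i^\star}<1$ on $\mathbb{E}[n_i]$. The cruder bound $\mu^{acc}_{a_t}\ge\mu^{acc}_{i^\star}-N_{max}$ leaves a factor $N_{max}/\mu^{acc}_{i^\star}$ multiplying $\ln B$. That factor is multiplicative, so it cannot be hidden in the additive constant $C'$.

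For the logarithmic term, the paper applies Jensen to reach $x\le aB+b\ln x+c$ with $b=\sum_{i\ne i^\star}8/\Delta_i^2$. It resolves this with $\ln x\le x/\epsilon+\ln\epsilon-1$, $\epsilon=2b$, and substitutes back; this is the only source of the $\ln\ln$ term. As written, that step drops the factor $b$ in front of $\ln(2b)$. With it restored, the back-substituted logarithm is of order $\ln(B+b\ln b)$, which is not $\ln B+\ln\ln b+O(1)$ uniformly in $b$.

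Your sure bound $\tau(B)\le B$ avoids the implicit inequality altogether. It gives the stronger $\mathbb{E}[\tau]\le B/\mu^{acc}_{i^\star}+\sum_{i\ne i^\star}8\ln B/\Delta_i^2+C''$. The stated form follows whenever $\ln\ln(\sum_{i\ne i^\star}\Delta_i^{-2})+C'\ge0$. As $\sum_{i\ne i^\star}\Delta_i^{-2}\downarrow1$ the stated right-hand side tends to $-\infty$, which is a defect of the statement rather than of your argument.

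The same bound also removes the paper's step of conditioning Auer's concentration argument on the data-dependent $\tau(B)$. Continue the algorithm to time $B$; then $n_i(\tau(B))\le n_i(B)$, and Auer's plain fixed-horizon bound applies. The most the Jensen route could gain is $\ln(B/\mu^{acc}_{i^\star})$ in place of $\ln B$, so the ``sharper'' resolution in your last paragraph is unnecessary.
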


In order to prove~\Autoref{lemma:stopping_time_upper_bound}, we first present two lemmas for bounding stopping time for a single armed bandit process i.e., we play only the single arm consecutively until the end of the round. Then, we provide how can we decouple stopping time of multi-armed bandit process of UCB policy.

\begin{lemma}
\label{lemma:single_process_stopping_time}
Let $\tau(\pi^{i},B)$ be a stopping time for the single armed bandit process $\pi^{i}$ which chooses only same drafter $i$ throughout the generation (i.e. $a_t=i$ for all $t$). Then the stopping time can be bounded as:
\begin{equation}
\label{eq:append_proof_singlebandit_process}
\frac{B(1-\alpha_i)}{1-\alpha_i^{N_{max}+1}} - 1 < \mathbb{E}[\tau(\pi^{i},B)] \leq \frac{(B+1)(1-\alpha_i)}{1-\alpha_i^{N_{max}+1}}.
\end{equation}
\end{lemma}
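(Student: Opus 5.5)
The natural tool is Wald's identity applied to the renewal-type process generated by repeatedly using drafter $i$. Under the stationarity assumption, each round with drafter $i$ produces $Y_t := N_{acc}(i,t)+1$ new tokens. The $Y_t$ are i.i.d. across rounds, take values in $\{1,\dots,N_{max}+1\}$, and have mean
\[
\mathbb{E}[Y_t]=1+\frac{\alpha_i-\alpha_i^{N_{max}+1}}{1-\alpha_i}=\frac{1-\alpha_i^{N_{max}+1}}{1-\alpha_i}=:m_i,
\]
by \Autoref{eq:N_acc_expectation}. The stopping time is $\tau=\tau(\pi^i,B)=\min\{T: S_T:=\sum_{t=1}^T Y_t\geq B\}$, where we take the prompt offset into account via $B$. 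Since $\{\tau\ge t\}=\{S_{t-1}<B\}$ depends only on $Y_1,\dots,Y_{t-1}$, $\tau$ is a genuine stopping time for the filtration of the $Y_t$. It is also integrable, because $Y_t\geq1$ forces $\tau\leq B$.

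Wald's identity then gives $\mathbb{E}[S_\tau]=m_i\,\mathbb{E}[\tau]$. The whole proof reduces to sandwiching $S_\tau$ between deterministic bounds.

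\textbf{Upper bound.} By definition, $S_\tau\ge B$. This yields $\mathbb{E}[\tau]\geq B/m_i$, which implies the stated lower bound $\frac{B(1-\alpha_i)}{1-\alpha_i^{N_{max}+1}}-1<\mathbb{E}[\tau]$, with room to spare.

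\textbf{Lower bound on $\mathbb{E}[\tau]$ from the overshoot.} Minimality of $\tau$ gives $S_{\tau-1}<B$, so $S_\tau<B+Y_\tau\le B+N_{max}+1$. Used directly, this only gives $\mathbb{E}[\tau]\le (B+N_{max}+1)/m_i$, which is weaker than the claimed $(B+1)/m_i$.

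The overshoot step is the main obstacle. To reach $(B+1)/m_i$ I would use the fact that generation is truncated at length $B$: the last round counts only the tokens needed to reach $B$. This means $S_\tau$ equals $B$ (or $B+1$ if the final bonus token is counted) exactly. Alternatively, one can apply Wald to the truncated increments $\tilde Y_t=\min(Y_t,B-S_{t-1})$. These satisfy $\mathbb{E}[\tilde Y_t\mid\mathcal F_{t-1}]\le m_i$, so the optional stopping argument on the supermartingale $\sum_t(\tilde Y_t-m_i)$ gives $m_i\mathbb{E}[\tau]\geq \mathbb{E}[\tilde S_\tau]= B$, and symmetrically an upper bound with $B+1$.

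The remaining care is the direction of this inequality for truncated increments. The upper bound needs $\mathbb{E}[\tilde Y_t]\geq$ something. I would handle that by comparing with the untruncated process stopped at $B$, noting $\tau$ is the same for both processes. The $-1$ and the $+1$ terms then absorb the boundary effects of whether the prompt or the final token is counted.
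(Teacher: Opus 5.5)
Your lower-bound argument is correct. Wald's identity with $S_\tau\ge B$ gives $\mathbb{E}[\tau]\ge B/m_i$, which is sharper than the stated $B/m_i-1$ and cleaner than the paper's induction. Note that your paragraph headed ``Upper bound'' actually proves this lower bound on $\mathbb{E}[\tau]$, and the next paragraph is the upper bound.

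\textbf{The upper bound is a genuine gap, and it cannot be closed.}
\begin{itemize}
\item Truncating the increments does not change $\tau$: the crossing time is the same for $Y_t$ and $\tilde Y_t$.
\item Since $\mathbb{E}[\tilde Y_t\mid\mathcal F_{t-1}]\le m_i$, with strict inequality whenever $B-S_{t-1}\le N_{max}$, optional stopping only gives $m_i\mathbb{E}[\tau]\ge \mathbb{E}[\tilde S_\tau]=B$ again. There is no ``symmetric'' inequality, and comparing with the untruncated process just brings the overshoot back.
\item What Wald actually yields is $\mathbb{E}[\tau]\le (B+N_{max})/m_i$. Integrality gives $S_{\tau-1}\le B-1$, hence $S_\tau\le B+N_{max}$.
\end{itemize}
In fact the claimed bound $(B+1)/m_i$ is false in general. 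For $B=1$, every round produces at least one token, so $\tau=1$ surely. With $N_{max}=5$ and $\alpha_i=0.9$, however, $m_i=(1-0.9^6)/0.1\approx 4.69$, so $(B+1)/m_i\approx 0.43$. This is not only a small-$B$ artifact. By the lattice renewal theorem the mean overshoot tends to $\sum_{k=1}^{N_{max}}k\alpha_i^k/m_i\approx 2.2$ in this example, so $\mathbb{E}[\tau]\approx (B+2.2)/m_i$ for large $B$.

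\textbf{The paper's proof does not get around this either.} It runs an induction on $B$ using the first-step decomposition $\mathbb{E}[\tau(B_0)]=1+\sum_j\mathbb{P}(N_{acc}=j)\,\mathbb{E}[\tau(B_0-1-j)]$. This recursion closes exactly at $(B_0+1)/m_i$ because $1+\mathbb{E}[N_{acc}]=m_i$, but it has two flaws:
\begin{itemize}
\item It silently applies the inductive hypothesis at remaining budgets $B_0-1-j\le -2$, where $\tau=0$ but the claimed bound $(B_0-j)/m_i$ is negative.
\item Its ``trivial'' base case $B=1$ actually requires $m_i\le 2$.
\end{itemize}
So your instinct that the overshoot is the real obstacle is right. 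The correct repair is to state the lemma with upper bound $(B+N_{max})/m_i$. The lower bound, via your Wald argument or the paper's induction, which is valid in that direction, is unaffected. The change only alters the additive constant $C''$ in the downstream stopping-time and regret bounds.
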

\begin{proof}
One can see the expected number of generated tokens in each round is $\mu_i^c=\frac{1-\alpha_i^{N_{max+1}}}{1-\alpha_i}$ and the remaining number of tokens in the last round is contained in $\{1,2,\cdots,N_{max}\}$. Now, suppose~\Autoref{eq:append_proof_singlebandit_process} holds for all $B<B_0$. Then one can observe: 
\begin{equation*}
\begin{aligned}
& \mathbb{E}[\tau(\pi^{i},B_0)] 
\\ & = \mathbb{E}\left[\sum_{j=0}^{N_{max}}\left(\tau(\pi^{i},B_0-1-j)+1\right)\mathbb{P}[r_{i}^{BE}=j]\right] 
\\
& \leq \sum_{j=0}^{N_{max}}\frac{(B_0-j)(1-\alpha_i)}{1-\alpha_i^{N_{max}+1}}\mathbb{P}[r_{i}^{BE}=j]  + 1
\\
& \leq \sum_{j=0}^{N_{max}}\frac{(B_0+1)(1-\alpha_i)}{1-\alpha_i^{N_{max}+1}}\mathbb{P}[r_{i}^{BE}=j] 
\\ & -\frac{(1-\alpha_i)}{1-\alpha_i^{N_{max}+1}}\mathbb{E}[r_{i}^{BE}=j]  + 1
\\
& = \sum_{j=0}^{N_{max}}\frac{(B_0+1)(1-\alpha_i)}{1-\alpha_i^{N_{max}+1}}\mathbb{P}[r_{i}^{BE}=j].
\end{aligned}
\end{equation*}
Since it is trivial to see that~\Autoref{eq:append_proof_singlebandit_process} holds for $B=1$, by mathematical induction, one can conclude the proof. The lower bound can be proved by the exactly same manner as in the upper bound.

\end{proof}
Now, we propose a lemma which provides an upper bound on expected stopping time.

\begin{lemma}
\label{lemma:bounding_stopping_time}
For MetaSD-UCB algorithm $\pi$ with given token  target sequence length $B$, expectation of stopping time $\tau(B)$ can be bounded as follows:
\begin{equation}
\label{eq:stopping_time_upperbound}
\begin{aligned}
\mathbb{E}[\tau(B)] \leq \mathbb{E}[\tau(\pi^{i^{\star}},B)] +  \sum_{i\neq i^{\star}}\mathbb{E}[n_{i}(\pi,B)],
\end{aligned}
\end{equation}

where, $n_i(\pi,B)$ is number of selecting drafter $i$ by policy $\pi$ during the generation.
\end{lemma}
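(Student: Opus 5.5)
We prove \Autoref{lemma:bounding_stopping_time} by a coupling argument that charges every suboptimal round to at most one extra round over the single-arm process $\pi^{i^{\star}}$. By Assumption~\ref{assumption:main_model_alignment}, the number of tokens a round produces with drafter $i$ is i.i.d.\ across rounds, so we may pre-sample the outcomes. For each drafter $i$, draw an i.i.d.\ sequence $Y_{i,1},Y_{i,2},\dots$ with values in $\{1,\dots,N_{max}+1\}$, distributed as $N_{acc}(i,\cdot)+1$. The $k$-th time policy $\pi$ pulls drafter $i$, it consumes $Y_{i,k}$ tokens. This is the standard ``stack of rewards'' construction and does not change the law of the MetaSD-UCB trajectory. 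The single-arm process $\pi^{i^{\star}}$ is realized on the same stack $\{Y_{i^{\star},k}\}_k$.

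\textbf{Pathwise comparison.} Fix a realization and let $m=n_{i^{\star}}(\pi,B)$ be the number of optimal pulls made by $\pi$ before stopping. Every round produces at least one token, and the generated length equals $\sum_{k\le m}Y_{i^{\star},k}$ plus the contribution of the suboptimal pulls. We then split into two cases.

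\emph{Case $m\le\tau(\pi^{i^{\star}},B)$.} Then
\[
\tau(\pi,B)=m+\sum_{i\neq i^{\star}}n_i(\pi,B)\le\tau(\pi^{i^{\star}},B)+\sum_{i\neq i^{\star}}n_i(\pi,B).
\]

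\emph{Case $m>\tau(\pi^{i^{\star}},B)$.} This would mean that the first $\tau(\pi^{i^{\star}},B)$ optimal pulls alone already reach length $B$. So $\pi$ would have stopped no later than its $\tau(\pi^{i^{\star}},B)$-th optimal pull, which contradicts $m>\tau(\pi^{i^{\star}},B)$. Hence this case is empty.

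Combining the cases gives the pathwise inequality $\tau(\pi,B)\le\tau(\pi^{i^{\star}},B)+\sum_{i\neq i^{\star}}n_i(\pi,B)$. Taking expectations yields \eqref{eq:stopping_time_upperbound}.

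\textbf{Main obstacle.} The delicate step is justifying the coupling. UCB's choices depend on observed rewards, so we must check that reading outcomes from per-arm pre-sampled stacks reproduces the true joint law. Two things need verification: that the $k$-th pull of arm $i$ has the same distribution regardless of history, and that the stopping rule ``length $\ge B$'' is a stopping time with respect to the same filtration.

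A subtlety is that $N_{acc}$ is determined by per-token acceptances rather than being a primitive sample. We handle this by letting each stack entry be the full within-round block (acceptance indicators and TV values), which is i.i.d.\ under the stationarity assumption. The BD reward $r_{i,t}$ is a function of the same block, so UCB's information is also generated from the stacks. Once this is set up, the monotonicity argument above is purely deterministic.
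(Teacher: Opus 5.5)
Your proof is correct and follows essentially the same route as the paper. The paper extends $\pi$ by pulling $i^{\star}$ after $\tau(B)$ until the optimal drafter's own tokens reach $B$, so that $n_{i^{\star}}(\pi,B)$ is dominated by a count distributed as $\tau(\pi^{i^{\star}},B)$; this is exactly your pathwise bound $m\le\tau(\pi^{i^{\star}},B)$ on the shared stack, and your per-arm stack coupling makes explicit the equality in distribution that the paper asserts without justification.
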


\begin{proof}
We first prove the upper bound (\Autoref{eq:stopping_time_upperbound}). For policy $\pi$ with the target sequence length $B$, define a corresponding process $\pi^{u}$ which is defined by extending the process with the new stopping time, which is:
\begin{equation}
    \begin{aligned}
        & \tau^{u}(\pi^{u}
,B) = min\{\tau > 0 
\\ & \mid \sum_{t=1}^{\tau}(N_{acc}(a_t^{u},t)+1)\cdot\mathbb{I}[a_t^{u}=i^{\star}] \geq B \}.
    \end{aligned}
\end{equation}

where, $a_t^u = a_t$ for $t\leq \tau(B)$ and $a_t^u = i^{\star}$ for $\tau(B) < t \leq \tau^u(\pi^u,B)$.
In other words, $\tau^{u}(\pi^{u},B)$ is the time where total number of generated tokens by optimal drafter exceeds $B$. Then, one can see from the construction of $\pi^{u}$ and by observing that $\tau^{u}$ does not depend on thenumber of tokens generated by suboptimal drafters, 
\begin{equation}
\mathbb{E}[n_{i^{\star}}(\pi,B)] \leq \mathbb{E}[n_{i^{\star}}(\pi^{u},B)] = \mathbb{E}[\tau(\pi^{i^{\star}},B)]. 
\end{equation}

\end{proof}

\paragraph{Proof of Lemma~\ref{lemma:stopping_time_upper_bound}}
To prove the upper bound, from~\Autoref{lemma:bounding_suboptimal_nums} and~\Autoref{lemma:bounding_stopping_time}, it can be shown that

\begin{equation}
\label{eq:append_proof_boundingsuboptimal_upperbound}
\begin{aligned}
& \mathbb{E}[\tau(B)]  \leq \mathbb{E}[\tau(\pi^{i^{\star}},B)] +  \sum_{i\neq i^{\star}}\mathbb{E}[n_{i}(\pi,B)]
\\
& \leq \frac{(B+1)(1-\alpha_{i^{\star}})}{1-\alpha_{i^{\star}}^{N_{max}+1}} 
+ \sum_{i\neq i^{\star}}\mathbb{E}[n_{i}(\pi,B)]
\\
& \leq \frac{(B+1)(1-\alpha_{i^{\star}})}{1-\alpha_{i^{\star}}^{N_{max}+1}} 
+\sum_{i\neq i^{\star}}\frac{8}{\Delta_i^2}\mathbb{E}[\ln{\tau(B)}] 
\\ & +  (K-1)(1 + \frac{\pi^2}{3}),
\\
& \leq \frac{(B+1)(1-\alpha_{i^{\star}})}{1-\alpha_{i^{\star}}^{N_{max}+1}} 
+\sum_{i\neq i^{\star}}\frac{8}{\Delta_i^2}\ln{\mathbb{E}[\tau(B)]} 
\\ & +  (K-1)(1 + \frac{\pi^2}{3}),
\end{aligned}
\end{equation}
where the second inequality holds from~\Autoref{lemma:single_process_stopping_time}, the third inequality holds by~\Autoref{lemma:bounding_suboptimal_nums},  and the last inequality holds from Jensen's inequality. Now, using $\ln(x)\leq \frac{x}{\epsilon} + \ln(\epsilon) - 1$ and taking $\epsilon = \sum_{i\neq i^{\star}}\frac{16}{\Delta_i^2}$, one can obtain:
\begin{equation*}
\begin{aligned}
\mathbb{E}[\tau(B)] & \leq \frac{(2B+2)\cdot(1-\alpha_{i^{\star}})}{1-\alpha_{i^{\star}}^{N_{max}+1}} \\
 &+ 2\ln(\sum_{i\neq i^{\star}}\frac{16}{\Delta_i^2})-2 +
 \\ & (2K-2)(1+\frac{\pi^2}{3}).
\\
\end{aligned}
\end{equation*}
If we again put the above equation into the~\Autoref{eq:append_proof_boundingsuboptimal_upperbound}, one can obtain:
\begin{equation*}
\label{eq:stoppingtime_upperbound}
\begin{aligned}
& \mathbb{E}[\tau(B)]
 \leq \frac{(B+1)(1-\alpha_{i^{\star}})}{1-\alpha_{i^{\star}}^{N_{max}+1}} 
 \\ + & \sum_{i\neq i^{\star}} \frac{8}{\Delta_i^2}\ln\frac{(2B+2)\cdot(1-\alpha_{i^{\star}})}{1-\alpha_{i^{\star}}^{N_{max}+1}} 
\\ & +{2\ln(\sum_{i\neq i^{\star}}\frac{1}{\Delta_i^2})}+C_1 + C_2
\\
& \leq \frac{B(1-\alpha_{i^{\star}})}{1-\alpha_{i^{\star}}^{N_{max}+1}} + \sum_{i\neq i^{\star}} \frac{8}{\Delta_i^2}\ln{B}
\\ & +\ln{(\ln(\sum_{i\neq i^{\star}}\frac{1}{\Delta_i^2}))}+C'+C'',
\end{aligned}
\end{equation*}
where $C_1,C_2,C',C''>0$ are constants that are independent of $B$ and $\Delta_i$.

\hfill\qedsymbol

\paragraph{Proof of Theorem~\ref{theorem:regret_upperbound_SD-UCB(general_ver)}}
The theorem is proved by observing:
\begin{equation*}
\begin{aligned}
& \mathbb{E}[\tau(\pi,B)]-\mathbb{E}[\tau(\pi^{\star},B)])
\\ & 
= \mathbb{E}[\tau(\pi,B)]-\mathbb{E}[\tau(\pi^{i^{\star}},B)]
\\
& \leq \frac{B(1-\alpha_{i^{\star}})}{1-\alpha_{i^{\star}}^{N_{max}+1}} 
\\ & + \sum_{i\neq i^{\star}} \frac{8}{\Delta_i^2}(\ln{B}+\ln{(\ln(\sum_{i\neq i^{\star}}\frac{1}{\Delta_i^2}))}+C')
\\ & + C'' - \mathbb{E}[\tau(\pi^{i^{\star}},B)]) 
\\
& < \frac{B(1-\alpha_{i^{\star}})}{1-\alpha_{i^{\star}}^{N_{max}+1}} 
\\ & + \sum_{i\neq i^{\star}} \frac{8}{\Delta_i^2}(\ln{B}+\ln{(\ln(\sum_{i\neq i^{\star}}\frac{1}{\Delta_i^2}))}+C')
\\ & + C'' -
\frac{B(1-\alpha_{i^{\star}})}{1-\alpha_{i^{\star}}^{N_{max}+1}} - 1 
\\
& < \sum_{i\neq i^{\star}} \frac{8}{\Delta_i^2}(\ln{B}+\ln{(\ln(\sum_{i\neq i^{\star}}\frac{1}{\Delta_i^2}))}+C')+ C'''.
\end{aligned}
\end{equation*}
Here, $C',C'''>0$ are constants independent of $B$ and $\Delta_i$. The first equality comes from~\Autoref{lemma:reward_stoppingtime_equivalence}, the first inequality is from~\Autoref{lemma:stopping_time_upper_bound}, and the second inequality holds by putting $i^{\star}$ to the lower bound of~\Autoref{lemma:single_process_stopping_time}.

\hfill\qedsymbol

Note that above analysis holds for every $\beta>0$ in~\Algautoref{alg:ucb1}. However, when the target sequence length $B$ is finite, constant terms in the regret bound becomes important which makes the performance of the algorithm dependent on $\beta$. We empirically found the optimal $\beta$ in our experiments. We provide further discussion on using different $\beta$ in~\Autoref{appendix:beta_analysis}.

\subsection{Proof of Theorem~\ref{theorem:regret_upperbound_SD-UCB}}
\label{appendix:proof of theorem2}

\paragraph{Concentration inequality}

Denote empirical mean of the BD and BE rewards as follows.
\begin{equation*}
\begin{aligned}
& \mu_{i,t}^{BD}=\frac{1}{n_i(t)}\sum_{\tau=1}^{t}r_{i,\tau}\cdot\mathbb{I}[a_\tau=i],
\\ & \mu_{i,t}^{BE}=\frac{1}{n_i(t)N_{max}}\sum_{\tau=1}^{t}N_{acc}(i,t)\cdot\mathbb{I}[a_\tau=i],
\end{aligned}
\end{equation*}
where $n_i(t)$ is number of times drafter $i$ is selected until round $t$ and $\mathbb{I}$ is indicator function.

Then, following inequalities can be derived for $\epsilon>0$:
\begin{equation}
\label{eq:BE_reward_concentration}
\begin{aligned}
&\mathbb{P}\left(\hat{\mu}_i^{BE} 
\geq \frac{\alpha_{i}-\alpha_{i}^{N_{max}+1}}{N_{max}(1-\alpha_{i})}+\epsilon\right) 
\\ & \leq \exp\left(-\frac{n_i(t)\epsilon^2}{2Var[r_{i}^{BE}]+\epsilon}\right),    
\end{aligned}   
\end{equation}

\begin{equation}
\label{eq:BD_reward_concentration_Hoeffding}
\mathbb{P}\left(\hat{\mu}_i^{BD} \geq \alpha_i+\epsilon\right) \leq \exp\left(-2(N_{max})n_i(t)\epsilon^2\right).    
\end{equation}

\Autoref{eq:BE_reward_concentration} comes from combining Bernstein's inequality (\Autoref{lemma:Bernstein_inequality}) with~\Autoref{lemma:BEreward_stats} and~\Autoref{eq:BD_reward_concentration_Hoeffding} is from combining Hoeffding's inequality (\Autoref{lemma:Chernoff-Hoeffding_bound}) with~\Autoref{lemma:BD_stats_asymptotic}.

\paragraph{Bandit algorithm guarantee}
Using concentration inequalities for both rewards, we provide how the bandit signal defined in~\Autoref{eq:bandit_signal} directly related to our algorithm~\Autoref{alg:ucb1}. In the proof of~\Autoref{theorem:regret_upperbound_SD-UCB(general_ver)}, one can observe that bounding number of suboptimal arm selection (\Autoref{lemma:bounding_suboptimal_nums}) directly related to the regret under the new regret object defined by stopping time (\Autoref{def:stopping_time_regret}). Leveraging above results, the regret upper bound for MetaSD-UCB algorithm with the BD and BE rewards can be proved.

\paragraph{Proof of Theorem~\ref{theorem:regret_upperbound_SD-UCB}}
For the BD reward, by putting $\beta=\frac{1}{\sqrt{N_{max}}}$ in the UCB algorithm and apply~\Autoref{eq:BD_reward_concentration_Hoeffding}, one can directly observe~\Autoref{eq:ucbproof_auer_concentration} becomes:

\begin{equation}
\begin{aligned}
& \mathbb{P}\left(\hat{\mu}_{i^{\star},s} \leq \mu_{i^{\star}}-\frac{1}{\sqrt{N_{max}}}\cdot\sqrt{\frac{2\ln{t}}{s}}\right)
\\ & \leq \exp(-4\ln{t})=t^{-4},
\\
& \mathbb{P}\left(\hat{\mu}_{i.n_i} \geq \mu_i+\frac{1}{\sqrt{N_{max}}}\cdot\sqrt{\frac{2\ln{t}}{n_i}} \right) 
\\ & \leq \exp(-4\ln{t})=t^{-4}.
\end{aligned}
\end{equation}
By choosing $l=\lceil\frac{8\ln{\tau(B)}}{(N_{max})\Delta(\alpha_i)^2}\rceil$, one can see for $n_i \geq l:$
\begin{equation*}
\frac{2}{\sqrt{N_{max}}}\cdot\sqrt{\frac{2\ln{t}}{n_i}} \leq \Delta_i. 
\end{equation*}
Rest of the proof is same as in~\Autoref{theorem:regret_upperbound_SD-UCB(general_ver)} and we can obtain:
\begin{equation*}
\begin{aligned}
& \textsc{Reg}(B) \leq \sum_{i\neq i^{\star}} \frac{8}{(N_{max})\Delta(\alpha_i)^2}\cdot 
\\ & \left(\ln{B}+\ln{(\ln(\sum_{i\neq i^{\star}}\frac{1}{\Delta_i^2}))}+C'\right)+ C,   
\end{aligned}
\end{equation*}
for some constants $C>0$ and this concludes the proof of~\Autoref{theorem:regret_upperbound_SD-UCB}.

\hfill\qedsymbol

\paragraph{BE reward regret}
For MetaSD-UCB algorithm with BE reward, we can obtain regret upper bound by the following theorem.

\begin{theorem}
\label{theorem:BEreward_Regret_Bound}
Define $\Delta_i^{BE}:= \mu_{i^{\star}}^{BE}-\mu_i^{BE}$ where $\mu_i^{BE}=\mathbb{E}[r_i^{BE}]$. If $\mathrm{Var}[r_i^{BE}]<\mathrm{Var}[r_{i^{\star}}^{BE}]$, we can obtain the following regret upper bound for the MetaSD-UCB algorithm using BE reward:
\begin{equation}
\label{eq:regret_upperbound_BEreward}
\begin{aligned}
& \textsc{Reg}(\pi^{BE},B) \leq \sum_{i\neq i^{\star}}\frac{(32\mathrm{Var}[r_{i^{\star}}^{BE}]+16)}{(\Delta_i^{BE})^2} \cdot 
\\ & \left(\ln{B}+\ln{(\ln(\sum_{i\neq i^{\star}}\frac{1}{\Delta_i^2}))}+C'\right) + C,   
\end{aligned}
\end{equation}
where $C,C'>0$ are constants independent of $B, \Delta_i^{BE}$.
\end{theorem}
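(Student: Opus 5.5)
The proof follows the template of \Autoref{theorem:regret_upperbound_SD-UCB(general_ver)} and of the BD case just proved. The only change is the concentration step: we replace Hoeffding by the Bernstein bound \Autoref{eq:BE_reward_concentration}, which exploits the variance of the BE reward. Concretely, we re-run the argument of \Autoref{lemma:bounding_suboptimal_nums}. For a suboptimal drafter $i$ we decompose the event that its UCB index exceeds that of $i^{\star}$ into the same three events as before. These are (a) $\hat{\mu}_{i^{\star},s}$ falls below $\mu^{BE}_{i^{\star}}$ by more than the confidence width $c_s(t)$; (b) $\hat{\mu}_{i,n_i}$ exceeds $\mu^{BE}_i$ by more than $c_{n_i}(t)$; and (c) $\mu^{BE}_{i^{\star}}<\mu^{BE}_i+2c_{n_i}(t)$. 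The width is chosen, via the exploration parameter $\beta$ of \Algautoref{alg:ucb1}, to be of Bernstein type $c_n(t)=\sqrt{(4\mathrm{Var}[r^{BE}_{i^{\star}}]+2)\ln t/n}$. Equivalently, $\beta^2=2\mathrm{Var}[r^{BE}_{i^{\star}}]+1$ multiplies the usual $\sqrt{2\ln t/n}$.

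First we check that events (a) and (b) each have probability at most $t^{-4}$. By \Autoref{eq:BE_reward_concentration} with $\epsilon=c_n(t)$, the exponent is $n\epsilon^2/(2\mathrm{Var}+\epsilon)$. The hypothesis $\mathrm{Var}[r_i^{BE}]<\mathrm{Var}[r_{i^{\star}}^{BE}]$ lets us use the single variance $V:=\mathrm{Var}[r^{BE}_{i^{\star}}]$ for both arms. Since $\epsilon\le 1$ in the regime $n\ge l$ (otherwise the bound is trivial, as rewards lie in $[0,1]$), we have $2V+\epsilon\le 2V+1$. The exponent is then at least $(4V+2)\ln t/(2V+1)=2\ln t$. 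To reach $4\ln t$ we double the constant, which is where the factor $32V+16$ comes from. Second, event (c) is void once $n_i\ge l:=\lceil (32V+16)\ln\tau(B)/(\Delta_i^{BE})^2\rceil$, since then $2c_{n_i}(t)\le\Delta_i^{BE}$. Summing the $t^{-4}$ tails exactly as in \Autoref{eq:proof_expected_number_bound} gives $\mathbb{E}[n_i(\tau(B))\mid\tau(B)]\le (32V+16)\ln\tau(B)/(\Delta_i^{BE})^2+1+\pi^2/3$.

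The remainder is the stopping-time argument unchanged. \Autoref{lemma:bounding_stopping_time} gives $\mathbb{E}[\tau(B)]\le\mathbb{E}[\tau(\pi^{i^{\star}},B)]+\sum_{i\ne i^{\star}}\mathbb{E}[n_i]$. \Autoref{lemma:single_process_stopping_time} bounds the first term. We then apply Jensen to $\mathbb{E}\ln\tau(B)$, use the linearization $\ln x\le x/\epsilon+\ln\epsilon-1$ with $\epsilon$ proportional to $\sum_{i\ne i^{\star}}(32V+16)/(\Delta_i^{BE})^2$, and substitute back. This yields the $\ln B+\ln\ln(\cdot)+C'$ structure. Finally, subtracting the lower bound of \Autoref{lemma:single_process_stopping_time} for $\pi^{\star}=\pi^{i^{\star}}$ cancels the linear term $B(1-\alpha_{i^{\star}})/(1-\alpha_{i^{\star}}^{N_{max}+1})$. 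This step is valid because $i^{\star}$ maximizes both $\alpha_i$ and $\mu^{BE}_i$, as $f_n$ is increasing. What remains is \Autoref{eq:regret_upperbound_BEreward}.

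The main obstacle is the Bernstein step. The denominator $2V+\epsilon$ mixes variance and deviation, so the constant must absorb the $\epsilon$ term; this forces the additive $16$ alongside $32V$. The step also requires that a valid $\epsilon\le1$ be used uniformly over all $t\le\tau(B)$. A second subtlety is that the confidence width depends on the unknown $V$. We will treat this as a choice of $\beta$, consistent with the remark that the analysis holds for any $\beta>0$ up to constants.
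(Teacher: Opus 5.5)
Your proposal is correct and follows the paper's proof. After your doubling step the confidence width is $\sqrt{(8\mathrm{Var}[r^{BE}_{i^{\star}}]+4)\ln t/n}$, which is exactly the paper's choice of $\epsilon$ in the Bernstein bound; it yields $t^{-4}$ tails and $l=\lceil(32\mathrm{Var}[r^{BE}_{i^{\star}}]+16)\ln\tau(B)/(\Delta_i^{BE})^2\rceil$, and the stopping-time argument is then reused unchanged, while your handling of the $\epsilon>1$ case (the event is then empty) and your account of where $\mathrm{Var}[r_i^{BE}]<\mathrm{Var}[r_{i^{\star}}^{BE}]$ is used only make explicit steps the paper leaves implicit.
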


\begin{proof}
From~\Autoref{eq:BE_reward_concentration}, one can similarly modify the original proof of the UCB~\citep{auer2002finite}.

Then, putting $\epsilon = \sqrt{(8\mathrm{Var}[r_{i^{\star}}^{BE}]+4)\ln{t}}$ into~\Autoref{eq:BE_reward_concentration} make~\Autoref{eq:ucbproof_auer_concentration} becomes:
\begin{equation}
\begin{aligned}
& \mathbb{P}\left(\hat{\mu}_{i^{\star},s} \leq \mu_{i^{\star}}-\sqrt{\frac{(8\mathrm{Var}[r_{i^{\star}}^{BE}]+4)\ln{t}}{s}}\right)
\\ & \leq \exp(-4\ln{t})=t^{-4},
\\
& \mathbb{P}\left(\hat{\mu}_{i.n_i} \geq \mu_i+\sqrt{\frac{(8\mathrm{Var}[r_{i^{\star}}^{BE}]+4)\ln{t}}{n_i}}\right) \leq 
\\ & \exp(-4\ln{t})=t^{-4}.
\end{aligned}
\end{equation}
By choosing $l=\lceil\frac{(32\mathrm{Var}[r_{i^{\star}}^{BE}]+16)\ln{\tau(B)}}{(\Delta_i^{BE})^2}\rceil$, one can see for $n_i \geq l:$
\begin{equation*}
2\cdot \sqrt{\frac{(8\mathrm{Var}[r_{i^{\star}}^{BE}]+4)\ln{t}}{n_i}} \leq \Delta_i^{BE}.
\end{equation*}

Rest of the proof is similar as in~\Autoref{theorem:regret_upperbound_SD-UCB(general_ver)}.
\end{proof}

\paragraph{Regret comparison}
We restate the~\Autoref{corollary:sample_complexity_ratio} formally as follows:
\begin{corollary}
For any $n\in\mathcal{N}$, define functions $f_n,g_n,h_n$ on $(0,1)$ by $f_n(x)=\frac{x-x^{n+1}}{1-x}$, $g_n(x)=f_n'(x)=\sum_{s=1}^{n}sx^{s-1}$, and $h_n(x)=\sum_{s=1}^{n}s(x^{s-1}-x^{2n-s})$. If $h_n(\alpha_{i^{\star}})\geq \frac{g_n(\alpha_{i^{\star}})^2}{4n\alpha_{i^{\star}}}$ and
$\mathrm{Var}[r_i^{BE}]<\mathrm{Var}[r_{i^{\star}}^{BE}]$,
then the regret of our algorithm $\pi^{BE}$ with the BE reward feedback is upper bounded by some function $f(B)$, where $f(B)>\frac{8}{(N_{max})(\Delta(\alpha_i))^2}\ln{B}$.
\end{corollary}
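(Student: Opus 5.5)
The plan is to take the bound of \Autoref{theorem:BEreward_Regret_Bound} itself as the function $f(B)$. It remains to show that its leading coefficient of $\ln B$ is strictly larger than the BD coefficient $\frac{8}{N_{max}\Delta(\alpha_i)^2}$ appearing in \Autoref{theorem:regret_upperbound_SD-UCB}. Since both bounds have the form $\sum_{i\neq i^{\star}} c_i(\ln B + \ln\ln(\cdot) + C') + C$, it suffices to compare the per-arm coefficients. So I will prove
\[
\frac{32\mathrm{Var}[r_{i^{\star}}^{BE}]+16}{(\Delta_i^{BE})^2} > \frac{8}{N_{max}\Delta(\alpha_i)^2}
\]
for every suboptimal $i$. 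Summing over $i\neq i^{\star}$ then gives $f(B) > \frac{8}{N_{max}\Delta(\alpha_i)^2}\ln B$, and by the same token $f(B)$ exceeds the full sum over suboptimal arms.

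The first step discards the additive $16$ in the numerator, since it only helps, and reduces to
\[
\frac{32\mathrm{Var}[r_{i^{\star}}^{BE}]}{(\Delta_i^{BE})^2}\geq\frac{8}{N_{max}\Delta(\alpha_i)^2}.
\]
Under the hypothesis $\mathrm{Var}[r_i^{BE}]<\mathrm{Var}[r_{i^{\star}}^{BE}]$, the ratio $\mathrm{Var}[r_{i^{\star}}^{BE}]/(\Delta_i^{BE})^2$ is exactly $1/R(r_i^{BE})$.

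The second step reuses the chain of inequalities in the proof of \Autoref{theorem:bandit_signal_ratio}. There, convexity of $f_n$ gives $\Delta_i^{BE} = (f_n(\alpha_{i^{\star}})-f_n(\alpha_i))/n < g_n(\alpha_{i^{\star}})\Delta(\alpha_i)/n$. This bound, with its direction fixed by convexity and $\alpha_i<\alpha_{i^{\star}}$, must be rechecked here. Combined with the closed form of the variance in \Autoref{lemma:BEreward_stats}, written as $\alpha_{i^{\star}}h_n(\alpha_{i^{\star}})/n^2$, and the hypothesis $h_n(\alpha_{i^{\star}})\geq g_n(\alpha_{i^{\star}})^2/(4n\alpha_{i^{\star}})$, this yields $1/R(r_i^{BE}) > \frac{1}{4N_{max}\Delta(\alpha_i)^2}$. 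Multiplying by $32$ gives a lower bound of $\frac{8}{N_{max}\Delta(\alpha_i)^2}$, with strict inequality, and the discarded $+16$ term adds further slack.

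The main obstacle is bookkeeping rather than substance. First, the normalization by $N_{max}$ in $r^{BE}$ must be tracked consistently through the identity relating $\mathrm{Var}$ to $h_n$ and through $\Delta_i^{BE}$. Second, the secondary terms $\ln\ln(\sum 1/\Delta_i^2)$ must be handled. In the BE bound these carry the larger coefficient as well, so any positive additive constants only strengthen the claimed inequality $f(B) > \frac{8}{N_{max}\Delta(\alpha_i)^2}\ln B$. If the identity $\mathrm{Var}[r^{BE}] = \alpha h_n(\alpha)/n^2$ fails to match \Autoref{lemma:BEreward_stats} exactly, I would fall back on the looser route: the $+16$ term alone gives $16/(\Delta_i^{BE})^2 > 16 n^2/(g_n(\alpha_{i^{\star}})^2\Delta(\alpha_i)^2)$. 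This suffices whenever $g_n(\alpha_{i^{\star}})^2 < 2n^3$, and that holds for all $\alpha_{i^{\star}}<1$ because $g_n<n(n+1)/2$.
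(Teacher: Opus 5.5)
Your primary argument is correct and is essentially the paper's proof: take the bound of \Autoref{theorem:BEreward_Regret_Bound} as $f(B)$, drop the $+16$, and rerun the convexity-plus-$h_n$ chain of \Autoref{theorem:bandit_signal_ratio} (the identity $\mathrm{Var}[r^{BE}]=\alpha h_n(\alpha)/n^2$ does hold exactly). Your use of $\mathrm{Var}[r_{i^{\star}}^{BE}]$ with the resulting constant $8$ is actually more accurate than the paper's displayed chain, which writes $\mathrm{Var}[r_i^{BE}]$ and $16$, and your fallback is wrong in general and should be dropped: $g_n<n(n+1)/2$ gives $g_n^2<2n^3$ only when $(n+1)^2\le 8n$, i.e.\ $N_{max}\le 5$, and already for $N_{max}=6$ with $\alpha_{i^{\star}}$ near $1$ one has $g_6^2\to 441>432=2\cdot 6^3$ (a route that made the $h_n$ hypothesis superfluous should have been a warning sign).
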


\begin{proof}
One can observe:
\begin{equation*}
\begin{aligned}
\frac{(32\mathrm{Var}[r_i^{BE}]+16)}{(\Delta_i^{BE})^2} &\geq\frac{(32\mathrm{Var}[r_i^{BE}])}{(\Delta_i^{BE})^2} \\ & > \frac{16}{\Delta(\alpha_i)^2(N_{max})},
\end{aligned}    
\end{equation*}
where first inequality comes from~\Autoref{theorem:bandit_signal_ratio}.
Now, putting above result with~\Autoref{theorem:regret_upperbound_SD-UCB} and~\Autoref{theorem:BEreward_Regret_Bound}, we get the result.
\end{proof}

Note that the better regret upper bound does not always guarantee the better performance since sometimes it is a proof artifact. Since we take quite loose inequalities during the proof of~\Autoref{theorem:BEreward_Regret_Bound}, we can improve the constant factors for BE reward. Still, even with assuming we can use~\Autoref{lemma:Chernoff-Hoeffding_bound} inequality in BE reward (which has better guarantee then Bernstein's inequality), the result of~\Autoref{corollary:sample_complexity_ratio} still holds which shows the distinction between two reward designs in terms of regret as in~\Autoref{theorem:bandit_signal_ratio}.

\subsection{Assumption on model alignment}
\label{app:assumption_on_model_alignment}

Here, we formally define the assumption on the acceptance rate which is used throughout our analysis.

\begin{assumption}
\label{assumption}
Denote $\alpha_{i,t}$ as the acceptance rate for $t$-th token generated by $i$-th model. Then, for any instance of $x^{1:B}$ generated by the target model, $\alpha_{i,t}$'s are i.i.d. from a distribution $\nu_i$ with expectation $\alpha_i$. In other words, following holds for all drafter  $i\in[K]$.
\begin{equation}
\begin{aligned}
& \alpha_{i,t}=1-d_{TV}\left(p^{t}(\cdot|x^{1:t-1}),q_i^{t}(\cdot|x^{1:t-1})\right) \overset{\text{i.i.d.}}{\sim}  \nu_i, 
\\ & \mathbb{E}[{\alpha_{i,t}}] = \alpha_i.    
\end{aligned}
\end{equation}
\end{assumption}
Above assumption shows that the acceptance rate for each token only depends on the drafter index $i$. We empirically verify the validity of the assumption by observing the TV distance between a target model and a drafter is well concentrated (\ref{subsec:mab_settings}). 

\paragraph{Experimental evidence}
To further validate \Autoref{assumption}, we provide empirical observations of the BD rewards in actual speculative decoding scenarios. For the experiment, we follow the same experimental setup of multilingual translational task as in~\Autoref{sec4:exp}. We set the target task of Japanese to English translation, where we plot of the statistics when using individual specialized drafters used in~\Autoref{sec4:exp}. For each experiment, we generate 80 instances where we calculate the mean and the standard deviation of the BD rewards for each SD round number We take the value over the first 10 rounds considering the fact that few instances take more than 10 rounds of SD.

\Autoref{fig:bd-reward-plots} clearly supports the stationary assumption made in~\Autoref{assumption} where mean and the variance of the BD reward behaves similar along the speculative decoding round moves on. Here, $[\mu-s,\mu+s]$ is colorized from the mean $\mu$ and the standard deviation $s$.

\begin{figure*}[!ht]
    \centering
    \begin{subfigure}[b]{0.45\linewidth}
        \includegraphics[width=\linewidth]{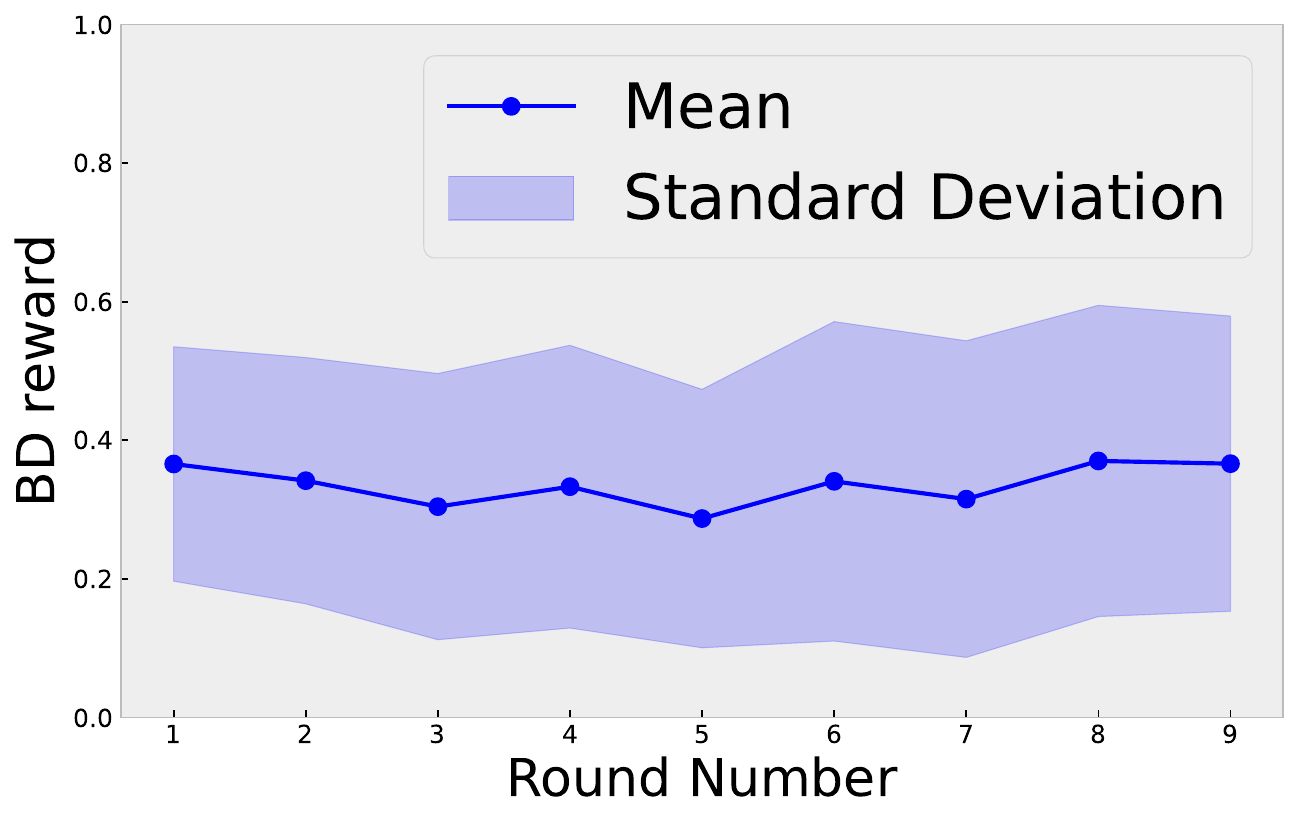}
        \caption{Drafter 1}
        \label{fig:plot1}
    \end{subfigure}
    \hfill 
    \begin{subfigure}[b]{0.45\linewidth}
        \includegraphics[width=\linewidth]{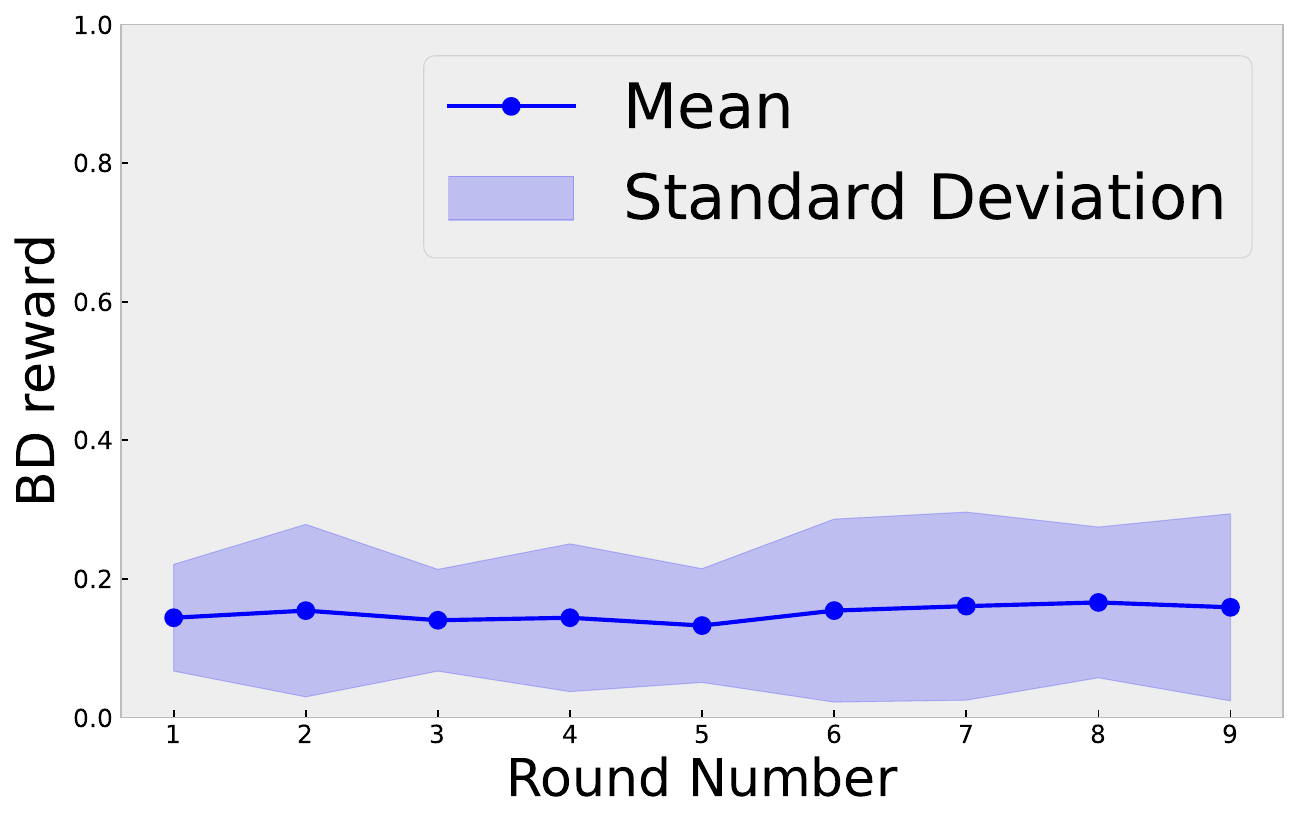}
        \caption{Drafter 2}
        \label{fig:plot2}
    \end{subfigure}

    \begin{subfigure}[b]{0.45\linewidth}
        \includegraphics[width=\linewidth]{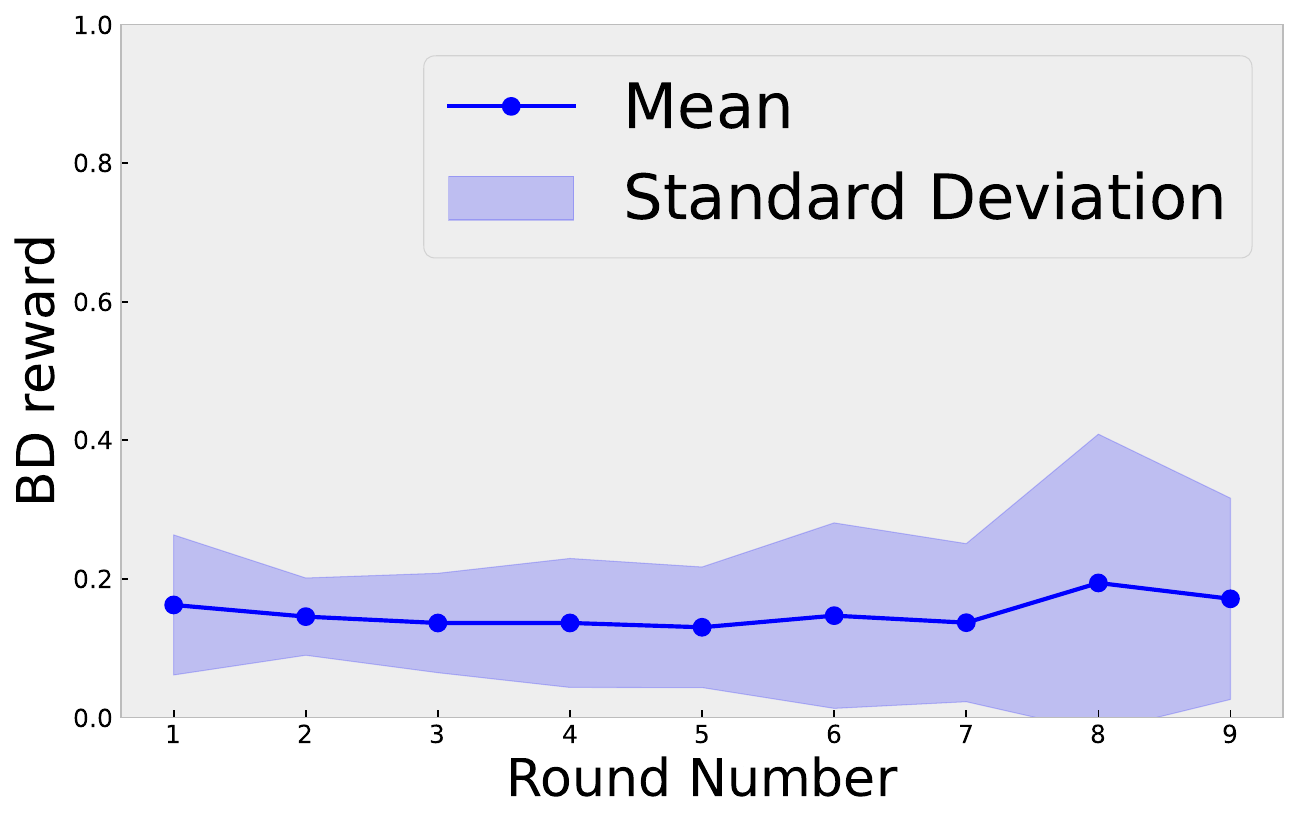}
        \caption{Drafter 3}
        \label{fig:plot3}
    \end{subfigure}
    \hfill
    \begin{subfigure}[b]{0.45\linewidth}
        \includegraphics[width=\linewidth]{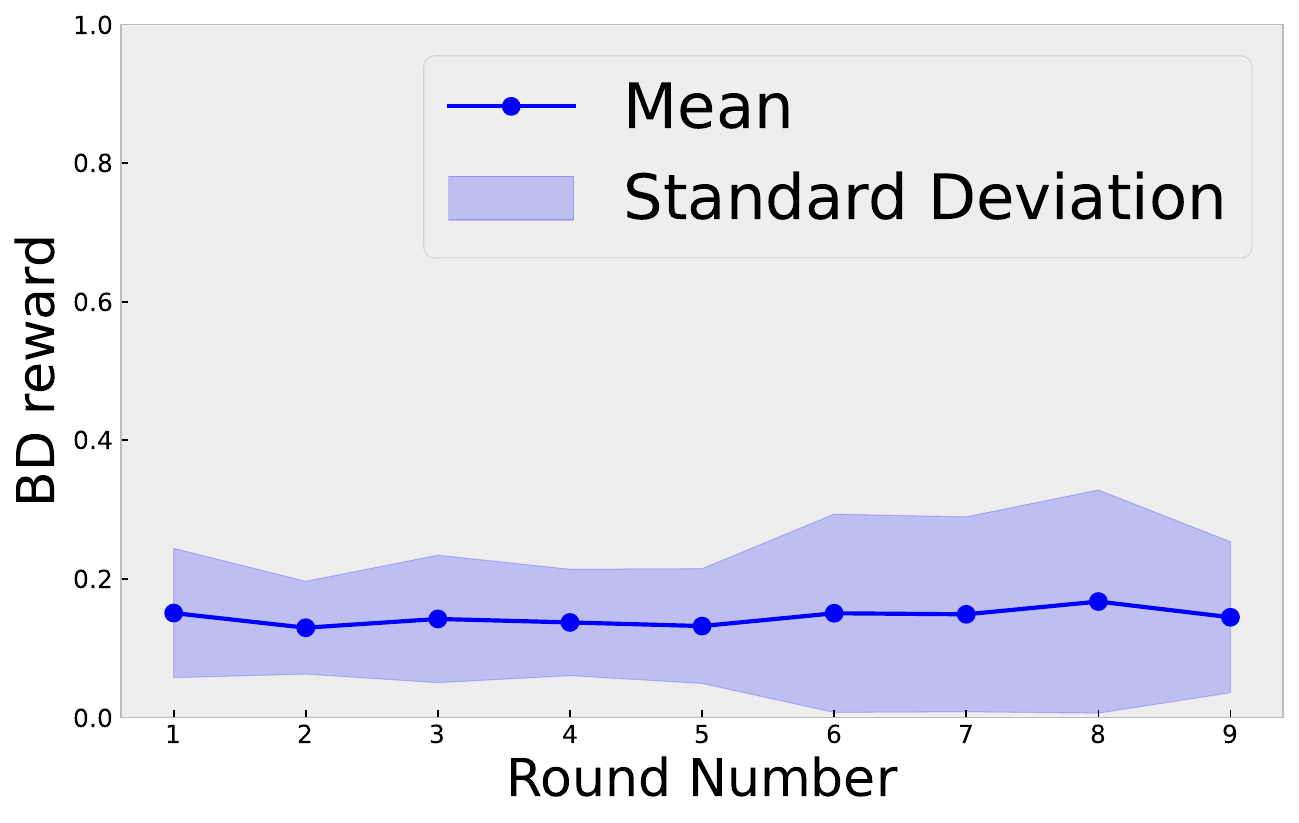}
        \caption{Drafter 4}
        \label{fig:plot4}
    \end{subfigure}
    \begin{subfigure}[b]{0.45\linewidth}
        \includegraphics[width=\linewidth]{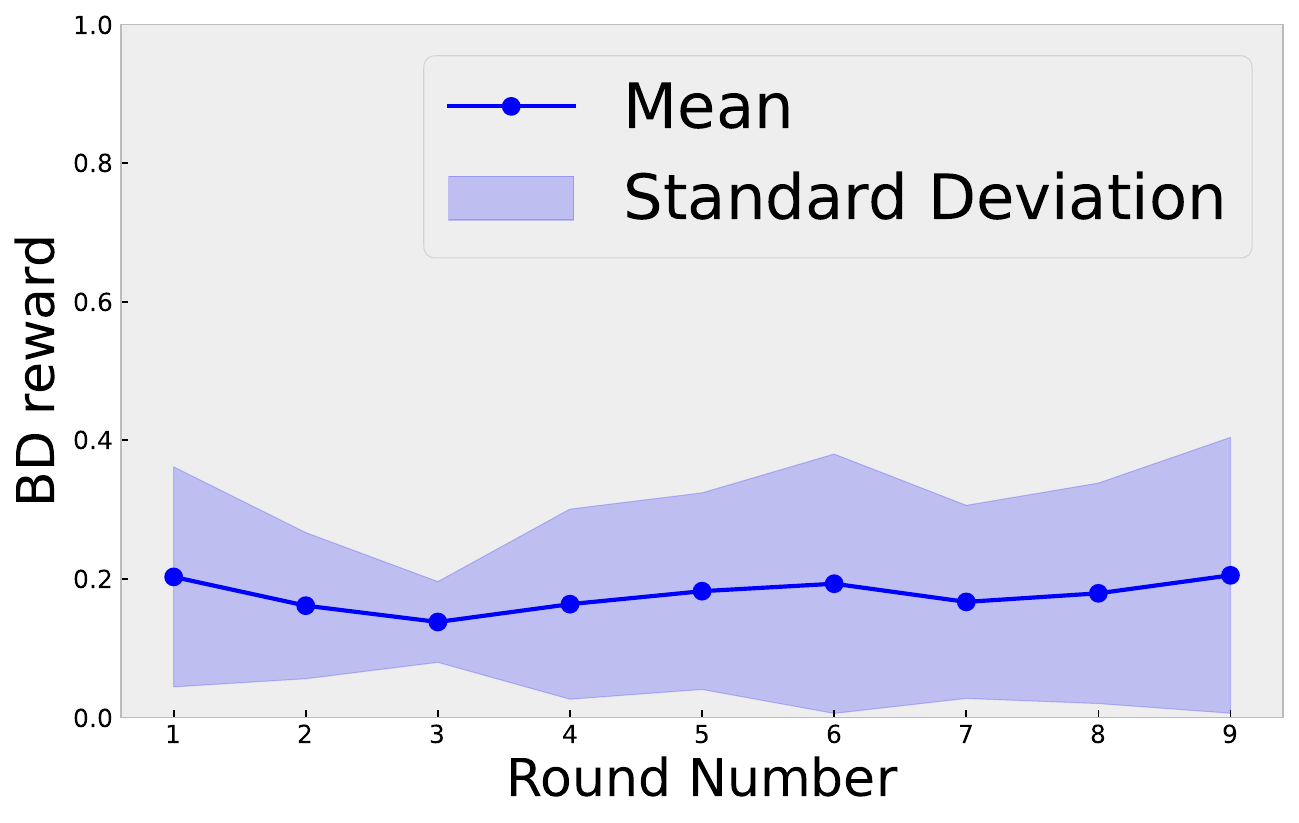}
        \caption{Drafter 5}
        \label{fig:plot5}
    \end{subfigure}

    \caption{Empirical measurement of BD reward statistics along speculation rounds in greedy decoding ($T=0$) scenario.}
    \label{fig:bd-reward-plots}
\end{figure*}

\Autoref{assumption} assumes i.i.d. of acceptance rate $\alpha_{i,t}$ in every instance and this might include the case where $\alpha_i$ can vary for every generation. However, this does not affect the analysis of~\Autoref{theorem:regret_upperbound_SD-UCB}  since our algorithm reset the bandit instance in every new generation.

\paragraph{Temperature sampling}
Note that we make~\Autoref{assumption} for any temperature $T$ which includes greedy decoding scenario with $T=0$. This implies analysis on the regret upper bound in \Autoref{appendix:UCB_general_reward} holds with any temperature $T=0$. This is empirically supported by our experimental results where MetaSD-UCB (\Algautoref{alg:ucb1}) works for both temperature sampling and greedy decoding scenarios. To further support the above claim. we conduct experiments to obtain BD reward statistics with temperature sampling in~\Autoref{fig:bd-reward-plots_temp1}. For the experiment, we set $T=1$ and follow the same experimental setup in~\Autoref{fig:bd-reward-plots}. One can observe the reward is concentrated well while maintaining stationarity along the round numbers. This shows \Autoref{assumption} works with any temperature, further validates our theoretical analysis.

\begin{figure*}[!ht]
    \centering
    \begin{subfigure}[b]{0.45\linewidth}
        \includegraphics[width=\linewidth]{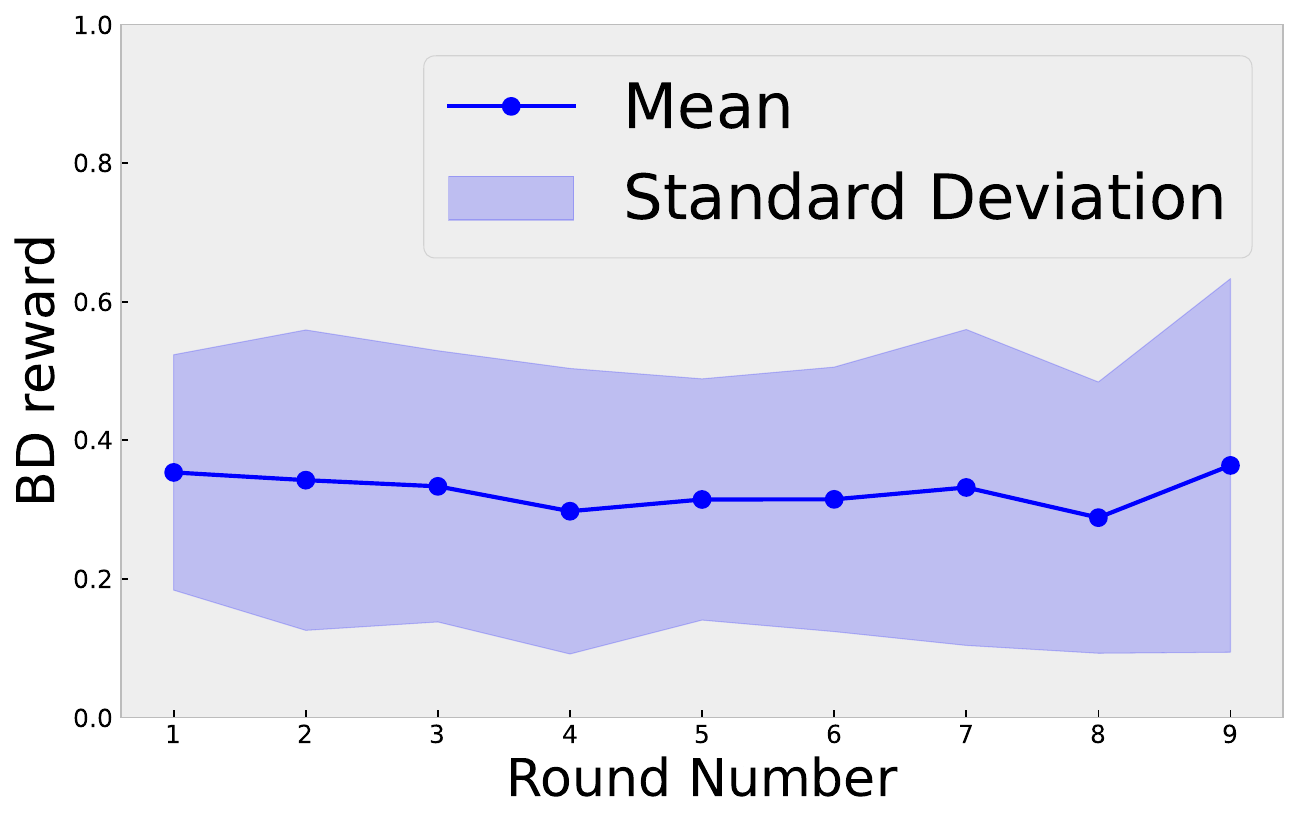}
        \caption{Drafter 1}
        \label{fig:plot1_temp1}
    \end{subfigure}
    \hfill 
    \begin{subfigure}[b]{0.45\linewidth}
        \includegraphics[width=\linewidth]{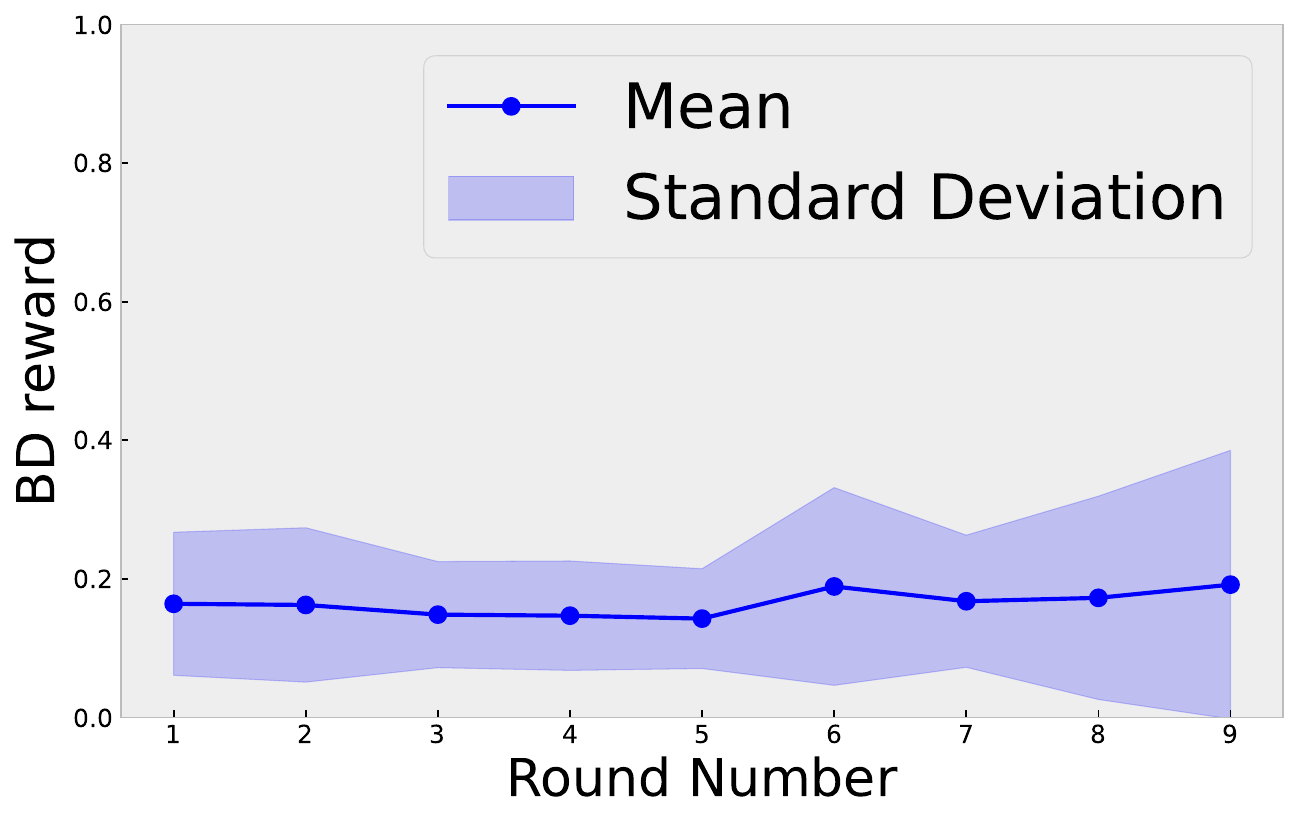}
        \caption{Drafter 2}
        \label{fig:plot2_temp1}
    \end{subfigure}

    \begin{subfigure}[b]{0.45\linewidth}
        \includegraphics[width=\linewidth]{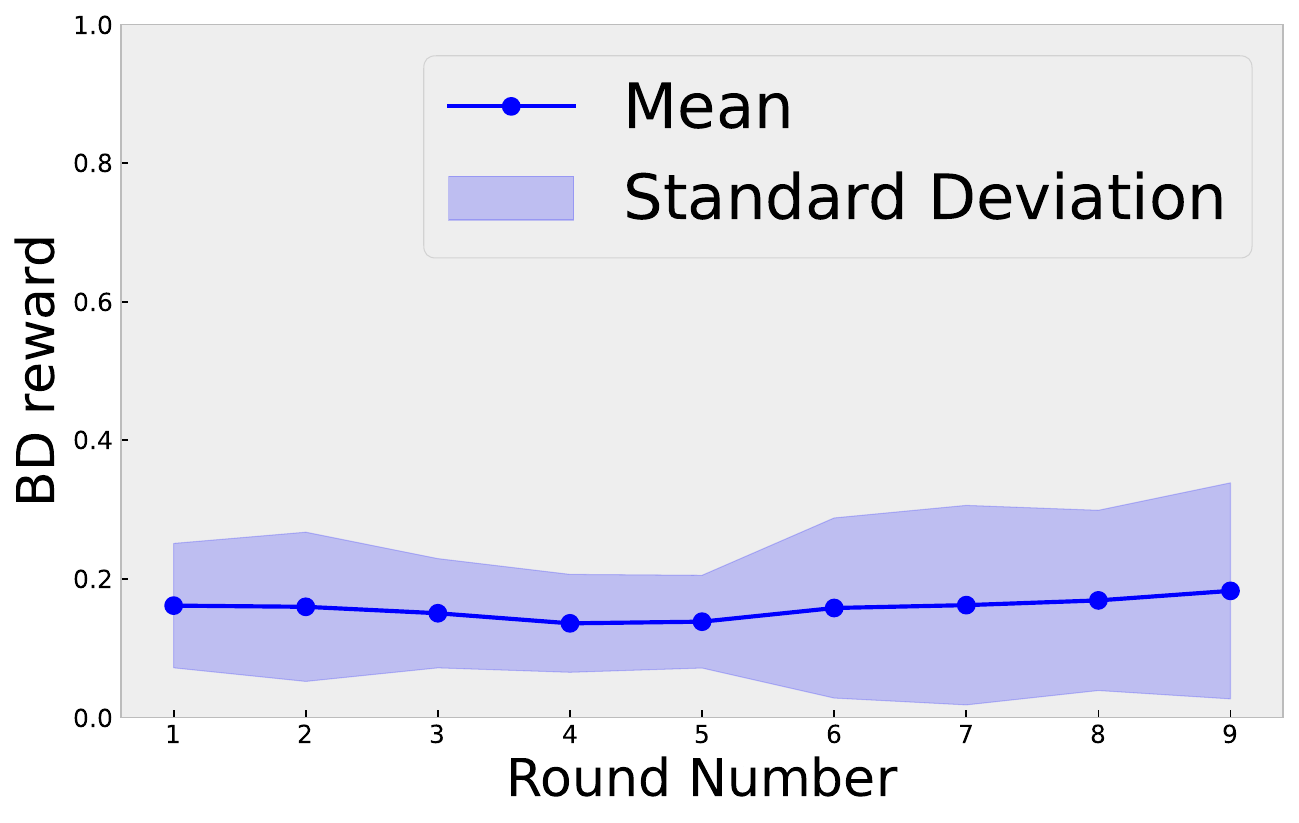}
        \caption{Drafter 3}
        \label{fig:plot3_temp1}
    \end{subfigure}
    \hfill
    \begin{subfigure}[b]{0.45\linewidth}
        \includegraphics[width=\linewidth]{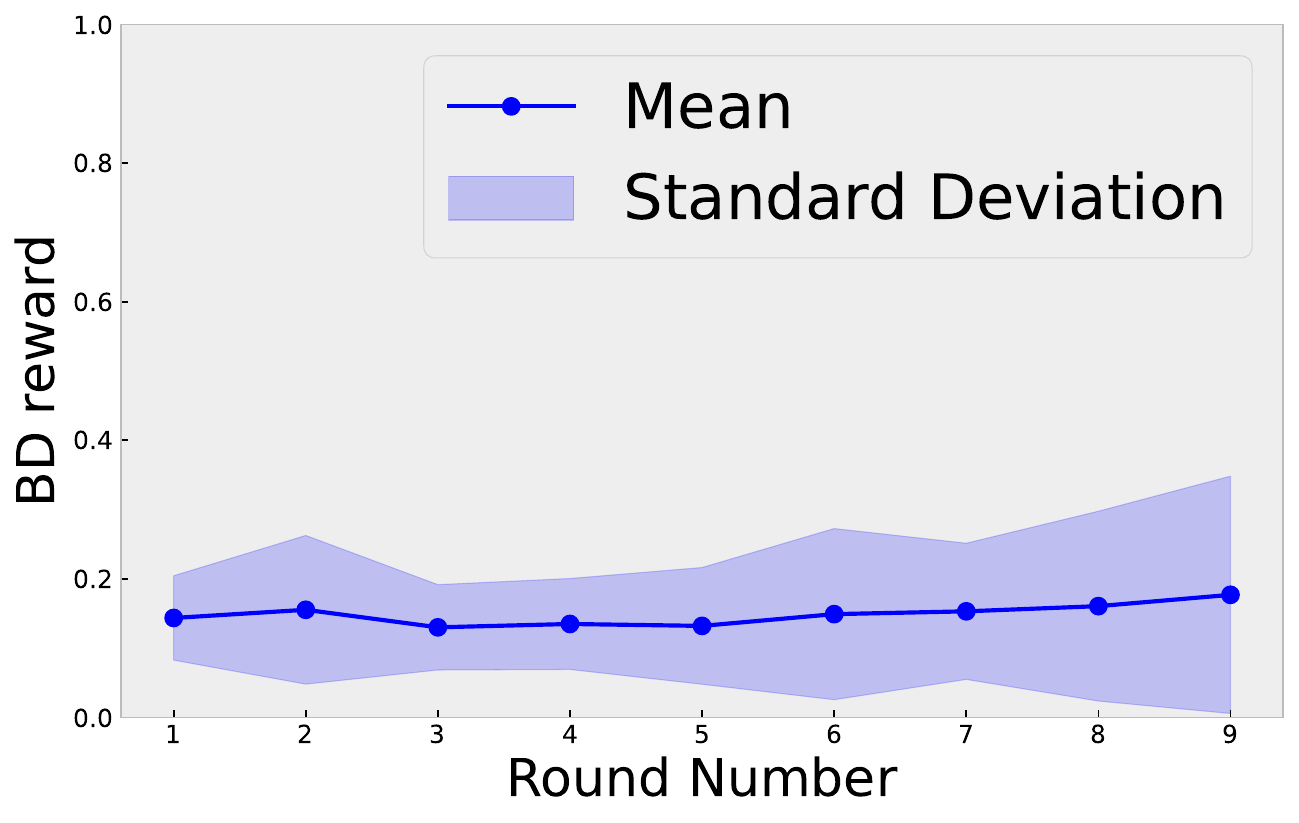}
        \caption{Drafter 4}
        \label{fig:plot4_temp1}
    \end{subfigure}

    \begin{subfigure}[b]{0.45\linewidth}
        \includegraphics[width=\linewidth]{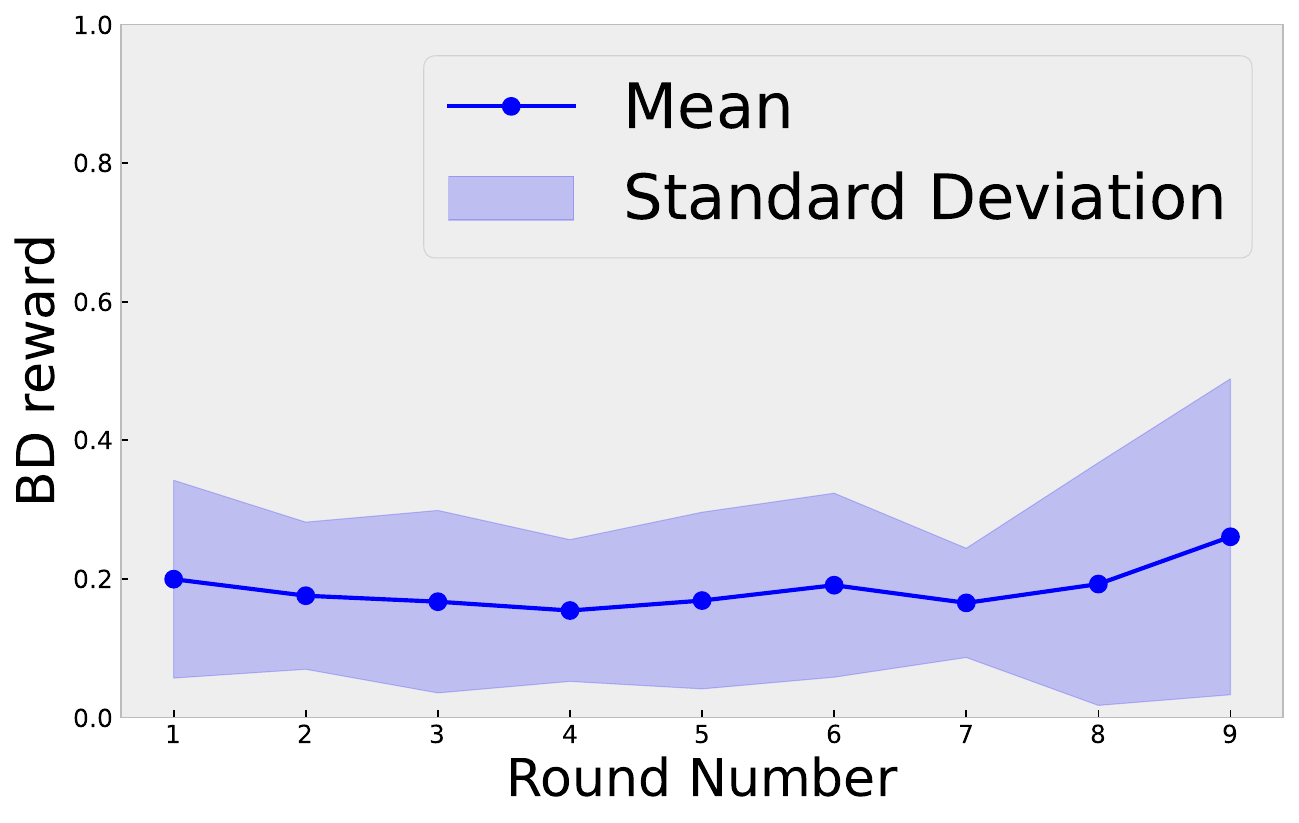}
        \caption{Drafter 5}
        \label{fig:plot5_temp1}
    \end{subfigure}

    \caption{Empirical measurement of BD reward statistics along speculation rounds in temperature sampling $T=1$.}
    \label{fig:bd-reward-plots_temp1}
\end{figure*}

\paragraph{Comparison with {\citep{speculative_decode, yin2024theoretical}}}
\citet{speculative_decode} assume fixed value of $\alpha_i$ when calculating expected number of generated tokens in each round. \Autoref{assumption} is more general than this where in our case, variance of the acceptance rate becomes critical factor to obtain a concentration bound as stated in~\Autoref{lemma:BEreward_stats} and~\Autoref{lemma:BD_stats_asymptotic} while this is impossible when assuming fixed acceptance rate. Note that \citet{yin2024theoretical} analyze SD in more general case where they provide the expected number of total rejected tokens as follows:
\begin{equation}
\begin{aligned}
& \mathbb{E}[N_{rej}] =
\\ &  \sum_{t=1}^{T}\mathbb{E}_{x_{1:t-1}\sim p^t}[d_{TV}(p^t(\cdot|x^{1:t-1}),q_i^t(\cdot|x^{1:t-1})]    
\end{aligned}
\end{equation}
This is general than~\Autoref{assumption} where we assume previous context $x^{1:t-1}$ does not affect the TV distance between target model and a drafter. Relaxing the assumption and considering context-dependent reward distribution will be related to a contextual bandit problem~\citep{li2010contextual} while we leave this as a future work.

\subsection{Randomness of the target sequence length B} 
\label{app:B_randomness}
We consider general scenario where we take all possible instances generated by a target model when using temperature sampling with $T>0$. In this scenario, we can define the expected regret over the probability space induced by the target model. 
To do so, we first provide a formal definition of a target sequence length $B$.
\begin{definition}[Target sequence length B]
\label{def:target_sequence_length}
Target sequence length $B$ is a stopping time which is defined as follows:
\begin{equation}
B=\min{\{t\in\mathbb{N}:x^t=EOS\}},
\end{equation}
where $x^t\sim p^t(\cdot|x^{1:t-1})$ with $p^t$ being a probability distribution from the target model given context $x^{1:t-1}$ and EOS refers to the end of sentence token.
\end{definition}
According to~\Autoref{def:target_sequence_length}, target sequence length is a random variable (a stopping time). With this, one can observe the following lemma holds:
\begin{lemma}
\label{lemma:target_sequence_length_expectation}
For $b\in\mathbb{N}$,
\begin{equation}
\begin{aligned}
& \mathbb{P}(B=b) = \mathbb{E}_{x^{1:b-1}\sim p}\prod_{t=1}^{b-1}(1-p^t(EOS|x^{1:t-1})
\\ & \cdot p^b(EOS|x^{1:b-1})
\end{aligned}
\end{equation}
Where, $p^t(\cdot|x^{1:t-1})$ refers to the conditional probability distribution from a target model for $t$-th token generation when given context $x^{1:t-1}$. Moreover, expectation of a target sequence length becomes:
\begin{equation}
\mathbb{E}[B] = \mathbb{E}_p(B) = \sum_{b=1}^{\infty}b\cdot \mathbb{P}(B=b).
\end{equation}
Here, $\mathbb{E}_p$ denotes the expectation taken over the probability distribution induced by the target model $p$.
\end{lemma}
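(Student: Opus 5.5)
The plan is to read off both displays directly from Definition~\ref{def:target_sequence_length}, which makes $B$ the first hitting time of the EOS token under autoregressive sampling from $p$. The event $\{B=b\}$ is exactly the event that none of the tokens $x^1,\dots,x^{b-1}$ is EOS and that $x^b$ is EOS. So the probability can be computed by the chain rule over the sampled sequence $x^{1:b}$.

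First, write $\mathbb{P}(B=b)=\mathbb{E}_{x^{1:b}\sim p}\bigl[\prod_{t=1}^{b-1}\mathbb{I}[x^t\neq EOS]\cdot\mathbb{I}[x^b=EOS]\bigr]$. Then condition successively, peeling off the innermost token each time. Conditioning on $x^{1:b-1}$ and using that $x^b\sim p^b(\cdot|x^{1:b-1})$ replaces the last indicator by $p^b(EOS|x^{1:b-1})$. Next, conditioning on $x^{1:t-1}$ replaces each factor $\mathbb{I}[x^t\neq EOS]$ by $1-p^t(EOS|x^{1:t-1})$. This is the tower property applied $b$ times.

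The one point that needs care is interpretive. The remaining outer expectation $\mathbb{E}_{x^{1:b-1}\sim p}$ must be understood as taken over the non-EOS continuations, so that the product form is consistent. Equivalently, the factors $(1-p^t(EOS|\cdot))$ can be viewed as the normalizers that convert sampling from $p$ into sampling from $p$ restricted to non-EOS tokens. I would state this identification explicitly, since it is the only step that is not purely mechanical. Since the statement is phrased loosely, I would present the result as the chain-rule identity
\[
\mathbb{P}(B=b)=\sum_{x^{1:b-1}}\prod_{t=1}^{b-1}p^t(x^t|x^{1:t-1})\,\mathbb{I}[x^t\neq EOS]\;p^b(EOS|x^{1:b-1}),
\]
and note that it matches the displayed form after regrouping.

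The expectation formula then follows from the definition of expectation for a nonnegative integer-valued random variable, $\mathbb{E}[B]=\sum_{b\ge1}b\,\mathbb{P}(B=b)$. By Tonelli this sum is well defined in $[0,\infty]$, so no summability assumption is needed. It is finite whenever, for instance, $p^t(EOS|\cdot)\ge c>0$ uniformly, because then $\mathbb{P}(B\ge b)\le(1-c)^{b-1}$.
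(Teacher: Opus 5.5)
Your proposal is correct and matches what the paper intends: the paper gives no written proof and treats the lemma as an immediate observation from Definition~\ref{def:target_sequence_length}. Your explicit chain-rule identity is exactly the content of the first display, provided the outer expectation is read over EOS-free prefixes. That reading is genuinely required, since with $x^{1:b-1}$ drawn from the unrestricted $p$ the displayed expression differs from $\mathbb{P}(B=b)$ in general.

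Two small tightenings. First, conditioning alone does not \emph{replace} $\mathbb{I}[x^t\neq EOS]$ by $1-p^t(EOS|x^{1:t-1})$, because the remaining integrand still depends on $x^t$; that factor appears only together with the switch to the EOS-excluded conditional law, which is the normalizer view you then adopt. Second, the identity $\mathbb{E}[B]=\sum_{b\ge 1} b\,\mathbb{P}(B=b)$ tacitly needs $\mathbb{P}(B<\infty)=1$, which your condition $p^t(EOS|\cdot)\ge c>0$ guarantees. Otherwise $\mathbb{E}[B]=\infty$ while the sum can be finite; the paper also leaves this assumption implicit.
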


Then the expected stopping time which includes every instance of token generation realization given a context can be analyzed with the following objective.
\begin{definition}[Expected stopping time regret] 
\begin{equation}
\label{eq:general_stoppingtime_regret}
\textsc{Reg}(\pi,B) =\mathbb{E}_{p,\pi}\left[\tau(\pi,B)]-\mathbb{E}_{p,\pi^{\star}}[\tau(\pi^{\star},B)\right],
\end{equation}
where, $\mathbb{E}_{p}$ denotes the expectation taken over from a probability space induced by the randomness of target model generation and $\mathbb{E}_{\pi}, \mathbb{E}_{\pi^{\star}}$ refers to the expectation taken over from the probability space generated by a bandit policy $\pi$ and the optimal policy $\pi^{\star}$ respectively.
\end{definition}

In order to analyze the expectation of the stopping time regret which includes the randomness of $B$, we need a more stronger assumption than in \Autoref{assumption:main_model_alignment} which is stated as follows.
\begin{assumption}
\label{assumption2}
Denote $\alpha_{i,t}$ as the acceptance rate for $t$-th token generated by $i$-th model. Then, for any instance $x^{1:B}$ generated by the target model, $\alpha_{i,t}$'s are i.i.d. from a distribution $\nu_i$ with expectation $\alpha_i$. In other words, following holds for all drafter  $i\in[K]$.
\begin{equation}
\begin{aligned}
& \alpha_{i,t}=1-d_{TV}\left(p^{t}(\cdot|x^{1:t-1}),q_i^{t}(\cdot|x^{1:t-1})\right) \overset{\text{i.i.d.}}{\sim}  \nu_i, 
\\ & \mathbb{E}[{\alpha_{i,t}}] = \alpha_i.    
\end{aligned}
\end{equation}
Moreover, $\alpha_i$ is independent of $B$ and its conditional expectation over the events with given $B$ is same for every $B$.
\end{assumption}

The above assumption implies acceptance rate for each drafter is i.i.d. from a stationary distribution of a given instance and its mean value is independent of $B$.
Now, with the generalized regret objective and~\Autoref{assumption2}, one can obtain regret upper bound in terms of expectation of total generated tokens.
\begin{theorem}[Expected stopping time regret] Under~\Autoref{assumption2}, following regret bound holds for Meta-UCB with general stopping time regret:
\begin{equation}
\label{eq:regret_upper_bound_generalver}
\begin{aligned}
& \textsc{Reg}(\pi,B) <
\sum_{i\neq i^{\star}}\frac{8}{(N_{max})\Delta(\alpha_i)^2}\cdot
\\ & \left(\ln\left({\mathbb{E}[B]}\right)+{\ln{(\ln(\sum_{i\neq i^{\star}}\frac{1}{\Delta(\alpha_i)^2}))}+C'}\right)+ C. 
\end{aligned}
\end{equation}
Here, $C,C'>0$ are again constants that are independent from $B$ and $\Delta(\alpha_i)$.
\end{theorem}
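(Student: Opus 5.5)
The plan is to reduce to the fixed-$B$ result, Theorem~\ref{theorem:regret_upperbound_SD-UCB}, by conditioning on the realized target length, and then to remove the conditioning using concavity of the logarithm. Under Assumption~\ref{assumption2}, conditional on the event $\{B=b\}$, the acceptance rates $\alpha_{i,t}$ are still i.i.d. from $\nu_i$ with the same mean $\alpha_i$. So on each such event the hypotheses of Assumption~\ref{assumption:main_model_alignment} hold with the deterministic length $b$. Moreover the gaps $\Delta(\alpha_i)$ and the optimal drafter $i^{\star}$ do not depend on $b$.

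\textbf{Step 1: the comparator.} The optimal policy $\pi^{\star}$ is the same single-arm policy $\pi^{i^{\star}}$ for every $b$, because $i^{\star}=\argmax_i\alpha_i$ is $b$-independent. Hence $\mathbb{E}_{p,\pi^{\star}}[\tau(\pi^{\star},B)]=\mathbb{E}_B\big[\mathbb{E}[\tau(\pi^{i^{\star}},b)\mid B=b]\big]$. This lets us write the regret as
\[
\mathbb{E}_B\Big[\mathbb{E}[\tau(\pi,b)\mid B=b]-\mathbb{E}[\tau(\pi^{i^{\star}},b)\mid B=b]\Big].
\]
One should check that the per-$b$ optimum is no better than this fixed comparator. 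For this I would rely on the same identification used in the proof of Theorem~\ref{theorem:regret_upperbound_SD-UCB(general_ver)}, where the first equality replaces $\pi^{\star}$ by $\pi^{i^{\star}}$.

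\textbf{Step 2: apply the conditional bound.} For each $b$, rerun the proof of Theorem~\ref{theorem:regret_upperbound_SD-UCB} conditionally on $\{B=b\}$. This uses Lemma~\ref{lemma:bounding_suboptimal_nums} with the Hoeffding bound~\eqref{eq:BD_reward_concentration_Hoeffding} and $\beta=1/\sqrt{N_{max}}$, together with Lemma~\ref{lemma:bounding_stopping_time} and Lemma~\ref{lemma:single_process_stopping_time}. The result is that the inner difference is at most
\[
\sum_{i\neq i^{\star}}\frac{8}{N_{max}\Delta(\alpha_i)^2}\Big(\ln b+\ln\ln\big(\textstyle\sum_{i\neq i^{\star}}\Delta(\alpha_i)^{-2}\big)+C'\Big)+C,
\]
with $C,C'$ independent of $b$. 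This independence holds because the constants in those lemmas depend only on $K$, $N_{max}$, $\alpha_{i^{\star}}$ and the gaps.

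\textbf{Step 3: average over $B$.} Take $\mathbb{E}_B$ of this bound. Everything is deterministic except $\ln B$, and Jensen's inequality gives $\mathbb{E}[\ln B]\le\ln\mathbb{E}[B]$, with $\mathbb{E}[B]$ as in Lemma~\ref{lemma:target_sequence_length_expectation}. This yields~\eqref{eq:regret_upper_bound_generalver}, with the strict inequality inherited from Step 2. If $\mathbb{E}[B]=\infty$ the bound is vacuous, so we may assume $\mathbb{E}[B]<\infty$.

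\textbf{Main obstacle.} The delicate point is Step 1/2: justifying that conditioning on $\{B=b\}$ preserves the i.i.d. structure used in the concentration inequalities. $B$ is itself a stopping time of the generated sequence, so conditioning on it could in principle bias the acceptance rates, for example near EOS. This is exactly what the extra clause of Assumption~\ref{assumption2} is meant to rule out, and I would invoke it directly. I would note that only the conditional means, and the boundedness in $[0,1]$ needed for the Hoeffding step, are actually required. A secondary check is that the constants $C,C''$ from Lemma~\ref{lemma:stopping_time_upper_bound} are uniform in $b$, which follows from their explicit form.
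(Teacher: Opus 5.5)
Your proposal is correct and follows essentially the same route as the paper: decouple the regret as $\mathbb{E}_B\bigl[\mathbb{E}_{\pi}[\tau(\pi,B)]-\mathbb{E}_{\pi^{\star}}[\tau(\pi^{\star},B)]\bigr]$ using the independence from $B$ granted by Assumption~\ref{assumption2}, apply the fixed-length bound of Theorem~\ref{theorem:regret_upperbound_SD-UCB} for each realization, and finish with Jensen's inequality $\mathbb{E}[\ln B]\le\ln\mathbb{E}[B]$. Your extra checks (that the comparator $\pi^{i^{\star}}$ does not depend on $b$, and that the constants are uniform in $b$) fill in details the paper leaves implicit; they do not change the argument.
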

\begin{proof}
Since drafter selection from the policy $\pi$ is independent from B under~\Autoref{assumption2}, we can decouple~\Autoref{eq:general_stoppingtime_regret} as follows:
\begin{equation}
\label{eq:general_stoppingtime_regret_stochastic_B}
\textsc{Reg}(\pi,B) =\mathbb{E}_B[\mathbb{E}_{\pi}\left[\tau(\pi,B)]-\mathbb{E}_{\pi^{\star}}[\tau(\pi^{\star},B)\right]],
\end{equation}
where first expectation is taken over with respect to a probability distribution of $B$ generated from $p$. Using Jensen's inequality and combining with Theorem 2, we get the result. 
\end{proof}

\subsection{Further analysis on hyper-parameter $\beta$}
\label{appendix:beta_analysis}
Although original UCB-1 algorithm in~\citep{auer2002finite} is based on using fixed value of $\beta=1$, following works~\citep{audibert2009exploration,bubeck2010bandits} show the regret can indeed be dependent on the exploration parameter $\beta$. We provide a general results which includes a hyperparameter $\beta$ in MetaSD-UCB algorithm. In the following, we borrow the analysis of~\citep{bubeck2010bandits} for the general version of Theorem 2 that includes $\beta$.
\begin{theorem}[Regret upper bound containing $\beta$]
\label{theorem:general_upperbound_include_beta}
For $\beta>0.5$ and with \Autoref{assumption}, the regret upper bound in Theorem 2 can be generalized as follows:
\begin{equation}
\begin{aligned}
& \textsc{Reg}(\pi,B) <
\sum_{i\neq i^{\star}}\frac{8\beta^2}{(N_{max})\Delta(\alpha_i)^2}\cdot
\\ & \left(\ln{B}+\ln(\ln(\sum_{i\neq i^{\star}}\frac{1}{\Delta_i^2}))+C'\right)+ C.  
\end{aligned}   
\end{equation}
\end{theorem}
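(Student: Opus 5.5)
The plan is to follow the proof of \Autoref{theorem:regret_upperbound_SD-UCB} step by step, keeping the exploration width $\beta$ explicit. The analysis has two layers. The first bounds the expected number of pulls of each suboptimal drafter given the stopping time, as in \Autoref{lemma:bounding_suboptimal_nums}. The second converts that count into stopping-time regret through \Autoref{lemma:bounding_stopping_time} and \Autoref{lemma:single_process_stopping_time}. The second layer does not depend on the confidence width at all. So the only change needed is a $\beta$-dependent version of the first layer, after which the rest of the argument goes through verbatim.

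For the first layer, I will use the BD reward with the normalized width $\frac{\beta}{\sqrt{N_{max}}}\sqrt{2\ln t/n_i}$. This is the scaling already used in the proof of \Autoref{theorem:regret_upperbound_SD-UCB}, which corresponds to $\beta=1$. Applying the Hoeffding bound \Autoref{eq:BD_reward_concentration_Hoeffding} to $\hat\mu_{i^\star,s}$ and to $\hat\mu_{i,n_i}$ gives deviation probabilities of order $t^{-4\beta^2}$ in place of $t^{-4}$. I then take
\[
l=\left\lceil \frac{8\beta^2\ln\tau(B)}{N_{max}\Delta(\alpha_i)^2}\right\rceil .
\]
With this choice, the gap event $\mu_{i^\star}<\mu_i+2\frac{\beta}{\sqrt{N_{max}}}\sqrt{2\ln t/n_i}$ is empty whenever $n_i\ge l$, exactly as in \Autoref{eq:ucbproof_auer_concentration_2}. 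This yields the leading term $8\beta^2\ln\tau(B)/(N_{max}\Delta(\alpha_i)^2)$.

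The main obstacle is controlling the remaining sum of the bad-event probabilities when $\beta$ is only slightly above $1/2$. The crude triple-sum bound $\sum_t t^2\cdot t^{-4\beta^2}$ diverges for $\beta\le \sqrt{3}/2$, so the Auer-style counting cannot be used as it stands. Here I would use the peeling argument of \citet{bubeck2010bandits}, as in the $\alpha$-UCB analysis of \citet{audibert2009exploration}. The idea is to split the counts $s$ into geometric blocks $[\gamma^k,\gamma^{k+1})$ and apply a maximal Hoeffding inequality on each block. This bounds the probability that the optimal arm's index falls below $\mu_{i^\star}$ at time $t$ by $O\big(\log t\cdot t^{-c(\beta)}\big)$, with some $c(\beta)>1$ once $\beta>1/2$ and $\gamma$ is chosen close to $1$. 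The resulting sum over $t$ is a finite constant depending only on $\beta$, not on $B$ or $\Delta$. That constant is absorbed into $C$, which gives
\[
\mathbb{E}[n_i(\tau(B))\mid\tau(B)]\le \frac{8\beta^2\ln\tau(B)}{N_{max}\Delta(\alpha_i)^2}+C_\beta .
\]

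For the second layer, I plug this bound into \Autoref{lemma:bounding_stopping_time} and then into the chain \Autoref{eq:append_proof_boundingsuboptimal_upperbound}. Jensen's inequality gives $\mathbb{E}\ln\tau\le\ln\mathbb{E}\tau$. I then apply the self-bounding trick $\ln x\le x/\epsilon+\ln\epsilon-1$ with $\epsilon$ proportional to $\sum_{i\neq i^\star}\beta^2/(N_{max}\Delta(\alpha_i)^2)$, and substitute once more. This produces the $\ln B+\ln\ln(\sum_{i\neq i^\star}\Delta_i^{-2})+C'$ structure. Finally, subtracting the lower bound of \Autoref{lemma:single_process_stopping_time} for $\pi^{i^\star}$ cancels the linear term in $B$, leaving the stated bound.
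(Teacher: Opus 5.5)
Your proposal is correct and takes the same route as the paper. The paper's proof likewise replaces \Autoref{lemma:bounding_suboptimal_nums} with the $\beta$-dependent peeling bound (2.15) of \citet{bubeck2010bandits}, namely $\mathbb{E}[n_i(\tau(B))\mid\tau(B)]\le \frac{8\beta^2\ln\tau(B)}{\Delta_i^2}+1+\frac{4}{\ln(2\beta^2+\frac{1}{2})}\bigl(\frac{2\beta^2+\frac{1}{2}}{2\beta^2-\frac{1}{2}}\bigr)^2$, and then reuses the stopping-time argument of \Autoref{theorem:regret_upperbound_SD-UCB} unchanged. Your sketch is more explicit than the paper's bare citation, and you correctly explain why the Auer-style triple sum fails for $\beta\le\sqrt{3}/2$; one detail to add is that in Bubeck's decomposition the suboptimal arm's own deviation event is summed once over $s\ge l$ rather than over all pairs $(t,s)$, which keeps that term finite as well.
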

\begin{proof}
The proof is based on modifying~\Autoref{lemma:bounding_suboptimal_nums} to the equation (2.15) in~\citep{bubeck2010bandits} which is stated here for the completeness. 
\begin{equation}
\label{eq:subotpimal_armpick_withbeta}
\begin{aligned}
& \mathbb{E}[n_i(\tau(B))|\tau(B)]\leq 
\\ & \frac{8\beta^2\ln\tau(B)}{\Delta_i^2} + 1 + \frac{4}{\ln(2\beta^2+\frac{1}{2})}\left(\frac{2\beta^2+\frac{1}{2}}{2\beta^2-\frac{1}{2}}\right)^2
\end{aligned}
\end{equation}
Rest of the procedure is same with~\Autoref{theorem:regret_upperbound_SD-UCB}.
\end{proof}
Note that extra $\beta^2$ appears in the regret bound and supports and constant term can arbitrarily blow up when $\beta$ becomes close to the $\frac{1}{2}$ by the right term in~\Autoref{eq:subotpimal_armpick_withbeta}. We refer~\citep{bubeck2010bandits} for further details of the calculations. 

\section{Extended scenarios for the MetaSD framework}\label{app:extensions}
Our MetaSD framework is universal as it can incorporate various bandit algorithms tailored for different scenarios. However,  establishing optimality guarantees for existing algorithms in this framework requires careful analysis or one should look for the different algorithm designs. This is due to two key distinctions in our problem formulation: (i) stochastic stopping time, and (ii) a new regret objective defined in terms of this stopping time (\Autoref{def:stopping_time_regret}).

This section explores two distinct scenarios and introduces possible algorithms for each. First, we address a scenario when switching costs is not negligible anymore. In MetaSD framework, this happens when substantial computational or memory overhead is incurred when changing drafters. Second, we consider non-stationary environment where the characteristics of the context change within a one generation. Finally, we briefly discuss on other possible extensions of our framework.
\subsection{Switching costs}
\label{appendix:switching_costs}
\paragraph{Switching costs for multiple drafters}
In order to use multiple drafters in SD, one need to replace all missing key-value(KV) cache values for the model whenever switching one drafter to another. Reading and writing KV cache is one of the factor which can decrease the inference speed, and we define any decrease of inference speed by changing drafter as the switching cost. Formally, switching cost is defined as $\lambda(a_t,t) = \lambda\left(l(t)-l(\tau_i(t))\right)\cdot\mathbb{I}[a_{t-1}\neq a_{t}]$ where $l(t)$ is number of processed tokens by the target model in round $t$, $\tau_i(t)$ is the latest round where $i$-th drafter is selected before round $t$, $\mathbb{I}$ is an indicator function, and $\lambda$ is a constant. we first define the pseudo regret objective in the presence of switching costs.
\begin{definition}
With bandit policy $\pi$ and the given budget $B$, we define the regret as follows:
\label{def:regret_switching_cost}
\begin{equation}
\label{eq:def_pseudo-regret-switchingcost}
\begin{aligned}
& \textsc{Reg}_{switch}(\pi,B,\lambda) =
\\ & \mathbb{E}[\tau(\pi,B)] - \mathbb{E}[\tau(\pi^{\star},b)] + \sum_{t=2}^{\tau(B)}{\lambda_t\mathbb{P}({a_{t-1}\neq a_{t}}}).
\end{aligned}
\end{equation}
\end{definition}

To minimize the above regret, observe $\lambda(\pi,B) = 
\lambda\sum_{t=1}^{\tau(B)}{\lambda(a_t,t)} = \lambda\sum_{i=1}^{K}{B_i}$, where $B_i$'s are total number of tokens generated by the $i$-th drafter after the final round. Intuitively, this implies that total cost decreases when employing elimination-type of algorithms~\citep{Successive_rejects,Sequential_Halving}, which successively eliminate sub-optimal drafters and exclude those drafters from future selection. Consequently, the total regret $\textsc{REG}_{switch}(B,\lambda)$ can be reduced from early elimination of poor-performed drafters. However, regret can still increase if the best drafter is mistakenly eliminated early on. Therefore, it is essential to strike a balance between elimination-based algorithms and standard MAB algorithms. For this, we design a new algorithm \textbf{Pure Exploration-Then-Commit} (PETC) in~\Autoref{alg:Fast Arm Elimination} which effectively balances these two approaches.

PETC (\Autoref{alg:Fast Arm Elimination}) divides the MetaSD into two phases. In the first phase $l<B_0$, the algorithm tries to eliminate sub-optimal drafters as quickly as possible. In the bandit literature, this is related to the pure exploration (or best arm identification) problem~\citep{lattimore2020bandit} and we select using SH~\Autoref{alg:sequential_halving} for our analysis. After the exploration period for estimating the best drafter, the algorithm exclusively selects this drafter for the remaining rounds.
\begin{algorithm}[t]
\caption{Pure exploration-then-commit (PETC)}
\label{alg:Fast Arm Elimination}
\begin{algorithmic}[1]
\INPUT Drafter pool $[K]$, initial prompt sequence $x^{1:l}$, target sequence length $B$, exploration rounds $B_0$.
\FOR {$l=1,2,...,B_0$}
    \STATE Run SH algorithm with budget $B_0$ (in \Autoref{alg:sequential_halving})
\ENDFOR
\STATE $\hat{i}_{\star}$ be the survived index.
\WHILE {$l < B$}
    \STATE SD with a single drafter $\hat{i}_{\star}$.
\ENDWHILE
\end{algorithmic}
\end{algorithm}

Now, we provide how to find the optimal $B_0$ which by the following theorem:

\begin{theorem}[Regret upper bound on PETC]
\label{theorem:fast_arm_elimination_regret_bound}
By choosing $B_0=c\cdot\ln{B}$ for some constant $c>0$ and using~\Autoref{alg:sequential_halving} for the pure exploration in the for the first phase in~\Autoref{alg:Fast Arm Elimination}, $\textsc{REG}_{switch}(\pi,B,\lambda)\leq O(\ln{B})$ holds.    
\end{theorem}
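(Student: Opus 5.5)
We split the switching-cost regret of Definition~\ref{def:regret_switching_cost} into the two phases of PETC and bound each separately. Write $\tau_0$ for the number of rounds spent in the exploration phase ($l<B_0$) and $\tau_1$ for the rounds of the commit phase. Then
\[
\textsc{Reg}_{switch}(\pi,B,\lambda)\le \mathbb{E}[\tau_0]+\mathbb{E}[\tau_1]-\mathbb{E}[\tau(\pi^{\star},B)]+\lambda\,\mathbb{E}[\text{switching cost}].
\]
There are three pieces to control.

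\emph{Exploration phase.} Every SD round produces at least one token, so $\tau_0\le B_0=c\ln B$. Sequential Halving runs over $\lceil\log_2 K\rceil$ stages. It switches drafters at most $K\lceil\log_2K\rceil$ times, and each switch recomputes the KV cache of at most $l(t)\le B_0$ tokens. The switching cost in this phase is therefore at most $\lambda K\lceil\log_2 K\rceil B_0=O(\ln B)$. After the commit step there are no further switches except possibly one, and that one costs at most $\lambda B_0$.

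\emph{Commit phase, correct identification.} Suppose $\hat i_\star=i^\star$. The commit phase is then a single-armed process on $i^\star$ with at most $B$ remaining tokens. By Lemma~\ref{lemma:single_process_stopping_time},
\[
\mathbb{E}[\tau_1]\le \frac{(B+1)(1-\alpha_{i^\star})}{1-\alpha_{i^\star}^{N_{max}+1}},
\]
and the same lemma gives $\mathbb{E}[\tau(\pi^\star,B)]>\frac{B(1-\alpha_{i^\star})}{1-\alpha_{i^\star}^{N_{max}+1}}-1$, using $\pi^\star=\pi^{i^\star}$ as in the proof of Theorem~\ref{theorem:regret_upperbound_SD-UCB(general_ver)}. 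Hence $\mathbb{E}[\tau_0]+\mathbb{E}[\tau_1]-\mathbb{E}[\tau(\pi^\star,B)]\le B_0+O(1)$.

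\emph{Commit phase, misidentification.} Suppose $\hat i_\star\ne i^\star$. We bound $\tau_1\le B$ trivially, so the contribution is at most $B\cdot\mathbb{P}(\hat i_\star\ne i^\star)$. The standard Sequential Halving guarantee with $n$ samples gives
\[
\mathbb{P}(\hat i_\star\ne i^\star)\le 3\log_2 K\exp\!\Bigl(-\frac{n}{8H_2\log_2K}\Bigr),\qquad H_2=\max_i \frac{i}{\Delta_{(i)}^2}.
\]
Each sample here is a BD reward, which by Lemma~\ref{lemma:BD_stats_asymptotic} and \eqref{eq:BD_reward_concentration_Hoeffding} is $1/(4N_{max})$-sub-Gaussian, so the exponent can be improved by a factor of $N_{max}$. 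Taking $c\ge 8H_2\log_2K/(N_{\max}\cdot\text{const})$ makes the error probability $O(1/B)$, and the misidentification term becomes $O(1)$.

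Summing the three pieces gives $O(\ln B)$.

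\emph{Main obstacle.} Sequential Halving is stated for a fixed sample budget, but PETC fixes a token budget $B_0$, and the number of rounds $\tau_0$ it yields is random. I would handle this by lower-bounding the number of rounds: each round yields at most $N_{max}+1$ tokens, so $\tau_0\ge B_0/(N_{max}+1)$ deterministically. I would then run SH with this deterministic round budget, giving each stage $\lfloor B_0/((N_{max}+1)K\lceil\log_2K\rceil)\rfloor$ pulls per arm, and absorb the factor $N_{max}+1$ into $c$. A second subtlety is that the SH concentration must hold even though per-round rewards are only conditionally i.i.d. given the context; Assumption~\ref{assumption:main_model_alignment} together with the martingale form of Lemma~\ref{lemma:Chernoff-Hoeffding_bound} should suffice.
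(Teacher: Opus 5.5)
Your proposal is correct and follows the paper's proof essentially step for step: the same three-way split into exploration-phase rounds (at most $B_0=c\ln B$), commit-phase regret controlled by the Sequential Halving misidentification probability $3\log_2K\exp(-B_0/(8H_2\log_2K))$ times an $O(B)$ stopping time, and a switching cost of order $\lambda B_0$. Your additions are details the paper leaves implicit, not a different argument: the explicit correct-commit bound via Lemma~\ref{lemma:single_process_stopping_time}, the reconciliation of the token budget $B_0$ with SH's round budget, and the observation that $c$ must be at least of order $H_2\log_2K$ for the misidentification term to be $O(1)$.
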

\begin{proof}
First, we can decompose the regret as:
\begin{equation*}
\begin{aligned}
& \textsc{Reg}_{switch}(\pi,B,\lambda) = 
\\ & \sum_{t=1}^{\tau(B_0)}\textsc{Reg}(\pi,t) + \sum_{t=\tau(B_0)+1}^{\tau(B)}\textsc{Reg}(\pi,t) + S_T,
\end{aligned}
\end{equation*} 
where $\textsc{Reg}(\pi,t)$ denotes original regret objective~\Autoref{eq:Regret_Define_stoppingtime} for one round $t$ and $S_T$ denotes the total switching cost. First term can be bounded by the stopping time of selecting the worst drafter every round until $B_0$ which can be bounded by  $\tau(B_0)={O}(\ln{B})$ according to~\Autoref{lemma:single_process_stopping_time}. To bound the second term, we borrow Theorem 4.1 in~\cite{Sequential_Halving}, where they prove the probability of Sequential Halving algorithm to select the suboptimal arm after $B_0$ round can be bounded by $3\log_2{K}\cdot\exp(-\frac{B_0}{8H_2\log_2{K}})$, where $H_2:=\max_{i}\frac{i}{\Delta_i^2}$. Then we have
\begin{equation*}
\begin{aligned}
& \sum_{t=\tau(B_0)+1}^{\tau(B)}\textsc{Reg}(\pi,t) 
\leq \tau(\pi^{i_w},B)\cdot3\log_2{K}
\\ & \cdot\exp(-\frac{B_0}{8H_2\log_2{K}})  = O(\ln{B}), 
\end{aligned}
\end{equation*}
where $i_w$ denotes the worst drafter, $\tau(\pi^{i_w},B)$ denotes the stopping time for generating $B$ tokens using only the worst drafter. The last term is bounded by $\lambda B_0 = O(\ln{B})$ and this concludes the proof.
\end{proof}
Here, we can improve constant term in regret upper bound in~\Autoref{theorem:fast_arm_elimination_regret_bound} by controlling $c$ according to the switching cost $\lambda$ and given budget $B$ or we may use more advanced proof techniques in the best arm identification literature such as in~\cite{zhao2023revisiting}. We leave these as a future work.

\begin{algorithm}[t]
\DontPrintSemicolon
\caption{Sequential Halving (SH)~\citep{Sequential_Halving}}
\label{alg:sequential_halving}
\begin{algorithmic}[1]
\INPUT Total budget $T$, drafter pool $[K]$
\STATE \textbf{Initialize} $S_0 \gets [K]$
\FOR{$t = 0, 1, \ldots, \lfloor \log_2(K) \rfloor - 1$}
    \STATE Pull each drafter in $S_t$ for $n_t = \left\lfloor \frac{T}{|S_t| \lfloor \log_2(K) \rfloor} \right\rfloor$ additional times
    \STATE $R_{t}(i) \gets \sum_{j=1}^{n_t} r_{i,j}$ for $i\in S_t$
    \STATE Let $\sigma_t$ be a bijection on $S_k$ such that $R_{t}(\sigma_t(1)) \leq R_{t}(\sigma_t(2)) \leq \ldots \leq R_{t}(\sigma_t(|S_t|))$
    \STATE $S_{k+1} \gets [i \in S_k | R_{t}(\sigma_t(i)) \leq R_{t}(\sigma_t(\lceil |S_k|/2 \rceil))] $
\ENDFOR
\OUTPUT Singleton element of $S_{\lfloor \log_2(K) \rfloor}$
\end{algorithmic}
\end{algorithm}

\subsection{Non-stationary environment}
\label{appendix:non-stationary_environment}
In real-world scenarios, the reward distribution for each drafter may evolve over time and past information becomes less relevant for decision-making. This phenomenon, referred to as non-stationarity, challenges traditional MAB algorithms that operate under the assumption of stationary reward distributions. In SD, non-stationarity can stem from various factors. For example, during a long-form text generation task, the optimal drafter may change as the topic or style of the text evolves. Consider the prompt: `Please summarize and reason about the following article on climate change...'. Initially, a drafter specialized in summarization might be most effective. However, as the generation progresses towards the reasoning part, a drafter trained on logical reasoning tasks could become more suitable.

\paragraph{Non-stationary MetaSD}
Standard analyses of non-stationary bandits~\citep{auer2002nonstochastic,kocsis2006discounted,garivier2016optimal} often define $L$ to quantify the number of times the reward distributions change over $T$ rounds. Another line of work~\citep{slivkins2008adapting,besbes2014stochastic} quantifies the non-stationarity using $V$, the total variation of the means. In both cases, the regret (which is often called as dynamic regret) is defined as the cumulative expected difference between the rewards of the optimal arm and the selected arm at each round.
\begin{equation}
\label{eq:non_stationary_regret_original}
\textsc{Reg}(\pi,B,L) = \sum_{t=1}^{\tau(B)}(\max_{i\in[K]}\mu_{i,t}-\mathbb{E}[\mu_{a_t,t}])
\end{equation}
where, as before, $B$ is the number of total tokens we have to generate, $\mu_{i,t}$ is the mean reward of choosing drafter $i$ in $t$-th round, and $\tau(B)$ is the total round. However, the regret upper bound on~\Autoref{eq:non_stationary_regret_original} does not always guarantee the performance of the SD as we discussed in~\Autoref{subsec:algorithm 3.1}. Instead, we can use our original regret objective using stopping time in~\Autoref{def:stopping_time_regret} without any modification.

Here, we introduce two types of algorithms within our MetaSD framework: Discounted-UCB (D-UCB) algorithm~\citep{kocsis2006discounted} (\Autoref{algorithm:D-UCB}) and Sliding-window UCB~\citep{garivier2011upper} (\Autoref{algorithm:Sliding_window_UCB}). Discounted UCB-SD estimates mean reward by computing the mean of discounted cumulative rewards as shown in the line 9 of~\Autoref{algorithm:D-UCB}. By assigning less weight to the past observations, the algorithm finds a balance between  accumulating knowledge and adapting to the changing environment. Similarly, sliding-window UCB utilizes a fixed-length window to calculate mean reward as demonstrated in the line 9-10 of \Autoref{algorithm:Sliding_window_UCB}. By focusing only on recent information, it is also expected to achieve a balance with careful choose of the window size $\tau$~\cite{garivier2011upper}.

\begin{algorithm}[t]
\caption{Discounted UCB in MetaSD}
\label{algorithm:D-UCB}
\begin{algorithmic}[1]
\INPUT Drafter pool $[K]$, initial prompt sequence $x^{1:l}$, target sequence length $B$, exploration strength $\beta$, decaying parameter $\gamma$.
\STATE $t \leftarrow 0$ \\
\textcolor{darkgray}{/* Phase 1: Meta-draft each drafter in $[K]$ once and do one round of speculative decoding. */} \\
\FOR{$i \in [K]$}
    \STATE Do one round of SD with drafter $i$ and obtain $N_{acc}(i,t)$, $r_{i,t}$ (by \Autoref{eq:BDreward_definition})
    \STATE $\hat{\mu}_{i,t}, n_i, l,  t\leftarrow r_{i,t}, 1, l + N_{acc}(i,t) + 1,  t+1$ \\
\ENDFOR \\
\textcolor{darkgray}{/* Phase 2: Meta-draft the draft following the UCB bandit until target sequence length $B$ */} \\
\WHILE{$ l < B$}  
    \STATE $a_t \leftarrow \arg\max_{i \in [K]} \hat{\mu}_{i,t} + \beta \sqrt{\frac{2 \ln t}{n_i}}$ 

    \STATE Do one round of SD with drafter $a_t$ and obtain $N_{acc}(a_t,t)$, $r_{a_t,t}$ (by \Autoref{eq:BDreward_definition})
    \STATE $\hat{\mu}_{a_t,t}=\frac{1}{n_{a_t}}\sum_{s=1}^{t}\gamma^{t-s}r_{a_s,s}\mathbb{I}[a_s=a_t]$
    \STATE \small{$n_{a_t}, l, t \leftarrow  n_{a_t}+1 , l + N_{acc}(a_t,t) + 1,  t+1$}
\ENDWHILE
\end{algorithmic}
\end{algorithm}

\begin{algorithm}[t!]
\caption{MetaSD-EXP3~\citep{auer2002nonstochastic}}
\label{alg:EXP3}
\begin{algorithmic}[1]
\INPUT Drafter pool $[K]$, initial prompt sequence $x^{1:l}$, target sequence length $B$, $\gamma \in (0, 1]$
\STATE $t\leftarrow 0$, $w_t(i) \leftarrow 1$ for $i = 1, \dots, K$
\WHILE{$l<B$}
    \STATE 
    $ p_t(i) = (1 - \gamma) \frac{w_t(i)}{\sum_{i=1}^K w_t(i)} + \frac{\gamma}{K}$ 
    \\ for $ \quad i = 1, \dots, K$. 
    \STATE Draw $a_t$ randomly according to the probabilities $p_t(1), \dots, p_t(K)$.
    \STATE Do one round of SD with drafter $a_t$ and obtain $N_{acc}(a_t,t)$, $r_{a_t,t}$ (by \Autoref{eq:BDreward_definition})
    \FOR{$j = 1, \dots, K$}
        \STATE \[ \hat{r}_{j,t} = \begin{cases}
        r_{j,t} / p_t(j) & \text{if } j = a_t \\
        0 & \text{otherwise},
        \end{cases} \]
        \[ w_{t+1}(j) = w_t(j) \exp \left( \frac{\gamma\cdot \hat{r}_{j,t}}{K} \right)\] 
    \ENDFOR
    \STATE $l, t \leftarrow l + N_{acc}(a_t,t) + 1,  t+1$
\ENDWHILE
\end{algorithmic}
\end{algorithm}

One interesting point is that in the non-stationary MetaSD problem, the definition of non-stationarity $L$ does not fit naturally into our problem. The reason behind this is that under non-stationary context generations, number of distribution changes happen at the token level, not the round level. This can disrupt existing regret analysis because a single round might involve multiple reward distribution changes (e.g., one round of speculative decoding could have two changing points). Whether above algorithms maintain optimal regret bounds in our regret definition in this non-stationary setting presents an interesting direction for future theoretical analysis.

\paragraph{Experiments}

To further demonstrate the effectiveness of the MAB approach, we conduct experiments on a non-stationary translation task, where each query required translating two different languages (French and Chinese) into English. This presents a challenging task, as even well-trained router-based algorithms struggle to capture the shifting context during generation. The result is provided in \Autoref{tab:nonstationary_translation}.

\begin{table*}[ht!]
\centering
\caption{Results of the non-stationary translation task.}
\label{tab:nonstationary_translation}
\footnotesize 
\begin{tabular}{@{}lccccccc@{}}
\toprule
& Drafter1 & Drafter2 & Drafter3 & Drafter4 & Drafter5 & \multicolumn{1}{c}{Upper Bound} & MetaSpS-UCB \\
\midrule
Block Efficiency & 1.668 & 1.722 & 1.485 & 1.759 & 1.803 & 1.803 & 1.951 \\
Speedup Ratio & 1.429 & 1.492 & 1.289 & 1.539 & 1.581 & 1.581 & 1.722 \\
\bottomrule
\end{tabular}
\end{table*}

The result clearly demonstrate the effectiveness of the MAB approach, where MetaSD-UCB with $\alpha=0.1$ achieves a speedup ratio of 1.722, which even exceeds the performance of the optimal drafter with a speedup ratio 1.581. This improvement arises because MAB dynamically adapts to the best drafter in the current environment through a combination of exploration and exploitation.

For example, in a multilingual scenario, where the task is "Translate French and German to English, respectively," classification-based routing would be limited by the performance of the best single specialization (i.e., the speedup ratio of the optimal drafter as its upper bound). In contrast, MetaSD-UCB surpasses this upper bound by effectively adapting its policy dynamically, demonstrating a unique advantage over classification-based methods in non-stationary environments.

The experiments are performed using a single RTX 3090 GPU, and we follow the same experimental setup of MetaSpS-UCB in~\Autoref{sec4:exp}. In \Autoref{tab:nonstationary_translation}, Upper Bound refers to the performance achieved by an ideal router-based approach with oracle-provided labels.

\begin{algorithm*}[ht!]
\caption{Sliding-window UCB in MetaSD}
\label{algorithm:Sliding_window_UCB}
\begin{algorithmic}[1]
\INPUT Drafter pool $[K]$, initial prompt sequence $x^{1:l}$, target sequence length $B$, \\ exploration parameter $\beta$, window size $\tau$.
\STATE $t \leftarrow 0$ \\
\textcolor{darkgray}{/* Phase 1: Meta-draft each drafter in $[K]$ once and do one round of speculative decoding. */} \\
\FOR{$i \in [K]$}
    \STATE Do one round of SD with drafter $i$ and obtain $N_{acc}(i,t)$, $r_{i,t}$ (by \Autoref{eq:BDreward_definition})
    \STATE $\hat{\mu}_{i,t}, n_i, l,  t\leftarrow r_{i,t}, 1, l + N_{acc}(i,t) + 1,  t+1$ \\
\ENDFOR \\
\textcolor{darkgray}{/* Phase 2: Meta-draft the draft following the UCB bandit until target sequence length $B$ */} \\
\WHILE{$ l < B$}  
    \STATE $a_t \leftarrow \arg\max_{i \in [K]} \hat{\mu}_{i,t} + \beta \sqrt{\frac{2 \ln t}{n_i}}$ 
    \STATE Do one round of SD with drafter $a_t$ and obtain $N_{acc}(a_t,t)$, $r_{a_t,t}$ (by \Autoref{eq:BDreward_definition})
    \STATE $\hat{\mu}_{i,t}\leftarrow \frac{1}{n_i(t)}\sum_{s=t-\tau+1}^{t}r_{a_s,s}\mathbb{I}[a_s=i]$  $\forall i\in[K]$
    \STATE $n_i(t) \leftarrow \sum_{s=t-\tau+1}^{t}\mathbb{I}[a_s=i]$ $\forall i\in[K]$
    \STATE $l, t \leftarrow l + N_{acc}(a_t,t) + 1,  t+1$
\ENDWHILE
\end{algorithmic}
\end{algorithm*}
\subsection{Other possible scenarios}
\label{appendix:other_extensions}
\paragraph{Adversarial environment}
EXP3~\citep{auer2002nonstochastic} is designed to handle adversarial changes of reward distributions by continuously updating its estimates of the arm rewards and adjusting its exploration strategy accordingly. It achieves this by maintaining a probability distribution over the arms and exponentially weighting the rewards based on their recent performance. By incorporating EXP3 into our framework (\Autoref{alg:EXP3}), we can enable the system to adapt to evolving reward distributions and dynamically select the optimal drafter even in adversarial environments. We utilize this algorithm as a baseline in our experiments.

\section{Further discussion}\label{sec:discuss}

\subsection{Is scaling up drafter size always better?}

While increasing the drafter size might seem like a straightforward path to improved performance, it can be less efficient than our MetaSD approach, especially considering memory bandwidth constraints. Larger models demand more memory for storing weights and activations, increasing data movement between memory and processing units. This can become a bottleneck, particularly in high-performance computing where memory bandwidth is often a limiting factor. It is also discussed in \cite{yi2024towards} in SD scenarios. Moreover, this phenomenon is well-illustrated by the roofline model, which highlights the trade-off between computational intensity and memory bandwidth \citep{medusa}. As model size increases, computational intensity might improve, but the memory bandwidth demands can quickly limit overall speedup.

In contrast, MetaSD utilizes multiple smaller drafters with lower individual memory requirements. By efficiently switching between these drafters, MetaSD can achieve comparable or superior performance to a single large drafter while mitigating the memory bandwidth bottleneck. This is because, despite having multiple drafters, MetaSD only utilizes one drafter for computation at any given time. Thus, the memory bandwidth requirement does not scale with the combined size of all drafters, but rather with the size of the individual drafter being used.  Provided sufficient GPU DRAM, this approach does not have any bottleneck compared to the single drafter SD. Furthermore, MetaSD offers the flexibility to incorporate diverse drafters with specialized capabilities. This specialization can be more effective than simply increasing the size of a single general-purpose drafter, particularly for tasks demanding domain-specific knowledge. 

\subsection{Out-of-domain (OOD)}
We evaluate MetaSD on Alpaca-Finance \citep{financealpaca} and RAG \citep{xia2024unlocking}, which fall outside the training domains of specialized drafters. As shown in \Autoref{tab:ood_results} (in \Autoref{app:ood}), MetaSD outperforms OFA and most specialized drafters, demonstrating its ability to generalize without relying on predefined domain similarities. Unlike similarity-based selection, which incurs high inference costs for long inputs and is prone to misclassification, MetaSD dynamically adapts at the token level, ensuring efficient and robust routing. In heterogeneous drafter settings, where training descriptions are incomplete or unavailable, MetaSD remains effective, highlighting its scalability and adaptability in real-world scenarios.

\subsection{Computational overhead analysis}

\paragraph{Training overhead} While specialization may require additional training efforts compared to an OFA (One-size-Fits-All) drafter, we emphasize that our approach is designed to handle real-world scenarios where heterogeneous drafters already exist in public repositories. MetaSD focuses on optimizing the utilization of such heterogeneous drafters, dynamically selecting the most suitable drafter during inference. This shifts the problem from retraining models to developing an effective strategy for utilizing pre-existing resources. Therefore, while training specialized drafters may involve additional costs in certain cases, the broader applicability and versatility of MetaSD provide substantial practical value. Additionally, the cost of training drafters is a general challenge shared across the speculative decoding research domain, not limited to our work.

\paragraph{Inference memory-bandwidth efficiency}  
The inference memory-bandwidth efficiency of MetaSD remains comparable to single-drafter methods. Although MetaSD employs multiple drafters, the additional memory requirements are minimal. Specifically, MetaSD increases DRAM usage by only 2 GB (from 17 GB to 19 GB), as the drafters’ weights are preloaded into DRAM. However, this does not affect VRAM bandwidth, as only the active drafter interacts with VRAM during inference. As a result, the VRAM bandwidth demands remain identical to those of single-drafter methods. This efficient memory management ensures that MetaSD maintains competitive performance without introducing significant overhead.

By ensuring that only the active drafter interacts with the VRAM, MetaSD maintains parity with single-drafter approaches in terms of VRAM bandwidth demands.

\paragraph{Serving complexity}
Using multiple drafters in MetaSD does not inherently increase serving complexity. Modern distributed systems already employ model parallelism techniques to allocate workloads across multiple GPUs effectively. In MetaSD, drafters are evenly distributed across GPUs, with each GPU independently handling its assigned drafter without added coordination costs. This design ensures the following:  
\begin{itemize}
    \item Load balancing: Drafters are distributed across GPUs based on their assigned tasks, maintaining equivalent complexity to single-drafter systems.
    \item Minimal communication overhead: MetaSD requires no additional inter-GPU communication beyond standard model parallelism setups.  
\end{itemize}

\paragraph{Justification of overhead}
The modest increase in DRAM memory usage (+2 GB) and marginal training cost for specialized drafters is justified by the significant performance gains achieved through adaptive optimization. MetaSD dynamically selects the most suitable drafter for each task, consistently outperforming single-drafter methods across diverse scenarios, as highlighted in our experimental results. Furthermore, MetaSD addresses an important real-world challenge: effectively utilizing publicly available, pre-trained heterogeneous drafters. By providing a generalizable strategy for optimizing these resources, MetaSD adds practical value beyond specialized retraining, supporting diverse and evolving task requirements.

\subsection{Regret upper bound for MetaSD-UCB}

\Autoref{theorem:regret_upperbound_SD-UCB} provides a regret upper bound for MetaSD-UCB, demonstrating that the number of rounds required to identify the optimal drafter is inversely proportional to the predefined draft length $N_{max}$. This aligns with the intuition that longer drafts provide more information about the relative performance of each drafter, leading to faster convergence towards the optimal choice. The logarithmic dependence on the target sequence length $B$ further highlights the efficiency of MetaSD-UCB in minimizing regret. These theoretical guarantees are supported by our empirical observations, where MetaSD-UCB consistently demonstrates strong performance and rapid convergence towards the best-performing drafter.

\end{document}